\newtheorem{thm}{Theorem}
\newtheorem{lem}{Lemma}
\newtheorem{cor}{Corollary}
\newtheorem{defn}{Definition}
\newtheorem{rem}{Remark}
\begin{document}

\title{FedKL: Tackling Data Heterogeneity in Federated Reinforcement Learning by Penalizing KL Divergence} 

\author{Zhijie Xie and S.H. Song,~\IEEEmembership{Senior Member,~IEEE}
\thanks{Zhijie Xie and S.H. Song are with Department of Electronic and Computer Engineering, the Hong Kong University of Science and Technology, Hong Kong. e-mail: (zhijie.xie@connect.ust.hk, eeshsong@ust.hk).}
}



\maketitle

\begin{abstract}
As a distributed learning paradigm, Federated Learning (FL) faces the communication bottleneck issue due to many rounds of model synchronization and aggregation. Heterogeneous data further deteriorates the situation by causing slow convergence. Although the impact of data heterogeneity on supervised FL has been widely studied, the related investigation for Federated Reinforcement Learning (FRL) is still in its infancy. In this paper, we first define the type and level of data heterogeneity for policy gradient based FRL systems. By inspecting the connection between the global and local objective functions, we prove that local training can benefit the global objective, if the local update is properly penalized by the total variation (TV) distance between the local and global policies. A necessary condition for the global policy to be learn-able from the local policy is also derived, which is directly related to the heterogeneity level. Based on the theoretical result, a Kullback-Leibler (KL) divergence based penalty is proposed, which, different from the conventional method that penalizes the model divergence in the parameter space, directly constrains the model outputs in the distribution space. Convergence proof of the proposed algorithm is also provided. By jointly penalizing the divergence of the local policy from the global policy with a \textit{global} penalty and constraining each iteration of the local training with a \textit{local} penalty, the proposed method achieves a better trade-off between training speed (step size) and convergence. Experiment results on two popular Reinforcement Learning (RL) experiment platforms demonstrate the advantage of the proposed algorithm over existing methods in accelerating and stabilizing the training process with heterogeneous data.
\end{abstract}

\begin{IEEEkeywords}
Federated Reinforcement Learning, Data Heterogeneity, Policy Gradient.
\end{IEEEkeywords}

\section{Introduction\label{sec:Introduction}}

\IEEEPARstart{F}{ederated} Learning (FL) was proposed as a distributed
Machine Learning (ML) scheme where data from distributed devices are utilized for training without violating the user privacy. The idea has been successfully implemented (e.g. the Google Keyboard) \cite{DBLP:journals/corr/abs-1812-03239}
and attracted much research attention \cite{10.1145/3298981,9084352}.
One of the key challenges for FL is the communication bottleneck due to the frequent communication of big ML models. There are mainly two methods to mitigate the issue, i.e., reducing the communication workload per round and decreasing the number of communication rounds. To this end, model compression has been proposed to reduce the communication workload per round \cite{DBLP:journals/corr/KonecnyMYRSB16,10.5555/3294771.3294934}, and another line of research tried to improve the convergence speed to minimize the number of communication rounds \cite{10.5555/3454287.3455281,JMLR:v22:20-147,Yu_Yang_Zhu_2019}.
Unfortunately, data heterogeneity between difference devices significantly slows the convergence down \cite{DBLP:journals/corr/abs-2106-06843,li2020on}.

As an important ML paradigm, Reinforcement Learning
(RL) trains the policy of an agent in an iterative way by interacting with the environment. In each time step, the agent makes a decision to manipulate the environment and receives a reward for its action, where the objective of the training is to maximize the long term reward\cite{Sutton1998}. Due to the distributed data and computing power in large-scale applications such as autonomous driving, the training of RL algorithms under the FL framework is inevitable. Unfortunately, many challenges faced by supervised FL, e.g., the data heterogeneity and communication bottleneck, are still valid and even worse for FRL \cite{DBLP:journals/corr/abs-2108-11887}. For example, centralized policy gradient methods already suffer from high variance which is detrimental to the convergence speed and training performance \cite{10.5555/3044805.3044850,JMLR:v21:18-012,NIPS1999_464d828b}, and data heterogeneity imposes another layer of difficulty for the convergence of FRL. 

Data heterogeneity in FL refers to the situation where the data collected by different devices are not independent and identically distributed (IID) \cite{9084352,DBLP:journals/corr/abs-2102-02079}. Despite its empirical success, FedAvg does not perform well on highly skewed non-IID data \cite{li2020on,DBLP:journals/corr/abs-1806-00582,pmlr-v54-mcmahan17a}. As a result, many innovative methods have been proposed to tackle the data heterogeneity issue \cite{DBLP:journals/corr/abs-2106-06843}. Data-level approaches aim to modify the data distribution to address the root of the problem. For example, a data-sharing strategy \cite{DBLP:journals/corr/abs-1806-00582}
was proposed to create a small but globally shared dataset to mitigate the impact of heterogeneous data. On the other hand, algorithm-level approaches try to solve the problem by improving the local training algorithms. Along this line of research, FedProx \cite{MLSYS2020_38af8613} added a weighted proximal term to the local objective function to constrain the divergence of the local model from the global model. Finally, system-level approaches focus on client clustering/selection \cite{10.1109/INFOCOM41043.2020.9155494} to counterbalance the bias introduced by non-IID data and speed up convergence. 

The heterogeneity issue of FRL is different from that of the supervised FL and the related study is very limited
\cite{8772088,DBLP:journals/corr/abs-1910-06001,LimHK2020}. There are mainly two types of RL methods, namely, the action-value method and the policy gradient (PG) method \cite{DBLP:journals/corr/abs-2108-11887,Sutton1998}. The first performs action selection according to the value function, while the second utilizes ML models to approximate the policy and update the policy by gradient based algorithms. Data heterogeneity affects different RL methods in different ways, and thus requires different solutions. For the action-value method, Lifelong Reinforcement Learning (LFRL) \cite{8967908} explored the combination of Q-learning and FL in real-world applications, where the knowledge learned in one particular environment is transferred to another. Federated RL (FedRL) \cite{DBLP:journals/corr/abs-1901-08277} considered the scenario where some agents can only observe part of the environment and have no access to the reward. To handle this issue, agents maintain their individual Q-networks in addition to the global Q-network, and exchange their predicted Q value for cooperation. For policy gradient methods, 
Federated Transfer Reinforcement Learning (FTRL) \cite{DBLP:journals/corr/abs-1910-06001} proposed an online transfer process to numerically align the scale of the observations and actions in different environments so that the policy learnt in one environment is portable to another. Given that devices of the same type may have different dynamics, a FRL architecture, which adopts Proximal Policy Gradient (PPO) and Multi-Agent Deep Deterministic Policy Gradient (MADDPG) \cite{DBLP:journals/corr/SchulmanWDRK17,NIPS1999_6449f44a,10.5555/3295222.3295385} to FRL in Internet of Things (IoT) networks, was proposed in \cite{LimHK2020,Lim2021FederatedRL}. This FRL scheme alternates between sharing gradient and transferring mature models to other agents. For offline learning, data heterogeneity exists in different
local datasets. To address this problem, Federated Dynamic Treatment Regime (FDTR) \cite{https://doi.org/10.48550/arxiv.2206.05581} proposed the first federated policy optimization algorithm for offline RL. There are also works that aim to improve the communication efficiency with heterogeneous data \cite{9500603,DBLP:journals/corr/abs-2103-13026}. For instance, Federated Multi-Agent Reinforcement Learning (FMARL) \cite{DBLP:journals/corr/abs-2103-13026} proposed
to gradually decay the gradients during local training to avoid model divergence and improve communication efficiency. QAvg and PAvg \cite{pmlr-v151-jin22a} theoretically proved that federated Q-Learning and federated policy gradient can converge in tabular case when agents' environments are heterogeneous.

In this paper, we will investigate the data heterogeneity issue in FRL systems whose agents employ PG methods with an advantage estimator \cite{NIPS1999_464d828b,DBLP:journals/corr/SchulmanMLJA15}. For that purpose, we first classify the types of data heterogeneity in FRL and define the heterogeneity level. By analyzing the connection between the global and local objective functions, we prove that although it is possible to improve the global objective by updating the local policy according to the local environment, the local update must be properly constrained. Based on the theory, we propose an algorithm that utilizes a Kullback–Leibler (KL) divergence based term to penalize the local training for diverging too far away from the global policy and name it the FedKL algorithm. The proposed KL penalty will be referred to as the global penalty because it guarantees that the local training of one agent will not diverge too far away from the global policy. Note that this global penalty is different from the per-iteration penalty utilized in centralized RL \cite{DBLP:journals/corr/SchulmanWDRK17}. We will investigate the benefits of adding the global penalty and how its interplay with the local per-iteration penalty helps mitigate the data heterogeneity issue in FRL. Experiment results show that FedKL is able to stabilize training and speed up convergence when compared with existing algorithms, including FedAvg, FedProx and FMARL.

The major contributions of this work can be summarized as follows: 
\begin{itemize}
\item We classify the data heterogeneity of FRL into two types, due to different initial state distributions and different environment dynamics, respectively. Then, we define the level of heterogeneity, based on which a necessary condition for the local policy improvement to be beneficial for the global objective is derived. 

\item We prove that the improvement of the local policy on one agent can benefit the global objective, as long as the local policy update is properly penalized by the total variation (TV) distance between the local and global policies.  Based on the theoretical result, we propose the FedKL algorithm and adopt several approximations to simplify the implementation of FedKL. Convergence proof of the proposed algorithm is also provided.

\item Experiment results show that FedKL outperforms existing algorithm-level solutions with faster convergence and better stabilization. Several interesting insights are also obtained: 1) The proposed heterogeneity level is very effective and can be utilized to predict the learning behavior; 2) The KL divergence based penalty is more effective than the penalty defined in the parameter space; 3) The proposed global penalty enables a larger learning step size, which speeds up the convergence without causing divergence.  
\end{itemize}

\section{Preliminaries}

\subsection{Federated Learning}
A typical FL system consists of one central server and $N$ agents (clients/devices), where the data of the agents are jointly utilized to train a global model without violating their privacy. In particular, the global optimization problem can be formulated as \cite{pmlr-v54-mcmahan17a}
\begin{alignat}{1}
\min_{\theta}f(\theta) & =\sum_{n=1}^{N} q_{n} f_{n}(\theta),
\end{alignat}
where $f_{n}(\theta)$ represents the local objective function, e.g. the loss function, of the $n$-th agent with respect to the model parameters $\theta$. $q_{n}$ is a weighting factor often set to $q_{n}=l_{n}/L$, with $l_n$ denoting the number of data points at the $n$-th agent and $L=\sum_{n=1}^N l_{n}$.  In each communication round, $K$ out of the total $N$ agents are selected and the central server broadcasts the global model to the selected agents. Then, each agent updates its local model utilizing the local data and uploads the trained model to the server for model aggregation. This process will repeat for some rounds until a preset learning objective is achieved. 

\subsection{Federated Reinforcement Learning}
The system setup of FRL is similar, i.e., one central server
federates the learning of $N$ distributed agents. In the $t$-th training round, the central server broadcasts the current global policy $\pi^{t}$
to $K$ selected agents which will perform $I$ iterations of local
training. In each iteration, the agent interacts with its environment
to collect $T$ time-steps of data, and optimizes its local policy according
to the local objective with gradient ascent methods. At the end of each round, the training results will be uploaded to the server for aggregation. An example of FRL systems for autonomous driving is illustrated in Fig. \ref{fig:system_model}.

\begin{figure*}[!t]
\centering
\includegraphics[width=1.5 \columnwidth]{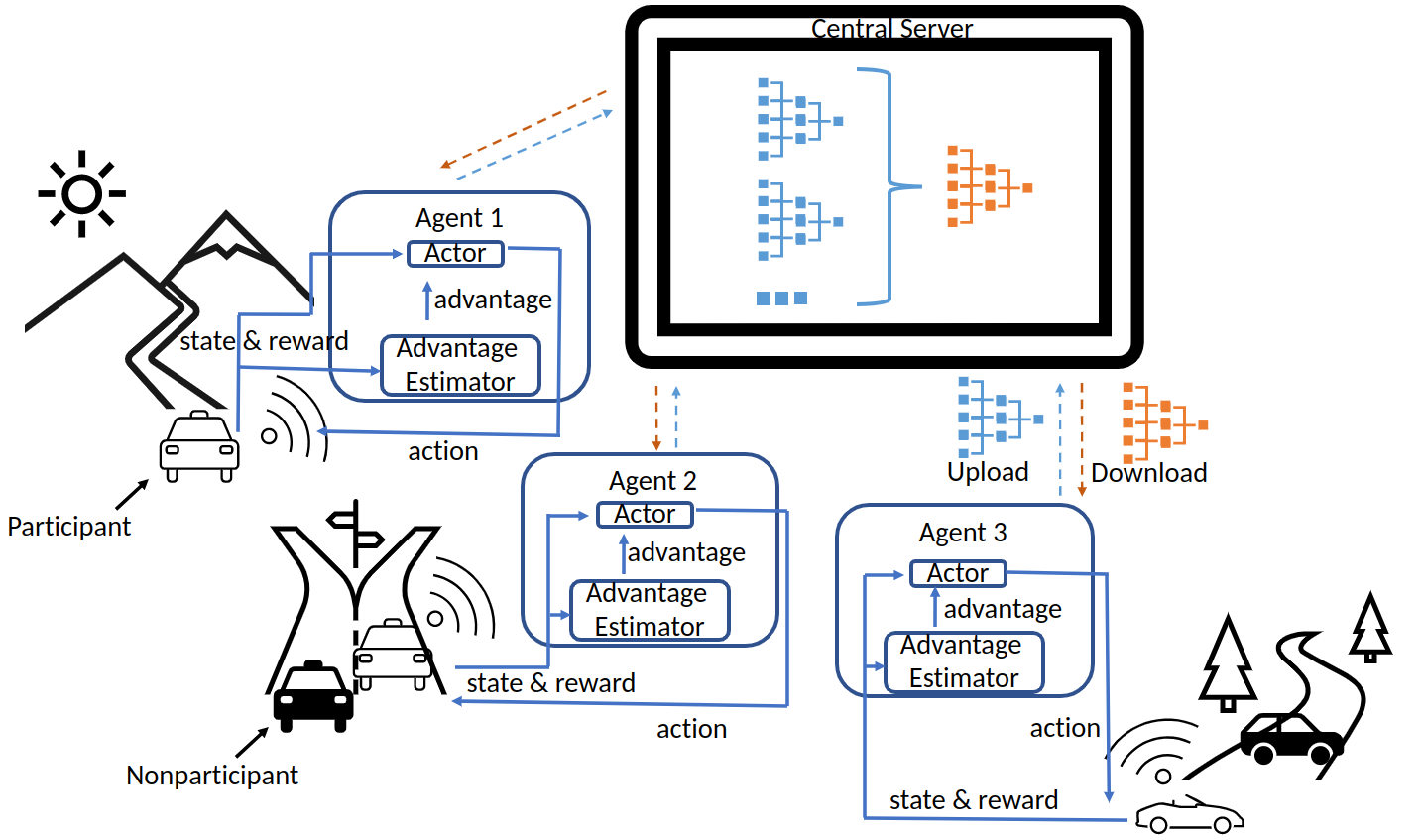} \caption{An example of FRL systems: autonomous driving. Many vehicles run on different roads. In each training round, selected vehicles (white cars) will participate in the training. With the PG method, policies are represented by ML models, which will be communicated between agents and the server for model aggregation and synchronization. The model aggregation is the same as that of FedAvg.}
\label{fig:system_model}
\end{figure*}

We model the local training problem of each agent as a finite Markov Decision Process (MDP). Accordingly, the FRL system consists of $N$ finite MDPs $\{ (\mathcal{S},\mu_{n},\mathcal{A},P_{n},\gamma) \vert n \in[1,N] \} $,
where $\mathcal{S}$ denotes a finite set of states, $\mu_{n}$ represents the initial state distribution of the n-th MDP, $\mathcal{A}$ is a finite set of actions and $\gamma\in(0,1)$ is the discount factor. The dynamics function $P_{n}(s^{\prime},r\vert s,a):\mathcal{S}\times\mathbb{R}\times\mathcal{S}\times\mathcal{A}\to\mathbb{R}$
represents the probability that the n-th MDP transits from state $s$ to $s^{\prime}$ and obtain a reward of $r\in\mathbb{R}$
after taking action $a$, which characterizes the environment dynamics of the n-th MDP \cite{Sutton1998}. We can define the transition probability $P_{n}(s^{\prime}\vert s,a):\mathcal{S}\times\mathcal{S}\times\mathcal{A}\to\mathbb{R}$ as $P_{n}(s^{\prime}\vert s,a)=\sum_{r\in\mathbb{R}}P_{n}(s^{\prime},r\vert s,a)$. Furthermore, based on the dynamics function, we can define the reward function $\mathcal{R}_{n}(s,a):\mathcal{S}\times\mathcal{A}\to\mathbb{R}$ as $\mathcal{R}_{n}(s,a)=\sum_{r\in\mathbb{R}}r\sum_{s^{\prime}\in \mathcal{S}}P_{n}(s^{\prime},r\vert s,a)$,
which gives the expected reward of the n-th MDP for taking action $a$ in state $s$. As a result, the $n$-th MDP $\mathcal{M}_n$ can be represented by a 5-tuple $(\mathcal{S},\mu_{n},\mathcal{A},P_{n},\gamma)$
sharing the same state and action space with other agents, but with possibly different initial state distributions and environment dynamics. An agent's behavior is controlled by a stochastic policy $\pi_{n}:\mathcal{S}\times\mathcal{A}\rightarrow[0,1]$
which outputs the probability of taking an action $a$ in a given state
$s$. Specifically, each agent $n$ aims to train a local policy to maximize its expected discounted reward over the long run
\begin{alignat}{1}
\eta_{n}(\pi) = \mathbb{E}_{s_{0} \thicksim \mu_{n},a_{t} \thicksim \pi,s_{t+1} \thicksim P_{n}} \left[\sum_{t=0}^{\infty}\gamma^{t}\mathcal{R}_{n}(s_{t},a_{t})\right],\label{eq:1}
\end{alignat}
where the notation $\mathbb{E}_{s_{0} \thicksim \mu_{n},a_{t} \thicksim \pi,s_{t+1} \thicksim P_{n}}$ indicates that the reward is averaged over all states and actions according to the initial state distribution, the transition probability and the policy. Accordingly, the optimization problem for FRL can be formulated as 
\begin{alignat}{1}
\max_{\pi}\eta(\pi) & =\sum_{n=1}^{N}q_{n}\eta_{n}(\pi). \label{eq:2}
\end{alignat}


Note that $\mu_{n}$ is added to the formulation in (\ref{eq:1}) to represent the different initial state distributions for different agents and will be discussed in details later. The above formulation covers both IID and non-IID cases. In particular, the different MDPs, i.e., different initial state distributions and environment dynamics, represent the heterogeneous environments experienced by different agents (non-IID). All MDPs will be identical for the IID case \cite{fan2021fault}.  

\subsection{Policy Gradient Methods}
In this section, we will introduce the policy gradient (PG) method \cite{NIPS1999_464d828b}, which is taken by the agents to update their local policies in this paper. While action-value methods learn the values of actions and then select actions based on their estimated values, PG methods directly learn the decision-making policy.

To update the policy by gradient ascent methods, we need to estimate the policy gradient. For that purpose, we first define the state-value function and the action-value
function. Specifically, the state-value function
$V_{\pi, P_{n}}(s)$ of a state $s$ is the expected return when the agent
starts from state $s$ and follows policy $\pi$ thereafter:
\begin{alignat}{1}
V_{\pi,P_{n}}(s) & =\mathbb{E}_{\pi,P_{n}}\left[\sum_{l=0}^{\infty} \gamma^{l}\mathcal{R}(s_{t+l},a_{t+l}) \vert s_{t}=s\right],
\end{alignat}
where the expectation is performed over actions sampled from policy $\pi$ and the next state is sampled based on the transition probability $P_{n}$. Similarly, the action-value function $Q_{\pi,P_{n}}(s,a)$ of a state-action pair is the expected return when the agent starts from state $s$, takes the action $a$, and follows policy $\pi$ thereafter:
\begin{alignat}{1}
Q_{\pi,P_{n}}(s,a) & = \mathbb{E}_{\pi,P_{n}} \left[\sum_{l=0}^{\infty} \gamma^{l}\mathcal{R}(s_{t+l},a_{t+l}) \vert s_{t}=s,a_{t}=a\right].
\end{alignat}
By subtracting the state-value function from the action-value function, we can define the advantage function $A_{\pi,P_{n}}(s,a) = Q_{\pi,P_{n}}(s,a)-V_{\pi,P_{n}}(s),$ which measures the value of action $a$ in state $s$. The author of \cite{NIPS1999_464d828b} proved that the parameters $\theta$ of a differentiable policy $\pi(\theta)$ can be updated using the policy gradient 
\begin{eqnarray}
\hat{g} = \mathbb{E}_{\pi,P_{n}}\left[\nabla_{\theta} \log\pi(a\vert s)A_{\pi,P_{n}}(s,a)\right],
\end{eqnarray}
where $\nabla_{\theta}$ denotes the derivative with respect to $\theta$.

However, the high variance of the gradient estimation often leads
to large policy updates and causes instability. As a result, the step size must be limited. For that purpose, \cite{DBLP:journals/corr/SchulmanLMJA15,10.5555/645531.656005}
investigated the performance gap for taking two different polices $\pi$ and $\pi^{\prime}$,
\begin{alignat}{1}
& \eta_{n}(\pi^{\prime})-\eta_{n}(\pi)\nonumber\\
= & \mathbb{E}_{s\thicksim\rho_{\pi^{\prime},\mu_{n},P_{n}}}\left[ \mathbb{E}_{a\thicksim\pi^{\prime}(a|s)}\left[A_{\pi, P_{n}}(s,a)\right] \right], \label{eq:diffeta}
\end{alignat}
where 
\begin{alignat}{1}
\rho_{\pi,\mu_{n},P_{n}}(s) & = \sum_{t=0}^{\infty}\gamma^{t} \mbox{Pr}(s_{t}=s|\pi,\mu_{n},P_{n}),\label{eq:rho}
\end{alignat}
denotes the unnormalized discounted visitation frequency. 

Equation (\ref{eq:diffeta}) implies that the reward can be increased by any policy update that improves the expected advantage. However, it is difficult to optimize the
right-hand side (RHS) of (\ref{eq:diffeta}) since $\rho_{\pi^{\prime},\mu_{n},P_{n}}$
is dependent on $\pi^{\prime}$. Therefore, 
the following surrogate objective was proposed
\begin{alignat}{1}
L_{\pi,\mu_{n},P_{n}}(\pi^{\prime}) & =\eta_{n}(\pi)+\mathbb{A}_{\pi,\mu_{n},P_{n}}(\pi^{\prime}),\label{eq:surrogate}
\end{alignat}
where
\begin{alignat}{1}
\mathbb{A}_{\pi,\mu_{n},P_{n}}(\pi^{\prime}) = \mathbb{E}_{s\thicksim\rho_{\pi,\mu_{n},P_{n}}} \left[ \mathbb{E}_{a\thicksim\pi^{\prime}(\cdot|s)}\left[A_{\pi,P_{n}}(s,a)\right]\right],\label{eq:policyadvantage}
\end{alignat}
denotes the policy advantage that measures the advantage of policy $\pi^{\prime}$ over $\pi$, starting from state $s\thicksim\mu_{n}$ and following $\pi^{\prime}$. $L_{\pi,\mu_{n},P_{n}}$ is an approximation
to $\eta_{n}(\pi^{\prime})$ but easier to optimize because it replaces the dependency
of $\rho_{\pi^{\prime},\mu_{n},P_{n}}$ on $\pi^{\prime}$ with that
of $\rho_{\pi,\mu_{n},P_{n}}$ on $\pi$.

Although the above step involves an approximation, it is shown that a sufficiently small step that improves $L_{\pi,\mu_{n},P_{n}}(\pi^{\prime})$ will also improve $\eta_{n}(\pi^{\prime})$. 
Specifically, 
the following policy improvement bound was derived for any policies $\pi$ and $\pi^{\prime}$
\begin{eqnarray}
\label{TRPOObj}
\eta_{n}(\pi^{\prime}) & \ge L_{\pi,\mu_{n},P_{n}}(\pi^{\prime})-\text{\ensuremath{c  D_{KL}^{max}(\pi,\pi^{\prime}),}}\label{eq:trpo1}
\end{eqnarray}
or equivalently
\begin{eqnarray}
\label{TRPOreturndiff}
\eta_{n}(\pi^{\prime}) - \eta_{n}(\pi) & \ge \mathbb{A}_{\pi,\mu_{n},P_{n}}(\pi^{\prime})-\text{\ensuremath{c  D_{KL}^{max}(\pi,\pi^{\prime}),}}\label{eq:trpo2}
\end{eqnarray}
where $c=\frac{4\max_{s,a}|A_{\pi,P_{k}}(s,a)|\gamma}{(1-\gamma)^{2}}$ is a constant and 
$D_{KL}^{max}(\pi,\pi^{\prime})  =\max_{s} D_{KL} \left( \pi(\cdot\vert s) \Vert \pi^{\prime}(\cdot\vert s) \right)$ denotes the maximum KL divergence between two policies $\pi$ and $\pi^{\prime}$
among all states. As a result, one can optimize Equation (\ref{eq:1}) by optimizing
the surrogate objective. A tighter bound for the policy improvement was later derived \cite{pmlr-v70-achiam17a}
\begin{alignat}{1}
\label{eq:cporeturndiff}
\eta_{n}(\pi^{\prime}) - \eta_{n}(\pi) & \ge \mathbb{A}_{\pi,\mu_{n},P_{n}}(\pi^{\prime})\\
& \quad-\text{\ensuremath{c^{\text{CPO}}  \mathbb{E}_{s\thicksim\rho_{\pi,\mu_{n},P_{n}}} \left[D_{TV}(\pi(\cdot\vert s)\Vert\pi^{\prime}(\cdot\vert s))\right],}}\nonumber
\end{alignat}
where $c^{\text{CPO}}=\frac{2\max_{s}\left|\mathbb{E}_{a\thicksim\pi^{\prime}(\cdot|s)}\left[A_{\pi,P_{k}}(s,a)\right]\right|\gamma}{(1-\gamma)^{2}}$ is upper bounded by $c$ and $D_{TV}(\pi(\cdot\vert s)\Vert\pi^{\prime}(\cdot\vert s))  =\frac{1}{2}\sum_{a}\left|\pi(a\vert s)-\pi^{\prime}(a\vert s)\right|$
denotes the TV distance between two discrete probability
distributions $\pi(\cdot\vert s)$ and $\pi^{\prime}(\cdot\vert s)$.
For the continuous case, it is straightforward to replace the summation by
integration. In this work, (\ref{eq:cporeturndiff}) is useful to derive the policy improvement bound of FRL.



\section{Data Heterogeneity and its Impact on FRL}
In this section, we will first investigate the types of data heterogeneity in FRL and define the heterogeneity level. Then, by studying the connection between the local and global objectives, we will investigate how the local update can help the global objective and derive a necessary condition for the global policy to be learn-able.

\subsection{Types of Data Heterogeneity in FRL}
Given (\ref{eq:rho}), the expected discounted reward for the $n$-th MDP $\mathcal{M}_{n}$, defined in (\ref{eq:1}), can be written as
\begin{alignat}{1}
\eta_{n}(\pi) =\sum_{s}\rho_{\pi,\mu_{n},P_{n}}(s)\sum_{a}\pi(a|s)\mathcal{R}_{n}(s,a).\label{expectationeta}
\end{alignat}
It can be observed that the performance of a policy on $\mathcal{M}_{n}$ is jointly determined by its initial state distribution $\mu_{n}$ and the dynamics function $P_{n}$, both of which can be either stochastic or deterministic. As a result, agents with different initial state distributions and environment dynamics will end up with different policies, causing the heterogeneity issue. Therefore, we classify the data heterogeneity in FRL into two types:
\begin{enumerate}
\item Heterogeneity due to initial state distribution;
\item Heterogeneity due to environment dynamics.
\end{enumerate}
For ease of illustration, we consider two extreme cases where only
one type of heterogeneity exists. Consider two finite MDPs, $\mathcal{M}_{1}$
and $\mathcal{M}_{2}$, with the same environment dynamics, $P_{1}=P_{2}$
but different initial state distributions, $\mu_{1} \neq \mu_{2}$. This
corresponds to the situation where two self-driving vehicles have the
same car model and training routine, but tend to enter the same road from different entrances. 
Now, consider the other extreme case, where $\mu_{1} = \mu_{2}$ but $P_{1} \neq P_{2}$, i.e., the two MDPs have the same initial state distribution but different environment dynamics. This can be utilized to model the situation where two different cars always enter the same road at the same entrance. 
The two types of heterogeneity can occur simultaneously. But, the heterogeneity due to different initial state distributions is of more concern for episodic tasks whose on-policy distribution is different from the stationary distribution \cite{Sutton1998} and heavily relies on the initial state distribution. 

\subsection{Level of Heterogeneity}
In the following, we define a measurement regarding the level (impact) of
heterogeneity which will be utilized to investigate the connection 
between the global and local policy advantages. For ease of illustration, we define $\mathbf{D}_{\pi, \mu_{n}, P_{n}}$ as a $\left|\mathcal{S}\right|\times\left|\mathcal{S}\right|$
diagonal matrix with $\rho_{\pi,\mu_{n},P_{n}}$ being the $i$-th diagonal entry, where $\left|\mathcal{S}\right|$ denotes the cardinality of $\mathcal{S}$. 
Similarly, we denote $\mathbf{\Pi}_{\pi}$ as a $\left|\mathcal{S}\right|\times\left|\mathcal{A}\right|$
matrix whose $(i,j)$th entry is $\pi(a_{j}\vert s_{i})$, 
and $\mathbf{A}_{\pi,P_{n}}$
as a $\left|\mathcal{S}\right|\times\left|\mathcal{A}\right|$ matrix
whose $(i,j)$th entry is $A_{\pi,P_{n}}(s_{i},a_{j})$. 
With the above notation, we can rewrite the policy advantage in (\ref{eq:policyadvantage}) as 
\begin{alignat}{1}
\mathbb{A}_{\pi,\mu_{n},P_{n}}(\pi^{\prime}) & =\sum_{s}\rho_{\pi,\mu_{n},P_{n}}(s)\sum_{a}\pi^{\prime}(a|s)A_{\pi,P_{n}}(s,a)\\
 & =\mbox{Tr} \left(\mathbf{D}_{\pi,\mu_{n},P_{n}}\mathbf{A}_{\pi,P_{n}}\mathbf{\Pi}_{\pi^{\prime}}^{T} \right),
 \label{eq:vectorpolicyadvantage}
\end{alignat}
where ${\rm Tr}(\cdot)$ denotes the trace operator. It can be observed that the advantage of taking a new policy $\pi'$ on a specific agent $n$ is realized through the matrix product $\mathbf{D}_{\pi,\mu_{n}, P_{n}} \mathbf{A}_{\pi,P_{n}}$.  Thus, we propose the following definition regarding the level of heterogeneity.
Without loss of generality, we assume that all the states
are reachable, i.e., $\rho_{\pi,\mu_{n},P_{n}}(s) > 0$ for all $s$ and $\pi$.
\begin{defn}
\label{def:impacthete}(Level of heterogeneity). We define the
level of heterogeneity of the n-th agent under policy $\pi$ by 
the Frobenius norm of the matrix 
\begin{eqnarray}
\label{B}
\mathbf{B}_{\pi,\mu_{n},P_{n}}=\sum_{k=1}^{N} q_{k} \mathbf{D}_{\pi,\mu_{n},P_{n}}^{-1} \mathbf{D}_{\pi,\mu_{k},P_{k}} \mathbf{A}_{\pi,P_{k}} - \mathbf{A}_{\pi,P_{n}}
\end{eqnarray}
as $\left\Vert \mathbf{B}_{\pi,\mu_{n},P_{n}}\right\Vert_{F}$. 
\end{defn}
Note that $\mathbf{B}_{\pi,\mu_{n},P_{n}}$ measures the deviation of the policy advantage of the $n$-th agent from the average policy advantage of all agents. Given the local policy advantage 
directly affects the policy gradient, its deviation from the average will lead to potential gradient divergence. For the IID case, $\mathbf{B}_{\pi,\mu_{n},P_{n}}=\mathbf{0}$ and thus $\left\Vert \mathbf{B}_{\pi,\mu_{n},P_{n}}\right\Vert_{F}=0$ for arbitrary policy $\pi$. With non-IID data, $\left\Vert \mathbf{B}_{\pi,\mu_{n},P_{n}}\right\Vert_{F}$ will be positive and its magnitude indicates the level of heterogeneity of the $n$-th agent. 
Intuitively, if the level of heterogeneity of one agent is too high, i.e., the policy advantage calculated over its MDP is too different from that of the others, improving the global policy according to the local environment may not be beneficial for the whole system. In Corollary \ref{cor}, we will prove a necessary condition, with respect to $\left\Vert \mathbf{B}_{\pi,\mu_{n},P_{n}}\right\Vert_{F}$, for the global policy to be learn-able from the local policy. 

\subsection{Connection between the Global and Local Objectives in FRL}
In FRL, the local objective $\eta_{n}(\pi)$ serves as a proxy of the global objective $\eta(\pi)$. Specifically, each agent tries to optimize its local objective, but the global objective is the actual performance measure. Given the global policy $\pi^{t}$ in the $t$-th training round, the $n$-th agent interacts with its environment and obtains a new local policy $\pi_{n}^{t+1}$. Without access to any information of other agents, the $n$-th agent can only update the local policy by optimizing the local objective $\eta_{n}(\pi)$. However, a good local performance does not necessarily lead to a good global performance, especially with data heterogeneity. The question we want to answer is how we can optimize the local policy to best improve the global performance. For that purpose, we start from the local improvement.

It follows from (\ref{eq:cporeturndiff}) that
\begin{alignat}{1}
& \eta_{n}(\pi_{n}^{t+1}) - \eta_{n}(\pi^{t})\nonumber\\
\ge & \mathbb{A}_{\pi^{t},\mu_{n},P_{n}}(\pi_{n}^{t+1})\nonumber\\
& -\text{\ensuremath{c^{\text{CPO}}  \mathbb{E}_{s\thicksim\rho_{\pi^{t},\mu_{n},P_{n}}} \left[D_{TV}(\pi^{t}(\cdot\vert s)\Vert\pi_{n}^{t+1}(\cdot\vert s))\right],}}
\label{eq:cpo}
\end{alignat}
which indicates that the improvement brought by the local policy
$\pi_{n}^{t+1}$ to the $n$-th agent over the current global policy $\pi^{t}$ is lower bounded by the policy advantage. The next question is whether we can bound the global improvement similarly. For that purpose, we first investigate the relation between the global and local policy advantage, and the result is given in the following theorem. 
\begin{thm}
\label{thm:1} The following bound holds for all agents $n=1,...,N$
\begin{alignat}{1}
\label{eq:thm1}
& \sum_{k=1}^{N}q_{k}\mathbb{A}_{\pi^{t},\mu_{k},P_{k}}(\pi_{n}^{t+1})\\
\ge & \mathbb{A}_{\pi^{t},\mu_{n},P_{n}}(\pi_{n}^{t+1}) - \alpha \mathbb{E}_{s\thicksim\rho_{\pi^{t},\mu_{n},P_{n}}} \left[D_{TV}(\pi(\cdot\vert s)\Vert\pi^{\prime}(\cdot\vert s))\right],\nonumber
\end{alignat}
where $\alpha=2\left\Vert \mathbf{B}_{\pi,\mu_{n},P_{n}}\right\Vert_{F}$.
\end{thm}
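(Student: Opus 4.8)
The plan is to cast both sides in the trace notation of (\ref{eq:vectorpolicyadvantage}) and then exploit the fact that the policy advantage of a policy over itself vanishes. Writing $\pi=\pi^t$ and $\pi'=\pi_n^{t+1}$ throughout, I would first apply (\ref{eq:vectorpolicyadvantage}) to every summand on the left, so that by linearity of the trace the left-hand side of (\ref{eq:thm1}) becomes $\mathrm{Tr}\!\bigl(\bigl[\sum_{k=1}^{N}q_k\mathbf{D}_{\pi,\mu_k,P_k}\mathbf{A}_{\pi,P_k}\bigr]\mathbf{\Pi}_{\pi'}^{T}\bigr)$. Subtracting the local term $\mathbb{A}_{\pi,\mu_n,P_n}(\pi')=\mathrm{Tr}(\mathbf{D}_{\pi,\mu_n,P_n}\mathbf{A}_{\pi,P_n}\mathbf{\Pi}_{\pi'}^{T})$ and recognizing from (\ref{B}) that left-multiplying by $\mathbf{D}_{\pi,\mu_n,P_n}$ cancels the $\mathbf{D}_{\pi,\mu_n,P_n}^{-1}$, giving $\mathbf{D}_{\pi,\mu_n,P_n}\mathbf{B}_{\pi,\mu_n,P_n}=\sum_{k}q_k\mathbf{D}_{\pi,\mu_k,P_k}\mathbf{A}_{\pi,P_k}-\mathbf{D}_{\pi,\mu_n,P_n}\mathbf{A}_{\pi,P_n}$, the whole difference collapses into the single trace $\mathrm{Tr}(\mathbf{D}_{\pi,\mu_n,P_n}\mathbf{B}_{\pi,\mu_n,P_n}\mathbf{\Pi}_{\pi'}^{T})$. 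This reduces the claim to lower bounding that trace by $-\alpha\,\mathbb{E}_{s\sim\rho_{\pi,\mu_n,P_n}}[D_{TV}(\pi(\cdot\vert s)\Vert\pi'(\cdot\vert s))]$.

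The decisive step is to notice that the same trace with $\pi'$ replaced by $\pi$ is identically zero. Reusing (\ref{eq:vectorpolicyadvantage}), $\mathrm{Tr}(\mathbf{D}_{\pi,\mu_n,P_n}\mathbf{B}_{\pi,\mu_n,P_n}\mathbf{\Pi}_{\pi}^{T})=\sum_{k}q_k\mathbb{A}_{\pi,\mu_k,P_k}(\pi)-\mathbb{A}_{\pi,\mu_n,P_n}(\pi)$, and each of these policy advantages vanishes because $\sum_a\pi(a\vert s)A_{\pi,P}(s,a)=0$ for every state $s$ (the advantage averaged under the generating policy is zero, since $\sum_a\pi(a\vert s)Q_{\pi,P}(s,a)=V_{\pi,P}(s)$). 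I would therefore subtract this zero quantity to rewrite the target as $\mathrm{Tr}\bigl(\mathbf{D}_{\pi,\mu_n,P_n}\mathbf{B}_{\pi,\mu_n,P_n}(\mathbf{\Pi}_{\pi'}-\mathbf{\Pi}_{\pi})^{T}\bigr)$, i.e. in summation form $\sum_s\rho_{\pi,\mu_n,P_n}(s)\sum_a B_{\pi,\mu_n,P_n}(s,a)\bigl(\pi'(a\vert s)-\pi(a\vert s)\bigr)$, where $B_{\pi,\mu_n,P_n}(s,a)$ is the $(s,a)$ entry of $\mathbf{B}_{\pi,\mu_n,P_n}$. This manufactures the increment $\pi'-\pi$ needed to bring in the TV distance.

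It then remains to bound this sum state by state. For fixed $s$, Hölder's inequality gives $\bigl|\sum_a B_{\pi,\mu_n,P_n}(s,a)(\pi'(a\vert s)-\pi(a\vert s))\bigr|\le\max_a|B_{\pi,\mu_n,P_n}(s,a)|\cdot\sum_a|\pi'(a\vert s)-\pi(a\vert s)|$, and the last factor equals $2D_{TV}(\pi(\cdot\vert s)\Vert\pi'(\cdot\vert s))$ by definition. Using $\max_{s,a}|B_{\pi,\mu_n,P_n}(s,a)|\le\Vert\mathbf{B}_{\pi,\mu_n,P_n}\Vert_F$ to pull out a state-independent constant and $\rho_{\pi,\mu_n,P_n}(s)\ge0$, the sum is lower bounded by $-2\Vert\mathbf{B}_{\pi,\mu_n,P_n}\Vert_F\sum_s\rho_{\pi,\mu_n,P_n}(s)D_{TV}(\pi(\cdot\vert s)\Vert\pi'(\cdot\vert s))$, which is exactly $-\alpha\,\mathbb{E}_{s\sim\rho_{\pi,\mu_n,P_n}}[D_{TV}]$ with $\alpha=2\Vert\mathbf{B}_{\pi,\mu_n,P_n}\Vert_F$, establishing (\ref{eq:thm1}).

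I expect the main obstacle to be spotting the vanishing-trace identity: without subtracting $\mathrm{Tr}(\mathbf{D}_{\pi,\mu_n,P_n}\mathbf{B}_{\pi,\mu_n,P_n}\mathbf{\Pi}_{\pi}^{T})=0$, one cannot convert the bare $\mathbf{\Pi}_{\pi'}$ into the difference $\mathbf{\Pi}_{\pi'}-\mathbf{\Pi}_{\pi}$, and the bound would never close to a TV distance. A secondary point of care is the norm pairing in the last step: matching the $\ell_1$ factor (which produces the TV distance) against the max-entry factor, and only then relaxing the max entry to the Frobenius norm, is what yields the clean constant $2\Vert\mathbf{B}_{\pi,\mu_n,P_n}\Vert_F$ rather than a spectral- or $\ell_2$-type quantity.
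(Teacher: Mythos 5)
Your proposal is correct, and it shares the paper's skeleton but replaces its key estimate with a genuinely different, more elementary one. Like the paper, you reduce the gap between the weighted-average and local policy advantages to the single trace $\mathrm{Tr}\bigl(\mathbf{D}_{\pi,\mu_n,P_n}\mathbf{B}_{\pi,\mu_n,P_n}\mathbf{\Pi}_{\pi'}^{T}\bigr)$, and you use the same pivotal identity $\sum_a\pi(a\vert s)A_{\pi,P}(s,a)=0$ to pass from $\mathbf{\Pi}_{\pi'}$ to the increment $\mathbf{\Pi}_{\pi'}-\mathbf{\Pi}_{\pi}$ (the paper phrases this as $G(\pi_n^{t+1})=G(\pi_n^{t+1}-\pi^t)$). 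Where you diverge is the bounding step: the paper invokes the matrix Cauchy--Schwarz inequality of \cite{baumgartner2011inequality} twice, working with matrix absolute values $\vert\mathbf{X}\vert=(\mathbf{X}^{\dagger}\mathbf{X})^{1/2}$, to reach $\left\Vert\mathbf{B}\right\Vert_F\sqrt{\sum_{s,a}\left[\rho(s)\left(\pi'(a\vert s)-\pi(a\vert s)\right)\right]^2}$, and only then relaxes the $\ell_2$ sum to an $\ell_1$ sum via $\sqrt{|a|^2+|b|^2}\le|a|+|b|$ to produce the TV distance. You instead bound the sum state by state with an $\ell_\infty$--$\ell_1$ H\"older pairing, extracting $\max_{s,a}\vert B_{\pi,\mu_n,P_n}(s,a)\vert$ against $\sum_a\vert\pi'(a\vert s)-\pi(a\vert s)\vert=2D_{TV}(\pi(\cdot\vert s)\Vert\pi'(\cdot\vert s))$, and relax the max entry to $\left\Vert\mathbf{B}\right\Vert_F$ only at the very end. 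Your route avoids matrix square roots and the double Cauchy--Schwarz entirely, and it is in fact tighter at the intermediate stage: it shows the theorem holds with the smaller constant $\alpha=2\max_{s,a}\vert B_{\pi,\mu_n,P_n}(s,a)\vert$, with the Frobenius-norm version following as a corollary; the paper's heavier machinery lands directly on the Frobenius norm, which matches Definition \ref{def:impacthete} but obscures that the max-entry bound suffices. Your closing remarks correctly identify the two load-bearing ideas — the vanishing self-advantage trace and the $\ell_1$/max-entry pairing — so the argument is complete as stated.
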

The proof is given in Appendix \ref{proof:theorem1}. Note that the left-hand-side (LHS) of (\ref{eq:thm1}) represents the average policy advantages of $\pi_{n}^{t+1}$ over $\pi^{t}$ when applying $\pi_{n}^{t+1}$ on all agents and the RHS represents the penalized policy advantage at the n-th agent. Thus, Theorem \ref{thm:1} indicates that the policy advantage of $\pi_{n}^{t+1}$ on all agents is lower bounded by its local policy advantage with a penalty term. As a result,  improving the penalized local policy advantage of a given agent may improve its local policy's performance on all agents.

Based on Theorem I, we have the following corollary. 
\begin{cor}
\label{cor}
The condition 
\begin{eqnarray}
\label{BleD1}
\left\Vert \mathbf{B}_{\pi,\mu_{n},P_{n}}\right\Vert _{F}<\left\Vert \mathbf{A}_{\pi,P_{n}}\right\Vert_{F}
\end{eqnarray}
is a necessary condition for the local policy update to be able to improve the global objective.
\end{cor}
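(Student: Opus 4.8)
The plan is to read the left-hand side of Theorem~\ref{thm:1}, $\sum_{k=1}^{N}q_{k}\mathbb{A}_{\pi^{t},\mu_{k},P_{k}}(\pi_{n}^{t+1})$, as the surrogate for the global improvement $\eta(\pi_{n}^{t+1})-\eta(\pi^{t})$ obtained by applying $\pi_{n}^{t+1}$ to all agents. Then the right-hand side of (\ref{eq:thm1}) is the best global improvement that the local update can \emph{guarantee}. I would prove the corollary in contrapositive form: show that this guaranteed quantity can be positive only if $\left\Vert \mathbf{B}_{\pi,\mu_{n},P_{n}}\right\Vert_{F}<\left\Vert \mathbf{A}_{\pi,P_{n}}\right\Vert_{F}$, so that (\ref{BleD1}) is necessary for the local update to be certified as beneficial for $\eta$.

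The crux is an upper bound on the local policy advantage in terms of $\left\Vert \mathbf{A}_{\pi,P_{n}}\right\Vert_{F}$ and the same TV penalty already appearing in Theorem~\ref{thm:1}. Starting from (\ref{eq:policyadvantage}) and using the zero-mean property $\sum_{a}\pi^{t}(a|s)A_{\pi^{t},P_{n}}(s,a)=\mathbb{E}_{a\thicksim\pi^{t}}[A_{\pi^{t},P_{n}}(s,a)]=0$, I would rewrite
\begin{equation*}
\mathbb{A}_{\pi^{t},\mu_{n},P_{n}}(\pi_{n}^{t+1})=\sum_{s}\rho_{\pi^{t},\mu_{n},P_{n}}(s)\sum_{a}\bigl[\pi_{n}^{t+1}(a|s)-\pi^{t}(a|s)\bigr]A_{\pi^{t},P_{n}}(s,a).
\end{equation*}
Applying Cauchy--Schwarz to the inner sum over $a$ for each fixed $s$, bounding the $\ell^{2}$ norm of the probability difference by its $\ell^{1}$ norm $\Vert\pi_{n}^{t+1}(\cdot|s)-\pi^{t}(\cdot|s)\Vert_{1}=2D_{TV}(\pi^{t}(\cdot|s)\Vert\pi_{n}^{t+1}(\cdot|s))$, and relaxing the $\ell^{2}$ norm of the row $A_{\pi^{t},P_{n}}(s,\cdot)$ to the full Frobenius norm $\left\Vert \mathbf{A}_{\pi,P_{n}}\right\Vert_{F}$, then summing against the nonnegative weights $\rho_{\pi^{t},\mu_{n},P_{n}}(s)$, gives
\begin{equation*}
\mathbb{A}_{\pi^{t},\mu_{n},P_{n}}(\pi_{n}^{t+1})\le 2\left\Vert \mathbf{A}_{\pi,P_{n}}\right\Vert_{F}\,\mathbb{E}_{s\thicksim\rho_{\pi^{t},\mu_{n},P_{n}}}\bigl[D_{TV}(\pi^{t}(\cdot|s)\Vert\pi_{n}^{t+1}(\cdot|s))\bigr].
\end{equation*}

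Substituting this into the right-hand side of (\ref{eq:thm1}) and abbreviating $E:=\mathbb{E}_{s\thicksim\rho_{\pi^{t},\mu_{n},P_{n}}}[D_{TV}(\pi^{t}(\cdot|s)\Vert\pi_{n}^{t+1}(\cdot|s))]\ge 0$, the guaranteed global improvement is bounded above by $2\bigl(\left\Vert \mathbf{A}_{\pi,P_{n}}\right\Vert_{F}-\left\Vert \mathbf{B}_{\pi,\mu_{n},P_{n}}\right\Vert_{F}\bigr)E$. For any nontrivial update ($E>0$), this upper bound is positive only when $\left\Vert \mathbf{A}_{\pi,P_{n}}\right\Vert_{F}>\left\Vert \mathbf{B}_{\pi,\mu_{n},P_{n}}\right\Vert_{F}$; equivalently, whenever $\left\Vert \mathbf{B}_{\pi,\mu_{n},P_{n}}\right\Vert_{F}\ge\left\Vert \mathbf{A}_{\pi,P_{n}}\right\Vert_{F}$ the lower bound in (\ref{eq:thm1}) is forced to be nonpositive and no global improvement can be certified, which yields (\ref{BleD1}) as a necessary condition.

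I expect the main obstacle to be the clean derivation of the policy-advantage upper bound, in particular selecting the chain of norm inequalities (Cauchy--Schwarz, then $\ell^{2}\le\ell^{1}$, then the row-to-Frobenius relaxation) so that the TV penalty and $\left\Vert \mathbf{A}_{\pi,P_{n}}\right\Vert_{F}$ emerge with exactly the constant $2$ that matches $\alpha=2\left\Vert \mathbf{B}_{\pi,\mu_{n},P_{n}}\right\Vert_{F}$. A secondary subtlety is the logical framing: the statement concerns necessity for the \emph{guaranteed} (lower-bounded) improvement to be positive rather than for the true improvement, so I would state explicitly that the conclusion is about certifiability through the bound of Theorem~\ref{thm:1}.
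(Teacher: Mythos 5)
Your proof is correct and, at its core, follows the same strategy as the paper's Appendix B: derive an upper bound on the local policy advantage $\mathbb{A}_{\pi^{t},\mu_{n},P_{n}}(\pi_{n}^{t+1})$ in which $\left\Vert \mathbf{A}_{\pi^{t},P_{n}}\right\Vert_{F}$ plays the role of $\left\Vert \mathbf{B}_{\pi^{t},\mu_{n},P_{n}}\right\Vert_{F}$, combine it with Theorem \ref{thm:1}, and conclude that when (\ref{BleD1}) fails the certified improvement is nonpositive for every candidate update (with equality at $\pi_{n}^{t+1}=\pi^{t}$). The routes to that upper bound differ, though. The paper reruns the matrix-trace Cauchy--Schwarz machinery of Appendix A on $\mbox{Tr}\left(\mathbf{D}_{\pi^{t},\mu_{n},P_{n}}\mathbf{A}_{\pi^{t},P_{n}}\mathbf{\Pi}_{\pi_{n}^{t+1}}^{T}\right)$, producing the $\ell^{2}$-form bound $\left\Vert \mathbf{A}_{\pi^{t},P_{n}}\right\Vert_{F}\sqrt{\sum_{s,a}\left[\rho_{\pi^{t},\mu_{n},P_{n}}(s)\,\Delta(s,a)\right]^{2}}$ with $\Delta(s,a)=\pi_{n}^{t+1}(a\vert s)-\pi^{t}(a\vert s)$, and compares it against the pre-TV ($\ell^{2}$) form of the penalty appearing inside the proof of Theorem \ref{thm:1}. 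You instead work only from the theorem's statement, via per-state scalar Cauchy--Schwarz, then $\ell^{2}\le\ell^{1}$, then a row-to-Frobenius relaxation, landing on $2\left\Vert \mathbf{A}_{\pi^{t},P_{n}}\right\Vert_{F}\,\mathbb{E}_{s\thicksim\rho_{\pi^{t},\mu_{n},P_{n}}}\left[D_{TV}(\pi^{t}(\cdot\vert s)\Vert\pi_{n}^{t+1}(\cdot\vert s))\right]$; since the theorem's penalty was relaxed to TV form by the very same $\ell^{2}\le\ell^{1}$ step, your TV-level comparison matches the constant in $\alpha=2\left\Vert \mathbf{B}\right\Vert_{F}$ exactly and loses nothing in the conclusion. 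What each buys: the paper's derivation is tighter per step and reuses shared machinery, but it references internals of Theorem \ref{thm:1}'s proof; yours is more elementary and self-contained. Your closing caveat is also apt and, if anything, more careful than the paper's own wording --- since Theorem \ref{thm:1} provides only a lower bound, failure of (\ref{BleD1}) rules out \emph{guaranteed} global improvement rather than improvement per se, which is precisely the sense in which the corollary's necessity should be read.
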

\noindent The proof is given in Appendix \ref{corproof}. 

\begin{rem}
\label{rem:NecessaryCondition1} 
The RHS of (\ref{BleD1}) measures the policy advantage and can be regarded as the potential of the local policy to be further improved according to the local environment. The LHS of (\ref{BleD1}) indicates the heterogeneity level of the $n$-th agent. Corollary \ref{cor} implies that, in order to improve the global objective by optimizing the local objective, we need $\left\Vert \mathbf{A}_{\pi,P_{n}}\right\Vert_{F} - \left\Vert \mathbf{B}_{\pi,\mu_{n},P_{n}}\right\Vert_{F} > 0$. So, the heterogeneity level serves as a penalty, i.e., the local policy's contribution to the global objective is penalized by the heterogeneity level. In other words, 
even if the local training is beneficial for the local objective, it may not help the global objective unless the improvement is larger than the heterogeneity level. We have more to say about this in the experiment result section.   
\end{rem}

\begin{rem}
\label{rem:NecessaryCondition} 
Note that Corollary \ref{cor} includes the IID case as a special case. In particular, without data heterogeneity, the LHS of (\ref{BleD1}) will be zero. Thus, the necessary condition holds as long as the RHS is greater than zero, which indicates that it is much easier for the local training to help the global training. This makes sense because for the IID case, all agents have the same MDP, thus any improvement for one agent will probably be beneficial for all agents. 
\end{rem}

However, the average policy advantage is not a direct measure of the final performance, as it is a proxy for the improvement of the global objective. A more useful result should directly measure
the improvement in terms of the global expected discounted reward. For that purpose, we have the following corollary. 
\begin{cor}
\label{cor:1} Based on Theorem \ref{thm:1}, we have the following
inequality showing the relation between the global
objective and the local update:
\begin{alignat}{1}
\label{eq:cor1}
\eta(\pi_{n}^{t+1}) & \ge \sum_{k=1}^{N}q_{k}\eta_{k}(\pi^{t})+\mathbb{A}_{\pi^{t},\mu_{n},P_{n}}(\pi_{n}^{t+1})\nonumber\\
& \quad-(\alpha+\beta)\mathbb{E}_{s\thicksim\rho_{\pi^{t},\mu_{n},P_{n}}} \left[D_{TV}(\pi^{t}(\cdot\vert s)\Vert\pi_{n}^{t+1}(\cdot\vert s))\right]\nonumber\\
& \quad-\delta D_{TV}^{\max}(\pi^{t},\pi_{n}^{t+1}),
\end{alignat}
where $\delta=2 \sum_{k=1}^{N}q_{k} \frac{4\epsilon_{k}\gamma}{(1-\gamma)^{2}} D_{TV}(\rho_{\pi^{t},\mu_{k},P_{k}} \Vert \rho_{\pi^{t},\mu_{n},P_{n}})$, $\beta=\sum_{k=1}^{N} q_{k} \frac{4\epsilon_{k}\gamma}{(1-\gamma)^{2}}$
and $\epsilon_{k}=\max_{s,a}\left|A_{\pi^{t},P_{k}}(s,a)\right|$ denotes the maximum action advantage among all state and action pairs.
\end{cor}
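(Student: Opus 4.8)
The plan is to start from the definition of the global objective $\eta(\pi_{n}^{t+1})=\sum_{k=1}^{N}q_{k}\eta_{k}(\pi_{n}^{t+1})$ and to lower bound each summand $\eta_{k}(\pi_{n}^{t+1})$ by treating $\pi_{n}^{t+1}$ as a candidate policy on the $k$-th MDP. Concretely, I would apply the CPO-type improvement bound (\ref{eq:cporeturndiff}) with $\pi=\pi^{t}$ and $\pi^{\prime}=\pi_{n}^{t+1}$ on each $\mathcal{M}_{k}$, which gives
\begin{alignat}{1}
\eta_{k}(\pi_{n}^{t+1}) & \ge \eta_{k}(\pi^{t})+\mathbb{A}_{\pi^{t},\mu_{k},P_{k}}(\pi_{n}^{t+1})\nonumber\\
& \quad - c_{k}^{\text{CPO}}\,\mathbb{E}_{s\thicksim\rho_{\pi^{t},\mu_{k},P_{k}}}\left[D_{TV}(\pi^{t}(\cdot\vert s)\Vert\pi_{n}^{t+1}(\cdot\vert s))\right].\nonumber
\end{alignat}
Using the fact noted after (\ref{eq:cporeturndiff}) that $c_{k}^{\text{CPO}}\le\frac{4\epsilon_{k}\gamma}{(1-\gamma)^{2}}$ with $\epsilon_{k}=\max_{s,a}\lvert A_{\pi^{t},P_{k}}(s,a)\rvert$, then weighting by $q_{k}$ and summing over $k$, I obtain a lower bound on $\eta(\pi_{n}^{t+1})$ built from $\sum_{k}q_{k}\eta_{k}(\pi^{t})$, the averaged policy advantage $\sum_{k}q_{k}\mathbb{A}_{\pi^{t},\mu_{k},P_{k}}(\pi_{n}^{t+1})$, and a $q_{k}$-weighted sum of per-agent TV penalties.

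The second step is to eliminate the averaged policy advantage in favor of its local counterpart by invoking Theorem \ref{thm:1}, namely $\sum_{k}q_{k}\mathbb{A}_{\pi^{t},\mu_{k},P_{k}}(\pi_{n}^{t+1})\ge\mathbb{A}_{\pi^{t},\mu_{n},P_{n}}(\pi_{n}^{t+1})-\alpha\,\mathbb{E}_{s\thicksim\rho_{\pi^{t},\mu_{n},P_{n}}}[D_{TV}(\pi^{t}(\cdot\vert s)\Vert\pi_{n}^{t+1}(\cdot\vert s))]$. Substituting this lower bound produces exactly the term $\mathbb{A}_{\pi^{t},\mu_{n},P_{n}}(\pi_{n}^{t+1})$ together with the $\alpha$-weighted penalty measured against the reference visitation distribution $\rho_{\pi^{t},\mu_{n},P_{n}}$, matching two of the pieces of (\ref{eq:cor1}).

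The remaining work, and the step I expect to be the main obstacle, is that the CPO penalties are expectations against each agent's own distribution $\rho_{\pi^{t},\mu_{k},P_{k}}$, whereas (\ref{eq:cor1}) reports everything against the single reference $\rho_{\pi^{t},\mu_{n},P_{n}}$. I would resolve this by a change-of-measure estimate: writing $g(s)=D_{TV}(\pi^{t}(\cdot\vert s)\Vert\pi_{n}^{t+1}(\cdot\vert s))$ and noting $0\le g(s)\le D_{TV}^{\max}(\pi^{t},\pi_{n}^{t+1})$, I would bound
\begin{alignat}{1}
\mathbb{E}_{s\thicksim\rho_{\pi^{t},\mu_{k},P_{k}}}[g(s)] & \le \mathbb{E}_{s\thicksim\rho_{\pi^{t},\mu_{n},P_{n}}}[g(s)]\nonumber\\
& \quad + 2\,D_{TV}^{\max}(\pi^{t},\pi_{n}^{t+1})\,D_{TV}(\rho_{\pi^{t},\mu_{k},P_{k}}\Vert\rho_{\pi^{t},\mu_{n},P_{n}}),\nonumber
\end{alignat}
which follows from $\sum_{s}\lvert\rho_{\pi^{t},\mu_{k},P_{k}}(s)-\rho_{\pi^{t},\mu_{n},P_{n}}(s)\rvert=2D_{TV}(\rho_{\pi^{t},\mu_{k},P_{k}}\Vert\rho_{\pi^{t},\mu_{n},P_{n}})$. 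The sign bookkeeping is delicate here, since this penalty enters with a negative coefficient, so it is precisely the \emph{upper} bound on the expectation that preserves the overall lower bound on $\eta(\pi_{n}^{t+1})$; the $k=n$ term contributes nothing because its distribution mismatch vanishes. Collecting the $\mathbb{E}_{s\thicksim\rho_{\pi^{t},\mu_{n},P_{n}}}[g(s)]$ contributions then yields the coefficient $\alpha+\sum_{k}q_{k}\frac{4\epsilon_{k}\gamma}{(1-\gamma)^{2}}=\alpha+\beta$, while the residual mismatch terms aggregate into $\delta\,D_{TV}^{\max}(\pi^{t},\pi_{n}^{t+1})$ with $\delta=2\sum_{k}q_{k}\frac{4\epsilon_{k}\gamma}{(1-\gamma)^{2}}D_{TV}(\rho_{\pi^{t},\mu_{k},P_{k}}\Vert\rho_{\pi^{t},\mu_{n},P_{n}})$, establishing (\ref{eq:cor1}).
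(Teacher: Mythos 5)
Your proposal is correct and takes essentially the same route as the paper's proof in Appendix \ref{proof:corollary1}: decompose $\eta(\pi_{n}^{t+1})=\sum_{k=1}^{N}q_{k}\eta_{k}(\pi_{n}^{t+1})$, apply the bound (\ref{eq:cporeturndiff}) on each $\mathcal{M}_{k}$ with $c_{k}^{\text{CPO}}$ replaced by $\frac{4\epsilon_{k}\gamma}{(1-\gamma)^{2}}$, invoke Theorem \ref{thm:1} to pass to the local policy advantage with the $\alpha$-penalty, and use exactly the paper's change-of-measure estimate (\ref{eq:55}), $\mathbb{E}_{s\thicksim\rho_{\pi^{t},\mu_{k},P_{k}}}[g(s)]\le\mathbb{E}_{s\thicksim\rho_{\pi^{t},\mu_{n},P_{n}}}[g(s)]+2D_{TV}^{\max}(\pi^{t},\pi_{n}^{t+1})D_{TV}(\rho_{\pi^{t},\mu_{k},P_{k}}\Vert\rho_{\pi^{t},\mu_{n},P_{n}})$, to collect the $\alpha+\beta$ and $\delta$ terms. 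The only difference is the immaterial ordering of the last two steps (you apply Theorem \ref{thm:1} before the change of measure, the paper after), so nothing further is needed.
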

The proof is given in Appendix \ref{proof:corollary1}. The LHS of (\ref{eq:cor1}) represents the average of expected discounted reward of the n-th local policy $\pi_{n}^{t+1}$ over all agents. The first term on the RHS of (\ref{eq:cor1}) is a constant, representing the return of the current policy, and the second term represents the policy advantage of $\pi_{n}^{t+1}$ over $\pi_{n}^{t}$ in the $n$-th agent. Corollary \ref{cor:1} indicates that it is possible to improve a local policy's average performance on all agents by improving the local policy advantage.
In particular, by penalizing the local update with the TV distance, weighted by a coefficient ($\alpha$) proportional to the heterogeneity level, we may improve the performance of a local policy over all environments. In fact, it can be proved that optimizing the RHS of (\ref{eq:cor1}) will effectively improve the expected discounted reward on all agents. Please refer to Appendix D for more details.

The last two terms in (\ref{eq:cor1}) are penalty terms representing the expected TV distance and the maximum TV divergence between $\pi_{n}^{t+1}$ and $\pi_{n}^{t}$. The two penalties indicate that even if it is possible to improve the global performance by optimizing the local objective, the improvement only happens when the local policy $\pi_{n}^{t+1}$ is close to the global policy $\pi^{t}$. Given the first term on the RHS of (\ref{eq:cor1}) is
a constant, Corollary \ref{cor:1} suggests the following objective
for the local training 
\begin{alignat}{1}
h_n(\pi_{n}^{t+1};\pi^{t}) & = \mathbb{A}_{\pi^{t},\mu_{n},P_{n}}(\pi_{n}^{t+1}) - (\alpha+\beta)\nonumber\\
& \quad\cdot\mathbb{E}_{s\thicksim\rho_{\pi^{t},\mu_{n},P_{n}}} \left[D_{TV}(\pi^{t}(\cdot\vert s)\Vert\pi_{n}^{t+1}(\cdot\vert s))\right]\nonumber\\
& \quad-\delta D_{TV}^{\max}(\pi^{t},\pi_{n}^{t+1}).
\label{eq:prelocalobj}
\end{alignat}

\begin{rem}
\label{rem:ThePenalty} There are three penalty terms in (\ref{eq:prelocalobj}), namely the $\alpha$, $\beta$ and $\delta$ penalty term, penalizing the difference between the local and global policies. Thus, all of them can be regarded as a global penalty. However, they came from different sources and were added for different purposes. The $\alpha$ penalty occurred when investigating the connection between the local and global policy advantage, i.e., Theorem I, to tackle the data heterogeneity, while the $\beta$ and $\delta$ penalty came from 
(\ref{TRPOreturndiff}) and (\ref{eq:cporeturndiff}) to constrain the policy update at one agent.  
\end{rem}

\begin{rem}
\label{rem:The-IID-case} The IID case is covered by Corollary \ref{cor:1}. This special case occurs when all agents
share the same initial state distribution and environment dynamics, i.e. the level of heterogeneity $\left\Vert \mathbf{B}_{\pi,n}\right\Vert _{F}=0$
and hence $\alpha=0$ and $\beta$ becomes the constant $c$ in (\ref{TRPOObj}) for all agents. As a result, the $\alpha$ penalty term will disappear. However, the $\beta$ and $\delta$ penalty term is still valid. This indicates that, even with IID data, a global penalty term is needed, and data heterogeneity will require a higher global penalty.
\end{rem}

Next, we show the convergence of policy updates according to (\ref{eq:prelocalobj}). The theoretical analysis is motivated by the convergence analysis of Heterogeneous-Agent Trust Region Policy Optimization (HATPRO) algorithm \cite{kuba2022trust}. Similar to \cite{pmlr-v151-jin22a}, we focus our discussion on the tabular case, i.e. we assume the aggregation step can linearly combine local policies to obtain the global policy. A summary of this learning procedure is presented in Algorithm \ref{alg:PAvg}.
\begin{algorithm}[H]
\caption{\label{alg:PAvg}Federated Policy Iteration with Monotonic Improvement Guarantee. Tabular Case.}
\begin{algorithmic}[1]
	\REQUIRE{$\pi^0$}
	\FOR{round t = 0,1,...}
		\STATE{Synchronize the global model $\pi^{t}$ to every agents.}
		\STATE{Compute the exact values of $\alpha$, $\beta$ and $\delta$.}
		\FOR{agent k = 1,2,...,N}
			\STATE{Optimize $\pi_{k}^{t+1}=\arg\max_{\pi}\left[ h_{k}(\pi;\pi^{t})\right]$.}
			\STATE{Upload $\pi_{k}^{t+1}$ and $q_k$ to the central server.}
		\ENDFOR
		\STATE{The central server aggregates the $\theta$'s as $\pi^{t+1}(a \vert s) \leftarrow \sum_{k=1}^{N}q_{k}\pi_{k}^{t+1}(a \vert s), \forall s,a$.}
	\ENDFOR
\end{algorithmic}
\end{algorithm}
\begin{thm}
\label{thm:monotonicity} Algorithm \ref{alg:PAvg} is guaranteed to generate a monotonically improving sequence of policies, i.e. $\eta(\pi^{t+1}) \ge \eta(\pi^{t})$ for $t\in\mathbb{N}$.
\end{thm}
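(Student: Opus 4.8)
The plan is to prove the one-step improvement $\eta(\pi^{t+1}) \ge \eta(\pi^{t})$, which, holding for every round $t$, gives the claimed monotone sequence. I would combine two ingredients: first, that the aggregated policy $\pi^{t+1}=\sum_{k} q_{k}\pi_{k}^{t+1}$ inherits a lower bound on the global improvement expressed through the per-agent surrogates $h_{k}$; and second, that each surrogate is nonnegative at its own maximizer. I would start from the identity $\eta(\pi^{t+1})-\eta(\pi^{t})=\sum_{n}q_{n}\big[\eta_{n}(\pi^{t+1})-\eta_{n}(\pi^{t})\big]$ and apply the per-agent return-difference bound (\ref{eq:cporeturndiff}) to each summand $\eta_{n}(\pi^{t+1})-\eta_{n}(\pi^{t})$.

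The crux is then to push the aggregation $\pi^{t+1}=\sum_{k} q_{k}\pi_{k}^{t+1}$ through the resulting bound. This works because the policy advantage $\mathbb{A}_{\pi^{t},\mu_{n},P_{n}}(\cdot)$ is linear in its argument (by (\ref{eq:vectorpolicyadvantage})), so that $\mathbb{A}_{\pi^{t},\mu_{n},P_{n}}(\pi^{t+1})=\sum_{k} q_{k}\mathbb{A}_{\pi^{t},\mu_{n},P_{n}}(\pi_{k}^{t+1})$, while the penalty $D_{TV}(\pi^{t}(\cdot|s)\Vert\,\cdot\,)$ and its state-maximum $D_{TV}^{\max}$ are convex, so Jensen's inequality gives $D_{TV}(\pi^{t}(\cdot|s)\Vert\pi^{t+1}(\cdot|s))\le\sum_{k} q_{k}D_{TV}(\pi^{t}(\cdot|s)\Vert\pi_{k}^{t+1}(\cdot|s))$. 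Swapping the order of summation and applying Theorem \ref{thm:1} to each agent's policy $\pi_{k}^{t+1}$, together with the change-of-state-distribution estimate that produces the $\delta$ term (exactly the steps behind Corollary \ref{cor:1}), I expect to obtain $\eta(\pi^{t+1})-\eta(\pi^{t})\ge\sum_{k} q_{k}\,h_{k}(\pi_{k}^{t+1};\pi^{t})$. In effect this re-runs the derivation of Corollary \ref{cor:1} with the single evaluated policy replaced by the convex combination $\pi^{t+1}$.

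It then remains to show each term $h_{k}(\pi_{k}^{t+1};\pi^{t})$ is nonnegative. Since line 5 of Algorithm \ref{alg:PAvg} sets $\pi_{k}^{t+1}=\arg\max_{\pi} h_{k}(\pi;\pi^{t})$ and $\pi^{t}$ is itself a feasible choice, $h_{k}(\pi_{k}^{t+1};\pi^{t})\ge h_{k}(\pi^{t};\pi^{t})$. Evaluating the surrogate at $\pi^{t}$ makes every term vanish: the policy advantage of a policy over itself is zero because $\mathbb{E}_{a\sim\pi^{t}(\cdot|s)}[A_{\pi^{t},P_{k}}(s,a)]=0$ by the definition of the advantage, and both TV penalties vanish since $\pi^{t}=\pi^{t}$. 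Hence $h_{k}(\pi^{t};\pi^{t})=0$ and therefore $h_{k}(\pi_{k}^{t+1};\pi^{t})\ge 0$, which yields $\eta(\pi^{t+1})-\eta(\pi^{t})\ge\sum_{k} q_{k}\cdot 0 = 0$.

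The main obstacle I anticipate is the aggregation step, i.e.\ controlling the global (generally non-concave) objective at the averaged policy from the per-agent guarantees. The argument hinges delicately on the linearity of $\mathbb{A}$ and on the convexity of the TV distance pointing in the right direction for Jensen's inequality; if either failed, a weighted combination of well-optimized local policies could in principle undo the improvement, and one could not bypass this by invoking Corollary \ref{cor:1} as a black box (it only controls each single policy $\pi_{n}^{t+1}$ applied everywhere, not their mixture). A secondary technical point is to make the change-of-distribution bound (the $\delta$ term) and the application of Theorem \ref{thm:1} commute cleanly with the convex-combination bound, so that the per-$k$ surrogates $h_{k}$ reassemble exactly rather than with looser constants.
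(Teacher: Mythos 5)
Your proposal is correct and follows essentially the same route as the paper's own proof: decompose $\eta$ over agents, apply (\ref{eq:cporeturndiff}), push the aggregation $\pi^{t+1}=\sum_{k}q_{k}\pi_{k}^{t+1}$ through the bound via linearity of the policy advantage (the paper's Lemma \ref{lem:linearity}) and convexity of the TV distance (Lemma \ref{lem:tvupper}), re-run the Corollary \ref{cor:1} machinery to obtain $\eta(\pi^{t+1})-\eta(\pi^{t})\ge\sum_{k}q_{k}\,h_{k}(\pi_{k}^{t+1};\pi^{t})$, and conclude from nonnegativity of each surrogate at its maximizer since $h_{k}(\pi^{t};\pi^{t})=0$. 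The only cosmetic difference is that the paper packages the final step as an appeal to the minorization argument of Appendix \ref{proof:MMAlgo}, whereas you write out the $\arg\max$ comparison explicitly.
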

The proof is given in Appendix \ref{proof:monotonicity}. With Theorem \ref{thm:monotonicity}, the convergence of Algorithm \ref{alg:PAvg} is given by the following theorem.
\begin{thm}
\label{thm:2} 
A sequence of policies $\left(\pi^{t}\right)_{t=0}^{\infty}$ generated by Algorithm \ref{alg:PAvg} is convergent.
\end{thm}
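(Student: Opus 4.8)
\emph{Overall approach.} The plan is to prove convergence in two stages: first establish that the sequence of objective values $\bigl(\eta(\pi^{t})\bigr)_{t=0}^{\infty}$ converges, and then upgrade this to convergence of the policy sequence itself by exploiting the compactness of the (tabular) policy space together with the monotone-improvement structure already established in Theorem \ref{thm:monotonicity}.

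\emph{Stage one: value convergence.} First I would argue that $\eta$ is bounded above. Since each MDP is finite and the rewards are bounded, say $|\mathcal{R}_{n}(s,a)|\le R_{\max}$ for all $n,s,a$, the discounted return satisfies $\eta_{n}(\pi)\le R_{\max}/(1-\gamma)$ for every policy, and hence $\eta(\pi)=\sum_{n}q_{n}\eta_{n}(\pi)\le R_{\max}/(1-\gamma)$ uniformly in $\pi$. By Theorem \ref{thm:monotonicity} the real sequence $\eta(\pi^{t})$ is non-decreasing, and being bounded above it converges to a finite limit $\eta^{\star}$ by the monotone convergence theorem. Telescoping then gives $\sum_{t=0}^{\infty}\bigl(\eta(\pi^{t+1})-\eta(\pi^{t})\bigr)=\eta^{\star}-\eta(\pi^{0})<\infty$, so in particular the per-round gains vanish, $\eta(\pi^{t+1})-\eta(\pi^{t})\to 0$.

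\emph{Stage two: policy convergence.} In the tabular case each policy lives in the compact set $\Pi=\prod_{s}\Delta(\mathcal{A})$, and one full round of Algorithm \ref{alg:PAvg} — local maximization of $h_{k}(\cdot\,;\pi^{t})$ in \eqref{eq:prelocalobj} followed by the linear aggregation $\pi^{t+1}(a|s)=\sum_{k}q_{k}\pi_{k}^{t+1}(a|s)$ — defines a deterministic update $\pi^{t+1}=F(\pi^{t})$, continuous wherever the local maximizers are unique. By compactness the sequence has a non-empty set of limit points $\Omega$. I would then show that every point of $\Omega$ is a fixed point of $F$: along any subsequence $\pi^{t_{j}}\to\pi^{\infty}$, continuity of $F$ and of $\eta$ together with $\eta(\pi^{t_{j}})\to\eta^{\star}$ and $\eta(\pi^{t_{j}+1})\to\eta^{\star}$ force $\eta(F(\pi^{\infty}))=\eta(\pi^{\infty})=\eta^{\star}$; since the monotonicity argument gives strict improvement whenever the update is nontrivial (the feasible point $\pi=\pi^{t}$ attains $h_{k}(\pi^{t};\pi^{t})=0$, so a nonzero optimizer strictly raises $\eta$), equality of the values forces $F(\pi^{\infty})=\pi^{\infty}$. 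Thus $\Omega$ consists entirely of stationary policies at which the penalized local objective cannot be improved.

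\emph{Main obstacle.} The hard part is to rule out a continuum of limit points, i.e. to collapse $\Omega$ to a single policy, which is what ``convergent'' requires. The cleanest route is to combine (a) $\|\pi^{t+1}-\pi^{t}\|\to 0$, which follows because the vanishing per-round gains control the size of the local updates, with the standard fact that $\|\pi^{t+1}-\pi^{t}\|\to0$ on a compact space makes $\Omega$ connected, and (b) a discreteness property of the fixed-point set $\{\pi: F(\pi)=\pi\}$; a connected subset of a totally disconnected (e.g. isolated or finite) set of stationary policies is necessarily a single point, giving $\pi^{t}\to\pi^{\infty}$. I expect establishing this last displacement/discreteness input to be the delicate step, because the TV-based penalties in \eqref{eq:prelocalobj} are only piecewise linear: they provide no strong convexity that would bound the displacement by the value gain (which would otherwise yield summable displacements and a Cauchy sequence immediately), and they can also make $F$ set-valued through non-unique maximizers. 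I would therefore either invoke the strict-improvement inequality from the proof of Theorem \ref{thm:monotonicity} to bound $\|\pi^{t+1}-\pi^{t}\|$ by a vanishing function of the per-round gain, or impose the mild genericity assumption that stationary policies are isolated, under which the connectedness argument closes the proof.
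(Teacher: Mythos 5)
Your stage one is exactly the paper's: monotonicity from Theorem \ref{thm:monotonicity}, the uniform bound $R_{\max}/(1-\gamma)$, and monotone convergence of $\eta(\pi^{t})$, with per-round gains vanishing by telescoping. Your stage-two limit-point analysis also tracks the paper's argument, just in different clothing: where you characterize limit points as fixed points of the round map $F$, the paper defines FedKL-stationarity (Definition \ref{def:stationarity}), extracts a convergent subsequence via Bolzano--Weierstrass, and combines the continuity lemmas (continuity of $\rho_{\pi}$, $Q_{\pi}$, $\left\Vert \mathbf{B}_{\pi,\mu_{n},P_{n}}\right\Vert_{F}$, and hence of $h_{k}(\cdot\,;\pi)$, Lemmas \ref{lem:kuba4}--\ref{lem:continuityofh}) with nonnegativity of each agent's optimal gain and the squeeze theorem to conclude $\max_{\pi}h_{k}(\pi;\hat{\pi})=0$ for every agent $k$ at every limit point $\hat{\pi}$. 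One caution on your version: a nontrivial maximizer does not force strict value improvement --- the maximum of $h_{k}(\cdot\,;\pi^{t})$ can equal zero and be attained away from $\pi^{t}$ --- so equality of values only yields that limit points attain zero optimal penalized gain (the paper's stationarity notion), not literally $F(\pi^{\infty})=\pi^{\infty}$ when argmaxes are non-unique; the paper's $\max_{\pi}h_{k}=0$ formulation sidesteps exactly this by making stationarity a statement about the optimal value rather than the argmax.

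Regarding your ``main obstacle'': you have identified a genuine issue, but it is a gap in the paper's own proof rather than a deficiency of your proposal relative to it. The paper never collapses the limit set to a single point: what it actually establishes is (i) convergence of the value sequence $\eta(\pi^{t})$ and (ii) FedKL-stationarity of every limit point, i.e., subsequential convergence to stationary policies --- despite the theorem's wording ``is convergent.'' Your proposed repairs (a displacement bound $\lVert\pi^{t+1}-\pi^{t}\rVert\to 0$ combined with connectedness of the limit set against an isolated stationary set, or a genericity assumption) are precisely what would be needed to prove the literal statement, and your observation that the piecewise-linear TV penalties provide no strong convexity to bound displacement by value gain explains why the paper (following the HATRPO-style analysis it cites) stops at stationarity of limit points. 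To match the paper you need only your two stages with the fixed-point conclusion weakened to the zero-optimal-gain form; your honest flagging of the uniqueness step is, if anything, more careful than the published proof.
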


The proof is given in Appendix \ref{proof:theorem2}. When policies are parameterized with tabular methods, Theorem \ref{thm:monotonicity} and \ref{thm:2} are exact. For non-linear parameterization, the algorithm can be similarly implemented, but the error term introduced by the aggregation step should be considered.

\section{The Proposed Algorithm: FedKL}
Although we have obtained a theory justifying the objective in (\ref{eq:prelocalobj}), directly optimizing it is difficult due to several issues. First, it is impractical to compute the last terms of (\ref{eq:prelocalobj}) by traversing all states. Moreover, despite it's theoretical importance, the TV distance does not have a closed-form expression by far. Second, the advantage function has to be estimated by sampling. Third, one agent cannot access information of other agents, making it impossible to determine the coefficients in (\ref{eq:prelocalobj}). In the following, we tackle these issues one by one.

\subsection{TV Distance} 
To handle the first issue, we replace the maximum of TV divergence over all states
$D_{TV}^{\max}(\pi^{t},\pi_{n}^{t+1})$ by its average value
$ \mathbb{E}_{s\thicksim\rho_{\pi^{t},\mu_{n},P_{n}}}\left[D_{TV}(\pi^{t}(\cdot\vert s)\lVert\pi_{n}^{t+1}(\cdot\vert s))\right]$. It has been shown that this heuristic approximation has similar empirical performance as the one with the maximum divergence \cite{DBLP:journals/corr/SchulmanLMJA15}. Furthermore, we replace the TV distance in (\ref{eq:prelocalobj})
by the KL divergence based on the Pinsker's inequality \cite{csiszar2011information}, i.e.
$D_{TV}(p\lVert q)\le\sqrt{\frac{1}{2}D_{KL}(p\lVert q)}$, and obtain
\begin{alignat}{1}
h_n(\pi_{n}^{t+1};\pi^{t}) \ge & \mathbb{A}_{\pi^{t},\mu_{n},P_{n}}(\pi_{n}^{t+1})-(\alpha+\beta+\delta) \label{eq:localobj}\\
& \mathbb{E}_{s\thicksim\rho_{\pi^{t},\mu_{n},P_{n}}} \left[\sqrt{\frac{1}{2}D_{KL}(\pi^{t}(\cdot\vert s)\lVert\pi_{n}^{t+1}(\cdot\vert s))}\right]\nonumber.
\end{alignat}

\subsection{Approximation of the Policy Advantage and Penalty Terms} 
To optimize (\ref{eq:localobj}), we need to calculate the policy advantage and the penalty terms. To this end, we assume all agents adopt the training algorithm proposed by \cite{DBLP:journals/corr/SchulmanWDRK17}. Next, we show how we can approximate (\ref{eq:localobj}) by Monte Carlo simulation. In each training round, the agent will perform $I$ iterations of local training. Let $\pi^{t+1}_{n,i}$ denote the local policy of the $n$-th agent after $i$ iterations of local training with $i=0,...,I$, where $\pi^{t+1}_{n,0} = \pi^t$ and $\pi^{t+1}_{n,I} = \pi^{t+1}_n$. 
Thus, the objective function for the $i$-th iteration of the $n$-th agent to obtain $\pi^{t+1}_{n,i}$ from $\pi^{t+1}_{n,i-1}$ can be given by 
\begin{alignat}{1}
& h_n(\pi_{n,i}^{t+1};\pi^{t},\pi_{n,i-1}^{t+1})\nonumber\\ 
=& \mathbb{E}_{s\thicksim\rho_{\pi^{t+1}_{n,i-1},\mu_{n},P_{n}},a\thicksim\pi_{n,i-1}^{t+1}} \left[\omega(s,a) A_{\pi^{t},P_{n}}(s,a)\right. \nonumber \\
& -c_{1} \sqrt{\frac{1}{2}D_{KL}^{}(\pi^{t} (\cdot\vert s) \lVert\pi_{n,i}^{t+1} (\cdot\vert s))}\nonumber\\
& \left.-c_{2} \text{\ensuremath{D_{KL}^{}}(\ensuremath{\pi_{n,i-1}^{t+1}} (\ensuremath{\cdot}\ensuremath{\vert}s) \ensuremath{\lVert\pi_{n,i}^{t+1}} (\ensuremath{\cdot}\ensuremath{\vert}s))}\right],\label{eq:finallocalobj}
\end{alignat}
where $c_{1}$ is used to approximate $(\alpha+\beta+\delta)$, the last term is the KL penalty introduced by the local training, and $c_2$ is used to approximate $c$ defined in (\ref{eq:trpo1}). Here, 
$\omega(s,a)=\frac{\rho_{\pi^{t},\mu_{n},P_{n}}(s)}{\rho_{\pi^{t+1}_{n,i-1},\mu_{n},P_{n}}(s)} \frac{\pi_{n,i}^{t+1}(a\ensuremath{\vert}s)}{\pi_{n,i-1}^{t+1}(a\ensuremath{\vert}s)}$ is the importance sampling ratio where we estimate $\mathbb{A}_{\pi^{t},\mu_{n},P_{n}}(\pi_{n}^{t+1})$ by states and actions sampled from $\rho_{\pi^{t+1}_{n,i-1},\mu_{n},P_{n}}$ and $\pi^{t+1}_{n,i-1}$.  The expected value of the first term in (\ref{eq:finallocalobj}) is an unbiased estimation of the policy advantage $\mathbb{A}_{\pi^{t},\mu_{n},P_{n}}(\pi_{n}^{t+1})$. However, it is difficult and inefficient to compute $\omega(s,a)$ directly. Therefore, we ignore the difference in the discounted visitation frequency and use $\omega(s,a)=\frac{\pi_{n,i}^{t+1}(a\ensuremath{\vert}s)}{\pi_{n,i-1}^{t+1}(a\ensuremath{\vert}s)},$ which yields a slightly biased estimation of the policy advantage but with smaller variance.

In the following discussion, we will refer to the second and third terms of (\ref{eq:finallocalobj}) as the global and local penalties, respectively. In particular, the second term penalizes the divergence of one agent's policy from the global policy and the third term penalizes the policy outputs in two consecutive iterations. Given both the two penalties are related to KL divergence, we will refer to the proposed algorithm as FedKL. 

\subsection{Adaptive Penalty Coefficients} 
In FRL, the initial state distribution and dynamics function of one agent is not known to others. Thus, the penalty coefficients $c_{1}$ and $c_{2}$ cannot be known locally. In this paper, we adopt the adaptive method in \cite{DBLP:journals/corr/SchulmanWDRK17} to determine the coefficients. Specifically, we introduce two hyper-parameters, $d_{local}$ and $d_{global}$, as the target local and global KL divergence. Then, $c_{1}$ and $c_{2}$ will be adjusted in an adaptive manner so that the target $d_{local}$ and $d_{global}$ are achieved in each policy update. After initializing $c_{1},c_{2}$, we perform a three-phases update in each policy update as follows:
\begin{itemize}
\item Perform policy update using E epochs of minibatch gradient ascent to optimize
$h_n(\pi_{n,i}^{t+1};\pi^{t},\pi_{n,i-1}^{t+1})$
in (\ref{eq:finallocalobj}).
\item Compute\\$d = \mathbb{E}_{s\thicksim\rho_{\pi^{t+1}_{n,i-1},\mu_{n},P_{n}}}\left[D_{KL} \left(\pi_{n,i-1}^{t+1}(\ensuremath{\cdot}\ensuremath{\vert}s)\lVert\pi_{n,i}^{t+1}(\ensuremath{\cdot}\ensuremath{\vert}s)\right)\right]$
\begin{itemize}
\item $\text{If }d<d_{local}/1.1,c_{2}\leftarrow c_{2}/2$
\item $\text{If }d>d_{local}\times1.1,c_{2}\leftarrow c_{2}\times2$
\end{itemize}
\item Compute $d = \sqrt{\frac{1}{2}D_{KL}^{}(\pi^{t} (\cdot\vert s) \lVert\pi_{n,i}^{t+1} (\cdot\vert s))}$
\begin{itemize}
\item $\text{If }d<d_{global}/1.1,c_{1}\leftarrow c_{1}/2$
\item $\text{If }d>d_{global}\times1.1,c_{1}\leftarrow c_{1}\times2$
\end{itemize}
\end{itemize}
Specifically, in each training round, FedKL performs several iterations of SGD update. After each iteration, FedKL updates its estimation of $c_{1}$ and $c_{2}$ for the next iteration so that the target local and global KL divergence are achieved. 

\begin{algorithm}[H]
\caption{\label{alg:FedPG}FedKL using adaptive penalty coefficients.}
\begin{algorithmic}[1]
	\REQUIRE{$K,I,T,E,\pi^0$}
	\FOR{round t = 0,1,...}
		\STATE{Selects a subset of K agents according to some criteria.}
		\STATE{Synchronize the global model to every selected agents.}
		\FOR{agent k = 1,2,...,K}
			\FOR{iteration i = 1,2,...,I}
				\STATE{Run policy $\pi_{k,i-1}^{t+1}$ in environment for T timesteps.}
				\STATE{Optimize $h_k(\pi_{k,i}^{t+1};\pi^{t},\pi_{n,i-1}^{t+1})$ for $E$ epochs to obtain $\pi_{k,i}^{t+1}$.}
			\ENDFOR
			\STATE{Upload $\theta_{k}^{t+1}$ and $l_k$ to the central server.}
		\ENDFOR
		\STATE{The central server aggregates the $\theta$'s as $\theta^{t+1}=\sum_{k=1}^{K}\frac{l_k}{L}\theta_{k}^{t+1}$.}
	\ENDFOR
\end{algorithmic}
\end{algorithm}
\begin{rem}
A larger target KL divergence ($d_{global}$ and $d_{local}$) will lead to a smaller penalty coefficient on the KL divergence ($c_1$ and $c_2$). Given the KL divergence between two policies before and after training represents the learning step size, the target KL divergence can be regarded as a target learning step size. Thus, $d_{global}$ and $d_{local}$ can help maintain a constant step size for each training round and iteration, respectively.

\end{rem}
The heuristics for tuning $d_{local}$ and $d_{global}$ will be discussed in the experiments section. The training procedure of FedKL is illustrated in Algorithm \ref{alg:FedPG}.

\section{Experiments\label{sec:Experiments}}

\begin{figure*}[!t]
\centering
\subfloat[]{\includegraphics[width=0.48\columnwidth]{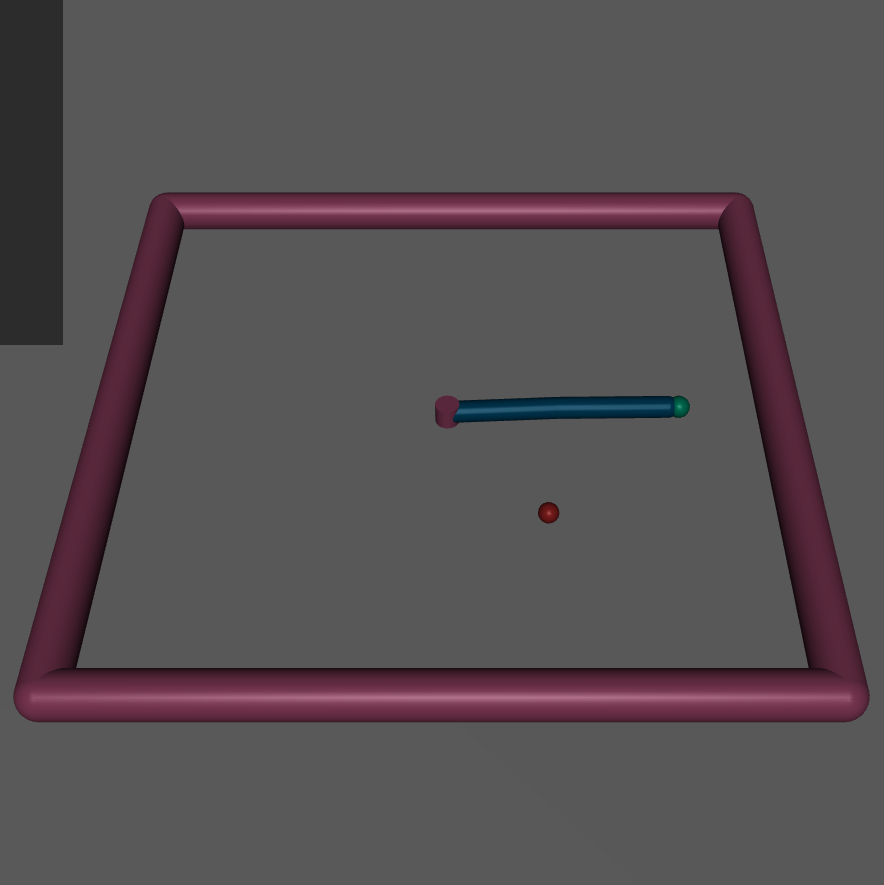}
\label{fig:reacherv2}}
\hfil
\subfloat[]{\includegraphics[width=0.48\columnwidth]{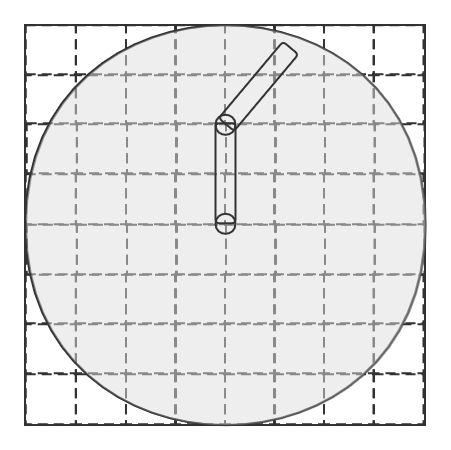}
\label{fig:reacher}}
\caption{Reacher's field splitting. Sub-fields in grey are reachable by the robotic arm. Sub-environments are created based on these sub-fields.}
\label{fig:reachers}
\end{figure*} 

In the experiments, we will compare the proposed FedKL with other algorithm-level solutions for tackling the heterogeneity issue, including FedAvg, FedProx and FMARL. In particular, agents of the FedAvg system will perform local training utilizing the algorithm proposed in \cite{DBLP:journals/corr/SchulmanWDRK17}, with the local penalty term. Agents of the FedProx system will deploy the same algorithm as FedAvg and add the proximal term \cite{MLSYS2020_38af8613}. Agents of the FMARL system will deploy the same algorithm as FedAvg and utilize the gradient decay scheme in \cite{DBLP:journals/corr/abs-2103-13026}. On the other hand, agents of the proposed FedKL system will implement the same algorithm as FedAvg, but with the global KL divergence penalty. All code, experiments and modified environments can be found at: https://github.com/ShiehShieh/FedKL.

\subsection{Experiment Platforms}
To the best of authors' knowledge, there are few works focusing on the
data heterogeneity issue in FRL, making it difficult to find
benchmarks for comparison purpose. Therefore, we made several light-weight modifications to popular RL simulators to accommodate the FRL setting. There are two
widely used simulation environments for evaluating the performance
of RL controllers, namely the MuJoCo simulator \cite{6386109,DBLP:journals/corr/BrockmanCPSSTZ16}
and the Flow simulator \cite{pmlr-v87-vinitsky18a}. In the following, we introduce the two simulators and the settings for FRL experiments.

\subsubsection{Heterogeneous Robotic Tasks}
In Experiment 1, we consider a robotic task in the Reacher-v2 environment provided by OpenAI Gym. As shown in Fig. \ref{fig:reacherv2}, the agent actuates a robotic arm to move the end-effector to reach a target, which is denoted by a red sphere and spawn at a random position in a $0.4\times0.4$ field.
The environment consists of a 11-dimensional state space, a 2-dimensional action space and a reward function that determines the reward based on the length
of the action vector as well as the distance between the end-effector and the target.

To simulate the heterogeneity due to different initial state distributions, we split the field into $Q$ sub-fields and create $Q$ sub-environments.  For each sub-environment, we assume the robotic arm will always start from one of the $Q$ specific sub-fields and try to reach the target. Note that this corresponds to an extreme setup for the different initial state distributions. The case with $Q=60$ is illustrated in Fig. \ref{fig:reacher}. To simulate the
heterogeneity due to different environment dynamics, we add noise to each action as in \cite{Lim2021FederatedRL}. Specifically, we add noise to distort all actions such that they are different for different devices. Let $m_{noise}$ and $\sigma_{noise}$ denote the mean and variance of the added noise. We assume $m_{noise}=0$ and pick different variances to represent different levels of heterogeneity. 


\subsubsection{Traffic Control by Isolated RL Agents}
In Experiment 2, we consider the traffic control in the ``figure eight'' network  \cite{pmlr-v87-vinitsky18a}. As shown in Fig. \ref{fig:vps}, there are in total 14 vehicles on the road with half of them controlled by RL agents and the rest controlled by the underlying Intelligent Driver Model (IDM) controller provided by the SUMO library. The IDM controller can be regarded as the human-driver model. This experiment is designed for mixed-autonomy, where a centralized controller coordinates a set of connected autonomous vehicles (CAVs) to improve the system-level velocities.

\begin{figure*}[!t]
\centering
\subfloat[]{\includegraphics[width=0.48\columnwidth]{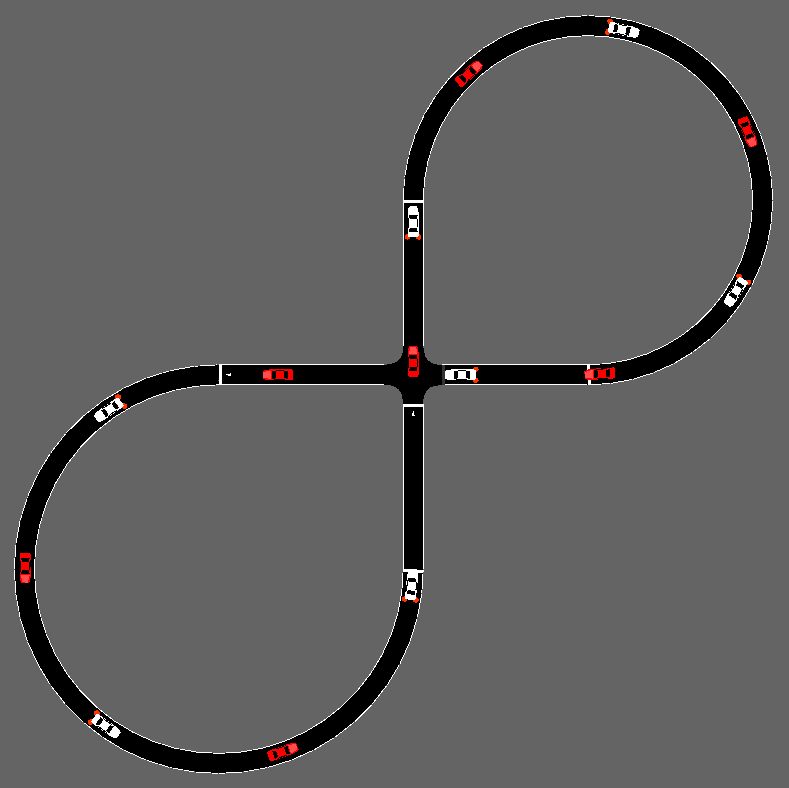}
\label{fig:schema1}}
\hfil
\subfloat[]{\includegraphics[width=0.48\columnwidth]{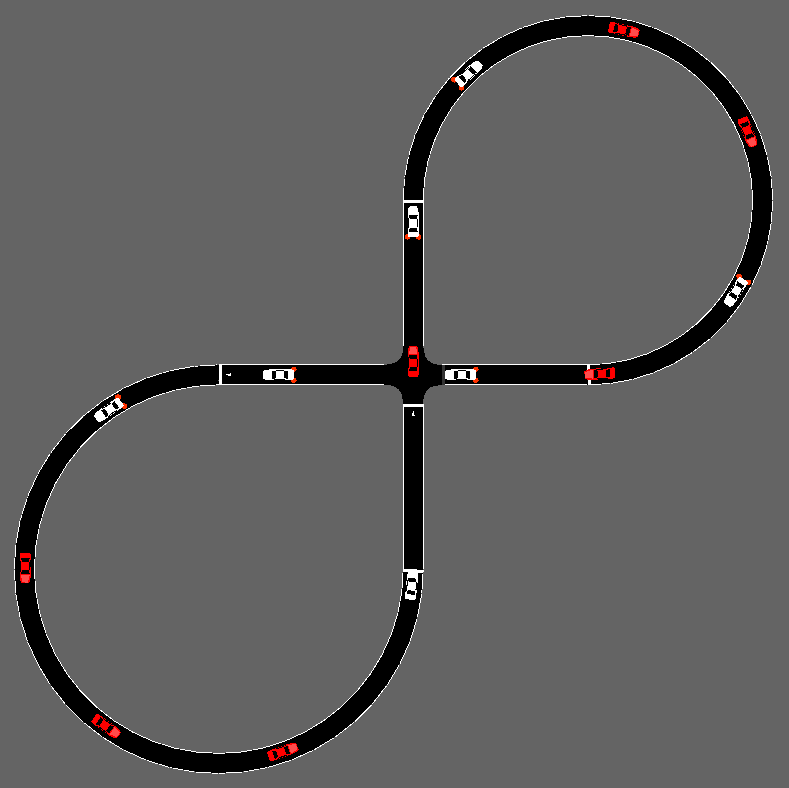}\label{fig:schema2}}
\caption{Vehicle Placement Schema. Red cars are human-driven vehicles. White cars are RL-controlled vehicles.}
\label{fig:vps}
\end{figure*} 

Some modifications are required to run FRL experiments in the above network. Fortunately, there are a few modifications proposed by \cite{DBLP:journals/corr/abs-2103-13026}, where autonomous vehicles (AVs) are not connected by a centralized controller but isolated agents. Each RL-controlled vehicle is able to observe the position and speed of itself, and that for the two vehicles ahead and behind it. As a result, each state is represented by a 6-dimensional vector. Since each agent is controlled by its local policy, every action is a scalar specifying the acceleration for the current time-step. Further assume that the average speed of all vehicles is accessible to all AVs.

By default, AVs and human-driven vehicles are positioned alternatively,
as shown in Fig. \ref{fig:schema1}. If we utilize `h' and `r' to represent the human-driven vehicle and AVs, respectively, the default placement can be written as [h,r,h,r,h,r,h,r,h,r,h,r,h,r]. To simulate the heterogeneity due to different initial state distributions, we fix the starting position of all AVs such that everyone starts from the same position in each run. 
Given the AVs have access to the information about the two vehicles before and after, we may create different environment dynamics by adjusting the placement pattern. For example, the car placement [h,r,h,h,r,r,h,r,h,h,h,r,r,r], as shown in Fig. \ref{fig:schema2}, will lead to different environment dynamics.

\subsubsection{Implementation Details}
For both experiments, we use neural networks to represent policies as in \cite{DBLP:journals/corr/SchulmanLMJA15,pmlr-v87-vinitsky18a}. Specifically, fully-connected Multilayer Perceptrons (MLPs) with $\tanh$ non-linearity are utilized for two experiments with  hidden layers (64, 64) and (100, 50, 25) \cite{pmlr-v87-vinitsky18a}, respectively. Generalized Advantage Estimation (GAE) \cite{DBLP:journals/corr/SchulmanMLJA15} is used to estimate the advantage function. The value functions for two experiments are represented by two MLPs with $\tanh$ non-linearity and hidden layers (64, 64) and (256, 256), respectively. 
The learning rate of one algorithm will remain the same for one simulation without decaying, but it will be tuned for different simulations. The hyperparameters $d_{local}$, $d_{global}$ and $\mu$\footnote{This is a hyperparameter defined by FedProx to control the weight of the proximal term.} are carefully tuned so that they are near-optimal. In each training round, we set $K=3,I=20$, and $T=2048$ for Experiment 1, and $K=7,I=50$, and $T=1500$ for Experiment 2. For both experiments, we set the discount factor $\gamma$, batch size, and the discount factor for GAE, $\lambda$ in \cite{DBLP:journals/corr/SchulmanMLJA15}, as 0.99, 64, and 0.95, respectively.

\subsection{Level of Heterogeneity and Its Impact}

\begin{figure}[!t]
\centering
\includegraphics[width=1.0\columnwidth]{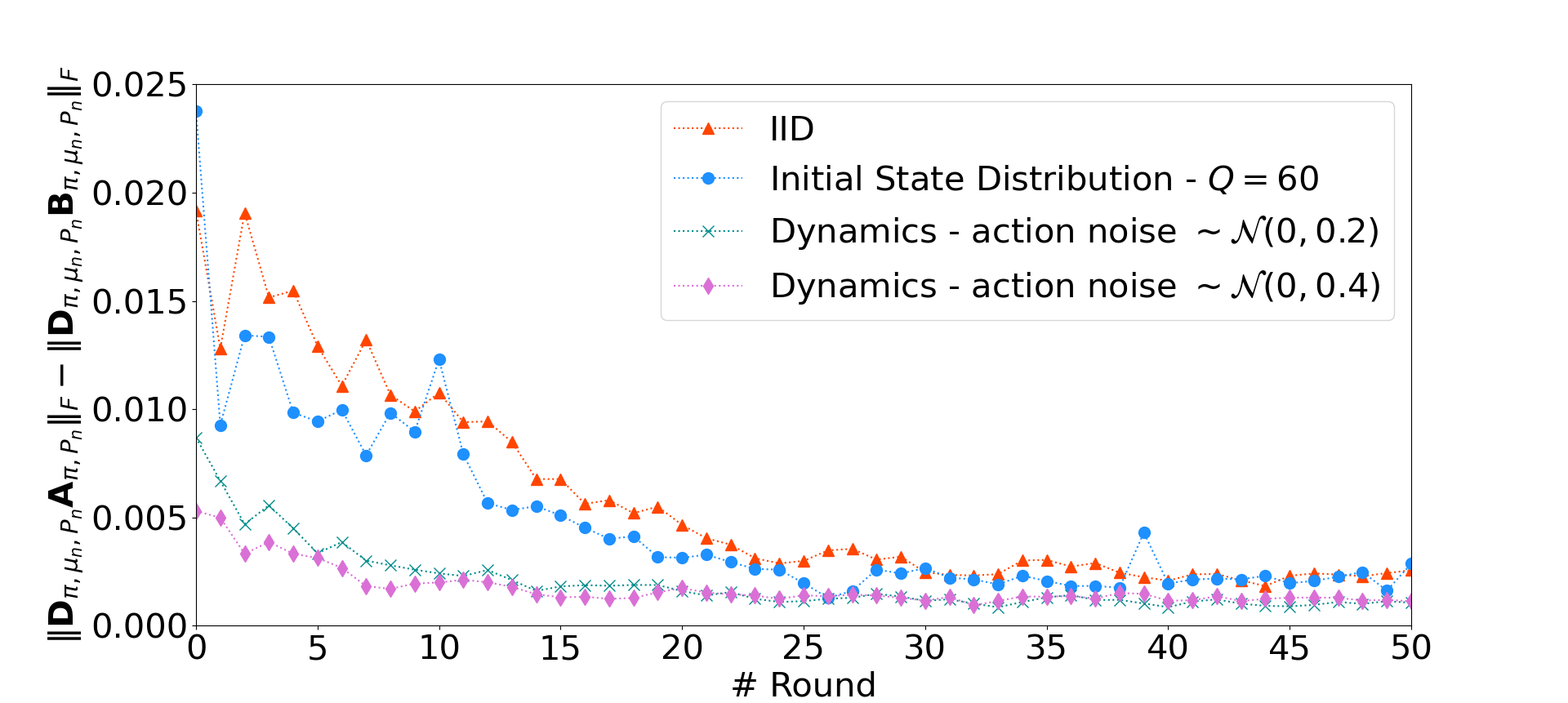}
\hfil
\caption{Visualization of $\left\Vert\mathbf{D}_{\pi,\mu_{n},P_{n}} \mathbf{A}_{\pi,P_{n}}\right\Vert_{F}-\left\Vert\mathbf{D}_{\pi,\mu_{n},P_{n}}\mathbf{B}_{\pi,\mu_{n},P_{n}}\right\Vert_{F}$ on modified Reacher-v2 environments. Results are averaged across three runs.}
\label{fig:heterogeneity_level}
\end{figure}

\begin{figure}[!t]
\centering
\includegraphics[width=1.0\columnwidth]{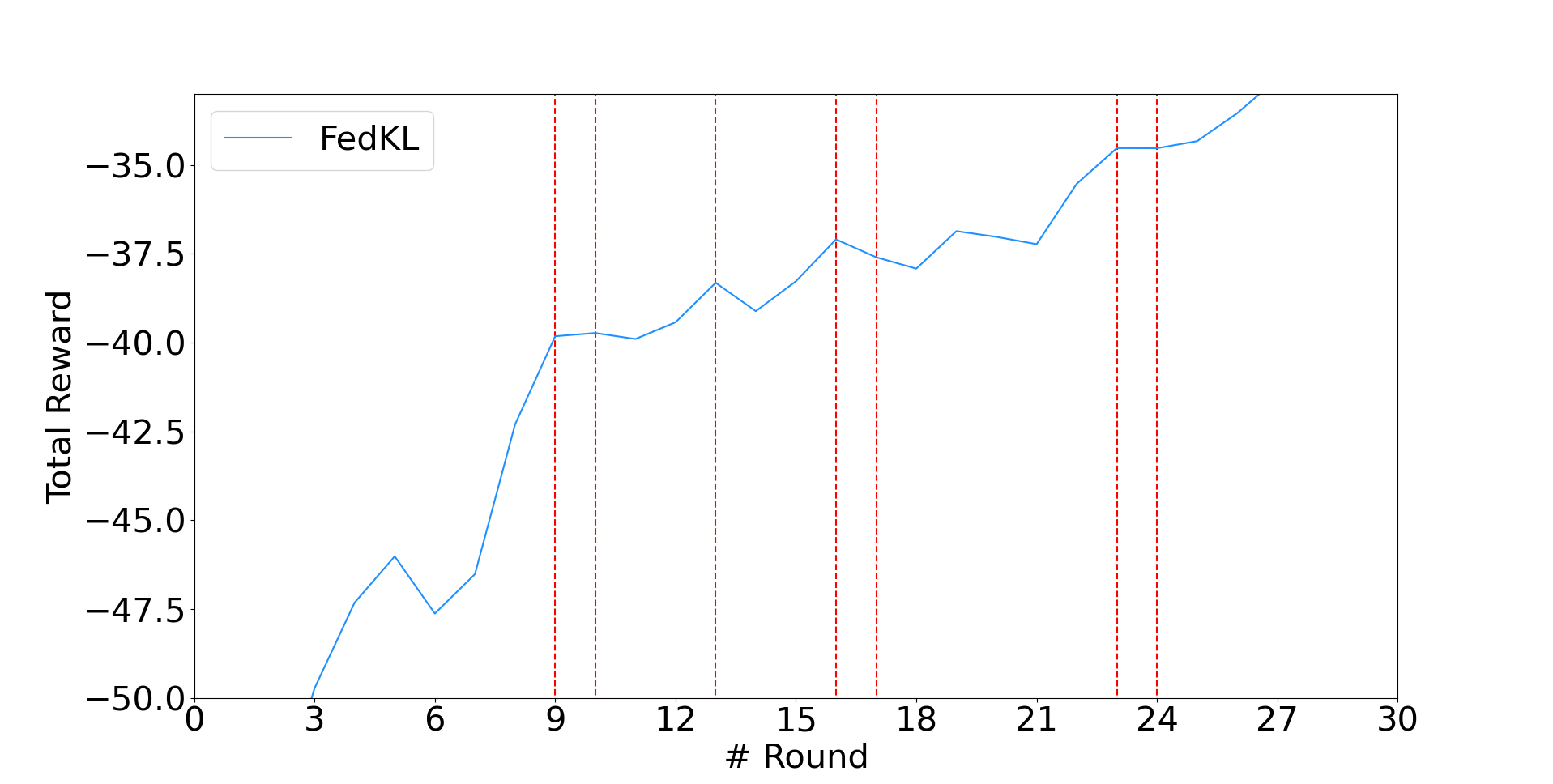}
\hfil
\caption{The effect of negative $G$  on modified Reacher-V2 environments with different environment dynamics (action noise $\thicksim\mathcal{N}(0, 0.4)$). Red lines indicate the rounds where $G<0$ happens.}
\label{fig:effect_neg_da-b}
\end{figure}

In this section, we will investigate the type and level of heterogeneity and how they affect the performance of FRL. For ease of illustration, we will use Experiment 1 to analyze the heterogeneity type and level. Detailed performance comparison between different algorithms and in-depth discussion regarding the key parameters will be given in the next section based on results from Experiment 2.

\begin{figure}[!t]
\centering
\includegraphics[width=1.0\columnwidth]{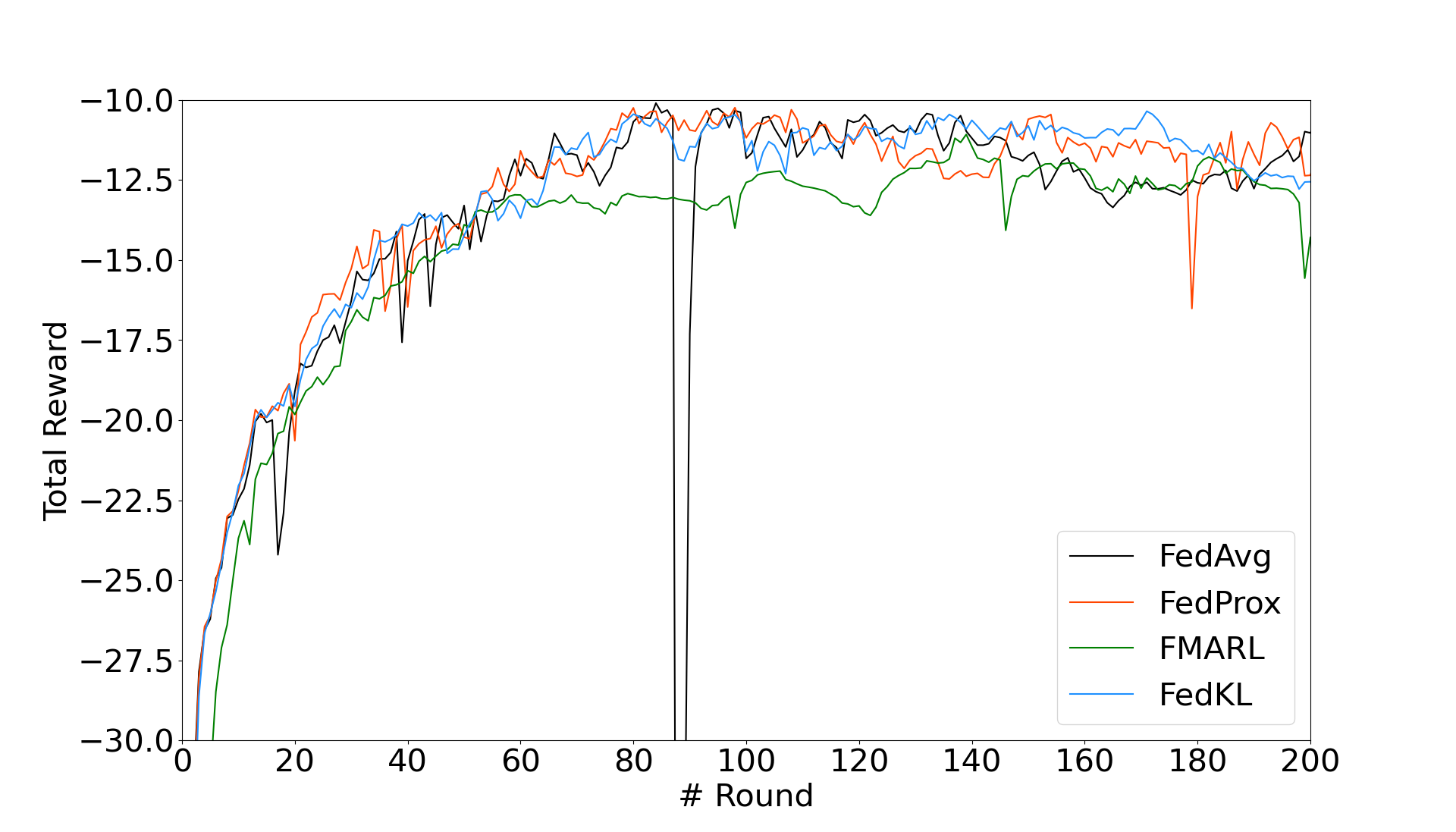}
\hfil
\caption{Comparison of FedAvg, FedProx and FedKL on modified Reacher-V2 environments with different initial state distributions $Q=60$. For all algorithms, the learning rate is 0.001. For FedProx, $\mu=0.02$. For FMARL, $\lambda=0.9999$. For FedKL, $d_{global}=0.6$. Results are averaged across three runs.}
\label{fig:experiment_mujoco_initstate}
\end{figure}

\begin{figure}[!t]
\centering
\includegraphics[width=1.0\columnwidth]{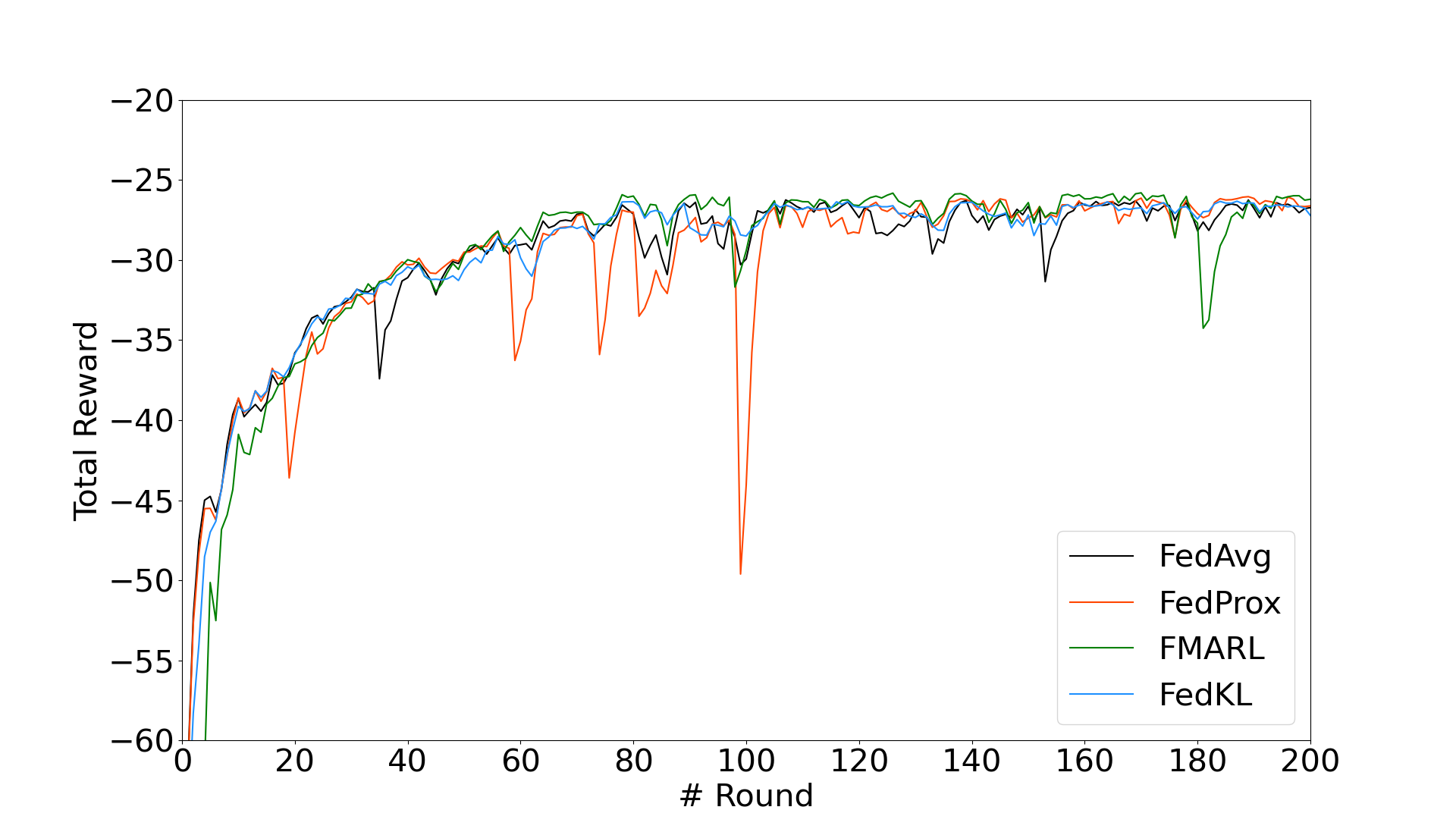}
\hfil
\caption{Comparison of FedAvg, FedProx and FedKL on modified Reacher-V2 environments with different environment dynamics (action noise $\thicksim\mathcal{N}(0, 0.4)$). For all algorithms, the learning rate is 0.001. For FedProx, $\mu=0.02$. For FMARL, $\lambda=0.9999$. For FedKL, $d_{global}=0.6$. Results are averaged across three runs.}
\label{fig:experiment_mujoco_dynamics}
\end{figure}

To illustrate the impact of data heterogeneity, we show in Fig. \ref{fig:heterogeneity_level} the changes of $G = \left\Vert \mathbf{D}_{\pi,\mu_{n},P_{n}} \mathbf{A}_{\pi,P_{n}}\right\Vert_{F} - \left\Vert \mathbf{D}_{\pi,\mu_{n},P_{n}} \mathbf{B}_{\pi,\mu_{n},P_{n}}\right\Vert _{F}$. Each term in (\ref{BleD1}) is rescaled by $\mathbf{D}_{\pi,\mu_{n},P_{n}}$ to overcome the numerical instability introduced by the inverse operation $\mathbf{D}_{\pi,\mu_{n},P_{n}}^{-1}$. Note that according to Corollary \ref{cor}, the local training is useful only when $\left\Vert \mathbf{A}_{\pi,P_{n}}\right\Vert_{F} - \left\Vert \mathbf{B}_{\pi,\mu_{n},P_{n}}\right\Vert _{F}>0$. Thus, we may regard $G$ as the potential of one agent to contribute to the global policy. In particular, if we treat $\left\Vert \mathbf{D}_{\pi,\mu_{n},P_{n}} \mathbf{A}_{\pi,P_{n}}\right\Vert_{F}$ as the local training potential, then $G$ is the local potential penalized by the heterogeneity level, i.e., even if there is potential improvement from local training, it may not be helpful for the global policy when the heterogeneity level is high. We consider four different scenarios, including the IID case, the case with different initial state distributions ($Q=60$), and two cases with different level of environment dynamics, where the heterogeneity level is controlled by the variance of the action noises. It can be observed that $G$ decreases with training because training will reduce agents' potential to further contribute to the global policy. On the other hand, comparing the two cases with different environment dynamics, we notice that higher level of heterogeneity will lead to smaller $G$, indicating that heterogeneity will limit the potential contribution of one agent. Note that the comparison between different types of heterogeneity may not be meaningful, as their local advantage function may have different ranges. Fluctuations of the curves are due to limited number of simulations and matrix estimation error.

To further illustrate the impact of the heterogeneity level, we show in Fig. \ref{fig:effect_neg_da-b} the training performance with high level of action noise. The red lines indicate the training rounds where $G<0$ happens in selected agents. It can be observed that, for those rounds, the global training will slow down, which agrees with Corollary \ref{cor}, i.e., $G<0$ indicates that the associate training is not helpful for the global policy. In Figs. \ref{fig:experiment_mujoco_initstate} and \ref{fig:experiment_mujoco_dynamics}, we show the performance comparison of FedAvg, FedProx, FMARL and FedKL with different initial state distributions and different transition probabilities, respectively. It can be observed that with the same learning step size, FedKL can converge to a stable performance, but both FedAvg and FedProx may diverge or fluctuate. We can certainly reduce the step size for FedAvg and FedProx, but this will lead to slow learning speed. See next section for more related discussions.  

\subsection{Advantage of the KL-based Global Penalty}

\begin{figure}[!t]
\centering
\includegraphics[width=1.0\columnwidth]{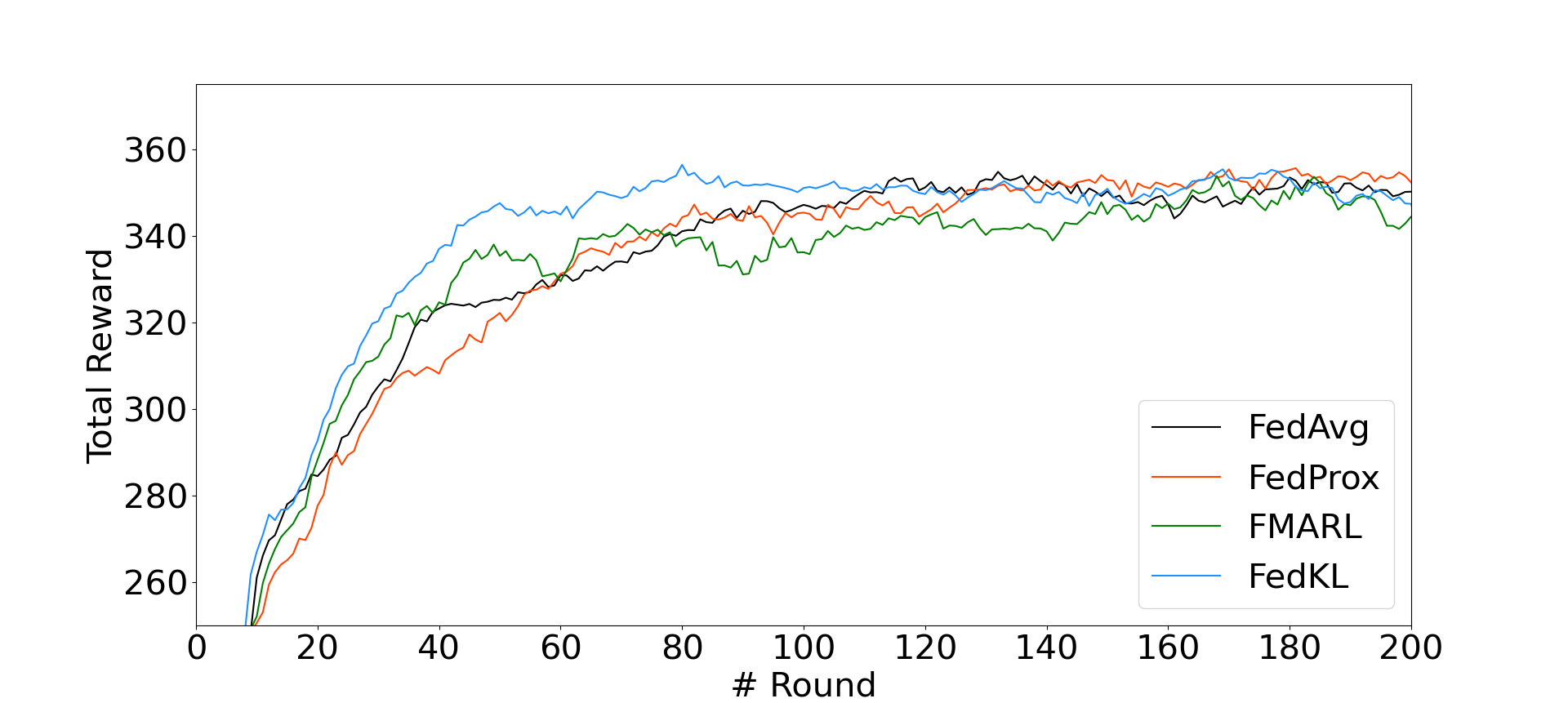}
\caption{Learning curves of FedAvg, FedProx and FedKL for Experiment 2. The best performance is shown for each algorithm. For all algorithms, the learning rate is 0.01. For FedAvg, $d_{local}=0.0002$. For FedProx, $d_{local}=0.0002,\mu=0.001$. For FedKL, $d_{local}=0.0003,d_{global}=0.15$. Results are averaged across three runs.}
\label{fig:experiment_flow}
\end{figure} 

\begin{figure}[!t]
\centering
\includegraphics[width=1.0\columnwidth]{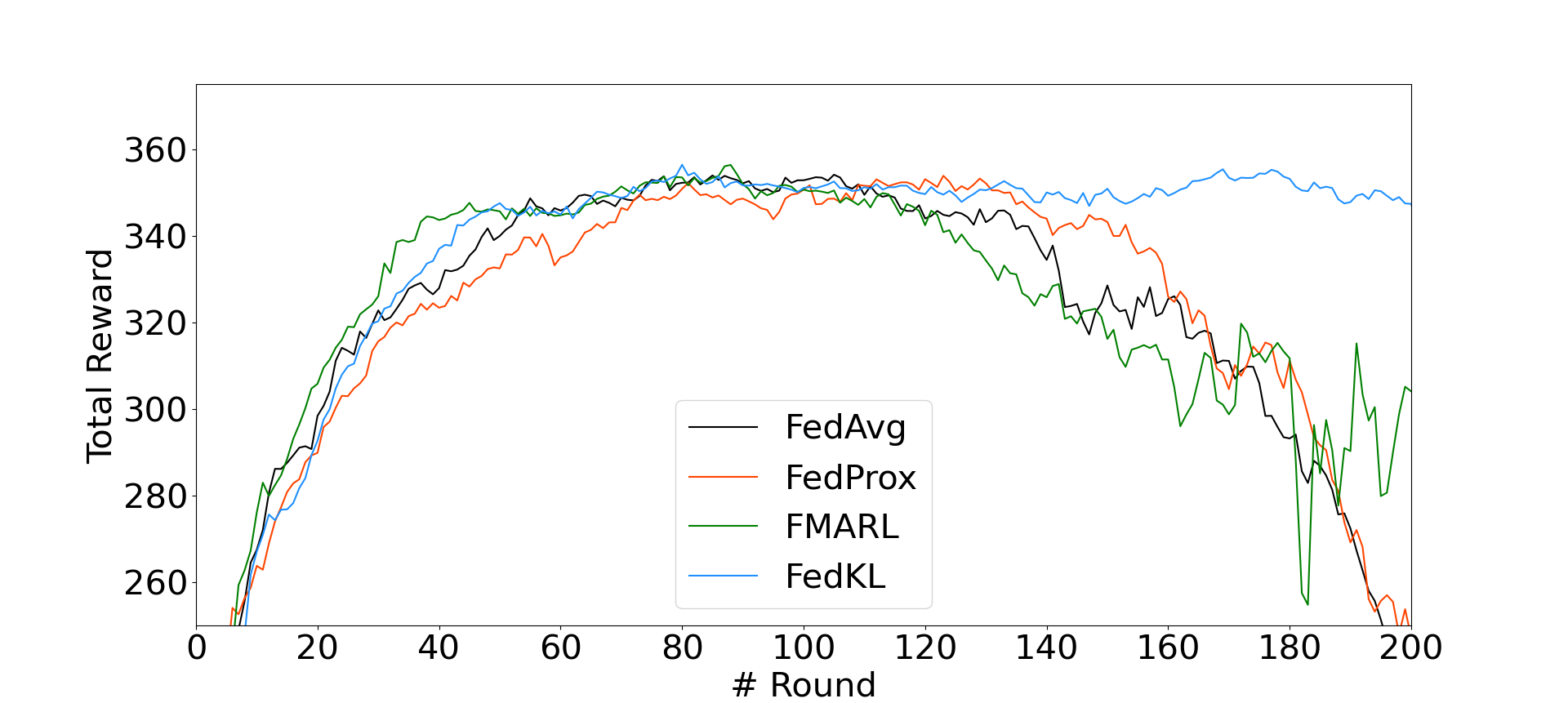}
\caption{The settings are the same as Fig. \ref{fig:experiment_flow}, but all algorithms have same local penalty, i.e. $d_{local}=0.0003$.}
\label{fig:experiment_flow_same_params}
\end{figure} 

In the following, we show the results of Experiment 2 where both types of heterogeneity are present. The best performance of each algorithm is shown in Fig. \ref{fig:experiment_flow}, where the averaged total reward over three runs is reported. Each run uses the same seed for the SUMO simulator but different seeds for the parameter initialization of the neural network. It can be observed that FedKL converges much faster than FedAvg, FedProx and FMARL. The faster convergence of FedKL indicates a larger step size. Then, a natural question is whether we can increase the step size of FedAvg, FedProx and FMARL to speed up their training.  In Fig. \ref{fig:experiment_flow_same_params}, we show the results where the $d_{local}$ for FedAvg, FedProx and FMARL are increased. As a result, all of them converge at a similar speed as FedKL, but they diverge after reaching the maximum reward. The above results provide empirical evidence that directly constraining the model output by KL divergence is more effective in handling the data heterogeneity than penalizing model divergence in the parameter space. Furthermore, applying a proper global KL penalty can prevent divergence from happening without loss of learning speed. 

{\bf{Why KL Penalty is better?}} It can be observed from Fig. \ref{fig:experiment_flow_same_params} (after Round 120) that the proximal term of FedProx can slow down the divergence, but failed in avoiding it. This is because the proximal term $\left\Vert\theta_{n}^{t+1}-\theta^{t}\right\Vert$ constrains the learning in the model parameter space by the $\mathcal{L}2$-norm. Although the change in the model parameter space will lead to changes in the output distribution space (actions) and eventually influence the training objective, it is more effective to directly constrain the training in the distribution space \cite{kakade2001natural,bagnell2003covariant}, which justifies the performance advantage of the proposed KL-divergence based penalty.  

\begin{figure*}[!t]
\centering
\includegraphics[width=1.2\columnwidth]{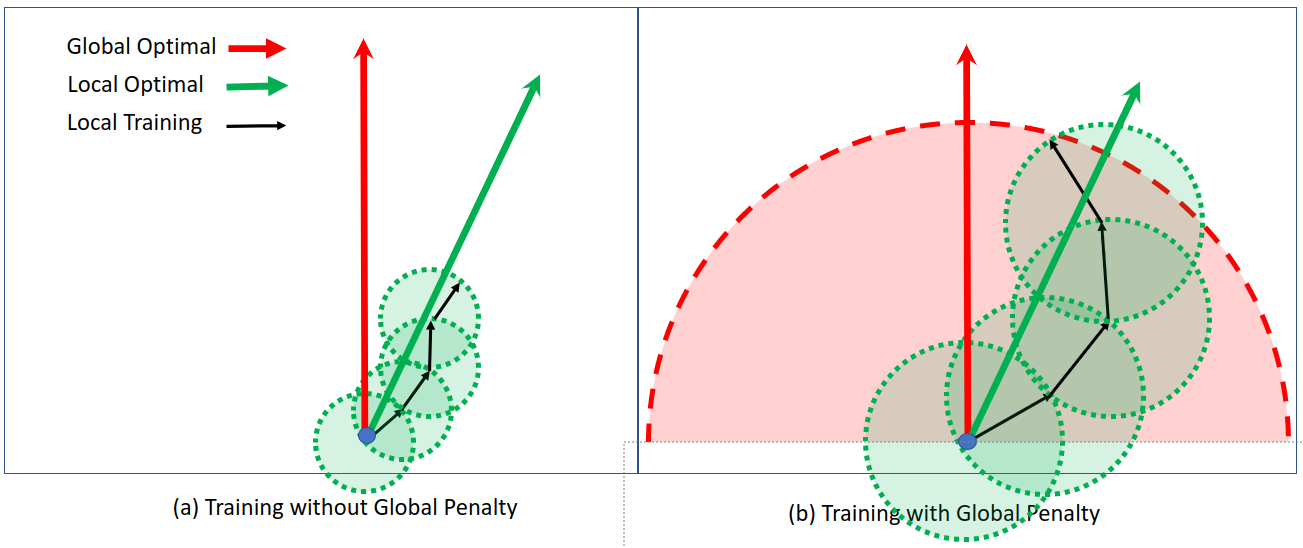}
\label{fig:d_local}
\label{fig:d_global}
\caption{The effect of $d_{global}$ and $d_{local}$. (a) Without the global penalty, agents must use a small $d_{local}$ at the cost of slow learning. (b) With the global penalty, agents can explore a wider region (faster learning) without worrying about divergence.}
\label{fig:d_local_d_global}
\end{figure*} 

{\bf{Why Global Penalty Helps?}}  We now investigate how $d_{global}$ helps speed up convergence and stabilize training. Assume there are $I$ iterations of local training in each round and let $d_i$ denote the policy update in the $i$-th iteration. We consider two cases, without and with the global penalty (target step size), respectively, as illustrated in Fig. \ref{fig:d_local_d_global}. We use the red and green arrows to represent the optimal optimization direction of the global and local objectives, respectively. Note that they are not on the same direction due to data heterogeneity.    
For the first case, we only apply the local penalty which constrains the step size for each iteration with $|d_i| \le d_{local}$, i.e., the radius of the green circle in part (a) of Fig. \ref{fig:d_local_d_global}. Let $d_{max}$ denote the highest tolerable step size in one training round, without causing training divergence. Thus, we need $d_{local} \le d_{max}/I$, which is determined by considering the worst case scenario where all updates are on the same direction. However, the worst case happens with a very low chance. As a result, $d_{local}$ is pessimistically small for most training. Performance of FedAvg in Fig. \ref{fig:experiment_flow} corresponds to the case that $d_{local}$ is small enough to avoid divergence, but causes slow convergence. Performance of FedAvg in Fig. \ref{fig:experiment_flow_same_params} corresponds to the case that $d_{local}$ is large enough for fast training, but causes divergence. 

Now consider the second case with $d_{global}$, denoted by the radius of the red circle in part (b) of Fig. \ref{fig:d_local_d_global}. Under such circumstances, we only need to guarantee $\left| \sum_{i}^I d_i \right| \le d_{global} \le d_{max}$ and $\left|d_i\right| \le d_{local}$.  As a result, $d_{local}$ can be much larger than $d_{max}/I$, leading to a much larger per-iteration step size, represented by the larger radius of the green circle in part (b) of Fig. \ref{fig:d_local_d_global}. Thus, adding $d_{global}$ enables a larger $d_{local}$ and their joint efforts can achieve a better trade-off between training speed (step size) and convergence.  

{\bf{Heuristics for Tuning Target Step Size:}}
The global KL divergence is a good indicator of model divergence and we usually observe a large global KL divergence followed by a drastic performance degradation. This observation suggests a heuristic way for tuning the global step-size target $d_{global}$. For example, one can first run the algorithm without activating the global KL penalty (set $c_{1}$ to 0). Then, one may find the optimal value of $d_{local}$ and observe the average value of the KL divergence between $\pi^{t}$ and $\pi^{t+1}_{n}$ for certain number of rounds. It was observed that using a $d_{global}$ slightly larger than the average, together with a $d_{local}$ larger than the optimal value found in previous runs, can significantly speed up learning in the early stage of training. 

\subsection{Communication-Efficient FRL}
There are many benefits to solving the data heterogeneity issue, including faster convergence, stable performance, etc. Faster convergence will also lead to fewer rounds of communication and mitigate the communication bottleneck issue for FRL. For example, to achieve a total reward of 350 in Experiment 2, it needs around 110 rounds of communication for both FedAvg and FedProx, but only 50 rounds for FedKL, representing a saving of more than $50\%$ of the communication workload.

Local agents in FedKL use the Actor-Critic architecture \cite{NIPS1999_6449f44a,NIPS1999_464d828b}, where the actor refers to the policy $\pi$ and the critic refers to the value function or the advantage function. One possible way to reduce communication workload is to keep the critic local since it is not needed for inference. We will investigate this idea in future works.

Another issue for large-scale FRL applications like autonomous driving is the model aggregation over many agents moving in a large area. A possible solution is the hierarchical FL scheme \cite{9148862}, where aggregations happen in two layers. First, local aggregations among agents within a small region are performed on a local server. Then, the aggregation results from multiple local servers are sent to a global server for global aggregation. The local and global aggregations can happen with different frequencies, which may significantly reduce the communication workload. Unfortunately, the convergence analysis for the hierarchical scheme with data heterogeneity is not available, and it will be interesting to investigate the performance of FedKL under such a hierarchical framework.



\section{Conclusion}

In this paper, we investigated the data heterogeneity issue in FRL systems. For that purpose, we first classified the types of data heterogeneity in FRL and defined the heterogeneity level. It was proved that although optimizing the local policy may improve the global policy, the deviation of the local policy from the global one must be properly constrained. A necessary condition for the local policy improvement to be beneficial for the global objective was also derived with respect to the proposed heterogeneity level. Based on the theoretical result, a KL-divergence based global penalty term was proposed, which was shown to be able to enable a larger local step size without causing model divergence. The convergence proof of the proposed algorithm is also provided. Experiment results demonstrated the advantage of the proposed scheme in speeding up and stabilizing the training. One conclusion we can draw from this work is that heterogeneity not only reduces the contribution of one agent's local training to the global policy, but also increases the risk of divergence. It is thus important to balance between learning speed and convergence with a proper learning step size. Due to the lack of global information in the agents, a feasible solution is to maximize the learning speed and avoid divergence by jointly tuning the local and global penalty. In the future, a big challenge for implementing large-scale FRL is the model aggregation among many moving agents and it is critical to develop distributed aggregation schemes with heterogeneous data.


%
{\appendices
\section{Proof of Theorem \ref{thm:1}\label{proof:theorem1}}
\begin{proof}
Following \cite{baumgartner2011inequality}, we define the ``absolute'' value of a matrix $\mathbf{X}$ as
\begin{eqnarray}
\label{absolute}
\left| \mathbf{X} \right| = (\mathbf{X}^{\dagger}\mathbf{X})^{\frac{1}{2}}, \qquad
\left|\mathbf{X}^{\dagger}\right| = (\mathbf{X} \mathbf{X}^{\dagger})^{\frac{1}{2}},
\end{eqnarray}
where $\mathbf{X}^{\dagger}$ denotes the conjugate transpose of $\mathbf{X}$. In other words, $\left| \mathbf{X} \right|$ can be regarded as the square-root of $\mathbf{X}^{\dagger}\mathbf{X}$. 
For a real matrix $\mathbf{X}$, $\mathbf{X}^{T} \mathbf{X}$ is symmetric and positive semi-definite. Thus, there must be a unique square root $\left| \mathbf{X} \right|$ of $\mathbf{X}^{T} \mathbf{X}$ that is real and positive semi-definite. For a given scalar $x$, the above operator will degenerate to the absolute value operator $\left| x \right| = (xx)^{\frac{1}{2}}$. 

To understand the impact of the local optimization on the global objective, we define the gap between the global and local policy advantage for taking the updated local policy of the $n$-th agent $\pi_{n}^{t+1}$ as 
\begin{eqnarray}
\label{G}
G(\pi_{n}^{t+1}) = \left|\sum_{k=1}^{N}q_{k}\mathbb{A}_{\pi^{t},\mu_k, P_{k}}(\pi_{n}^{t+1})-\mathbb{A}_{\pi^{t},\mu_n,P_{n}}(\pi_{n}^{t+1})\right|.
\end{eqnarray}
Utilizing the vector notation from (\ref{eq:vectorpolicyadvantage}), we can calculate the gap as
\begin{alignat}{1}
 & G(\pi_{n}^{t+1})\nonumber\\
 = & \left| \mbox{Tr} \left(\sum_{k=1}^{N} q_{k} \mathbf{D}_{\pi^{t},\mu_{n},P_{n}} \mathbf{D}^{-1}_{\pi^{t},\mu_{n},P_{n}} \mathbf{D}_{\pi^{t},\mu_{k},P_{k}} \mathbf{A}_{\pi^{t},P_{k}} \mathbf{\Pi}_{\pi_{n}^{t+1}}^{T} \right.\right.\nonumber\\
 & \left.\left. -\mathbf{D}_{\pi^{t},\mu_{n},P_{n}} \mathbf{A}_{\pi^{t},P_{n}} \mathbf{\Pi}_{\pi_{n}^{t+1}}^{T} \right)\right| \\
 = & \left| \mbox{Tr} \left(\sum_{k=1}^{N} q_{k} \mathbf{\Pi}_{\pi_{n}^{t+1}}^{T} \mathbf{D}_{\pi^{t},\mu_{n},P_{n}} \mathbf{D}^{-1}_{\pi^{t},\mu_{n},P_{n}} \mathbf{D}_{\pi^{t},\mu_{k},P_{k}} \mathbf{A}_{\pi^{t},P_{k}} \right.\right.\nonumber\\
 & \left.\left. -\mathbf{\Pi}_{\pi_{n}^{t+1}}^{T} \mathbf{D}_{\pi^{t},\mu_{n},P_{n}} \mathbf{A}_{\pi^{t},P_{n}} \right)\right|\\
 = & \left| \mbox{Tr}\left(\mathbf{B}_{\pi^{t},\mu_{n},P_{n}} \mathbf{\Pi}_{\pi_{n}^{t+1}}^{T} \mathbf{D}_{\pi^{t},\mu_{n},P_{n}}\right) \right|, \label{eq:60}
\end{alignat}
where the second equation follows from (\ref{B}).

The Cauchy-Schwarz inequality \cite{baumgartner2011inequality} states that, for two complex matrices $\mathbf{X}$ and $\mathbf{Y}$, 
\begin{alignat}{1}
\left| \mbox{Tr} \left(\mathbf{X}^{\dagger}\mathbf{Y}\right)\right|  \le \sqrt{ \mbox{Tr}\left( \left| \mathbf{X} \right|\cdot \left| \mathbf{Y} \right| \right) \mbox{Tr} \left( \left| \mathbf{X}^{\dagger} \right|\cdot\left| \mathbf{Y}^{\dagger} \right|\right) }. \label{eq:matrixholdersine}
\end{alignat}
Accordingly, we can obtain  
\begin{alignat}{1}
 & \left| \mbox{Tr}\left(\mathbf{B}_{\pi^{t},\mu_{n},P_{n}} \mathbf{\Pi}_{\pi_{n}^{t+1}}^{T} \mathbf{D}_{\pi^{t},\mu_{n},P_{n}}  \right)\right|\nonumber\\
 \le & \sqrt{\mbox{Tr} \left(\left| \mathbf{B}_{\pi^{t},\mu_{n},P_{n}}^{T}\right| \left|\mathbf{\Pi}_{\pi_{n}^{t+1}}^{T}\mathbf{D}_{\pi^{t},\mu_{n},P_{n}}\right|\right)}\nonumber\\
 & \cdot\sqrt{\mbox{Tr}\left(\left| \mathbf{B}_{\pi^{t},\mu_{n},P_{n}}\right| \left|\mathbf{D}_{\pi^{t},\mu_{n},P_{n}}\mathbf{\Pi}_{\pi_{n}^{t+1}}\right|\right)}.\label{eq:61}
\end{alignat}


According to the definition in (\ref{absolute}), we can always find real positive semi-definite matrices $\left|\mathbf{B}_{\pi^{t},\mu_{n},P_{n}}^{T}\right|$,
$\left|\mathbf{B}_{\pi^{t},\mu_{n},P_{n}}\right|$, $\left|\mathbf{\Pi}_{\pi_{n}^{t+1}}^{T}\right|$
and $\left|\mathbf{\Pi}_{\pi_{n}^{t+1}}\right|$. By the Cauchy-Schwarz inequality, we can further obtain 
\begin{alignat}{1}
 & \mbox{Tr} \left(\left| \mathbf{B}_{\pi^{t},\mu_{n},P_{n}}^{T}\right| \left|\mathbf{\Pi}_{\pi_{n}^{t+1}}^{T}\mathbf{D}_{\pi^{t},\mu_{n},P_{n}}\right|\right) \nonumber\\
 \le & \sqrt{\mbox{Tr}\left(\left|\mathbf{B}_{\pi^{t},\mu_{n},P_{n}}^{T}\right|^{2}\right)\mbox{Tr}\left(\left|\mathbf{\Pi}_{\pi_{n}^{t+1}}^{T}\mathbf{D}_{\pi^{t},\mu_{n},P_{n}}\right|^{2}\right)},\label{eq:62}
\end{alignat}
and
\begin{alignat}{1}
 & \mbox{Tr}\left(\left|\mathbf{B}_{\pi^{t},\mu_{n},P_{n}}\right|\left|\mathbf{D}_{\pi^{t},\mu_{n},P_{n}}\mathbf{\Pi}_{\pi_{n}^{t+1}}\right|\right)\nonumber\\
 \le & \sqrt{\mbox{Tr}\left(\left|\mathbf{B}_{\pi^{t},\mu_{n},P_{n}}\right|^{2}\right)\mbox{Tr}\left(\left|\mathbf{D}_{\pi^{t},\mu_{n},P_{n}}\mathbf{\Pi}_{\pi_{n}^{t+1}}\right|^{2}\right)}.\label{eq:63}
\end{alignat}
Substituting (\ref{eq:62}) and (\ref{eq:63}) into (\ref{eq:61}) gives
\begin{alignat}{1}
G(\pi_{n}^{t+1}) & \le \left( \mbox{Tr}\left(\left|\mathbf{B}_{\pi^{t},\mu_{n},P_{n}}^{T}\right|^{2}\right)\mbox{Tr}\left(\left|\mathbf{\Pi}_{\pi_{n}^{t+1}}^{T}\mathbf{D}_{\pi^{t},\mu_{n},P_{n}}\right|^{2}\right)\right.\nonumber\\
 & \quad\left.\cdot\mbox{Tr}\left(\left|\mathbf{B}_{\pi^{t},\mu_{n},P_{n}}\right|^{2}\right)\mbox{Tr}\left(\left|\mathbf{D}_{\pi^{t},\mu_{n},P_{n}}\mathbf{\Pi}_{\pi_{n}^{t+1}}\right|^{2}\right)\right)^{\frac{1}{4}}.
\end{alignat}
Given
\begin{alignat}{1}
 & \mbox{Tr}\left(\left|\mathbf{\Pi}_{\pi_{n}^{t+1}}^{T}\mathbf{D}_{\pi^{t},\mu_{n},P_{n}} \right|^{2}\right)tr(\left|\mathbf{D}_{\pi^{t},\mu_{n},P_{n}}\mathbf{\Pi}_{\pi_{n}^{t+1}}\right|^{2})\nonumber\\
 = & \mbox{Tr}\left(\mathbf{D}_{\pi^{t},\mu_{n},P_{n}} \mathbf{\Pi}_{\pi_{n}^{t+1}} \mathbf{\Pi}_{\pi_{n}^{t+1}}^{T} \mathbf{D}_{\pi^{t},\mu_{n},P_{n}}\right)\nonumber\\
 & \cdot\mbox{Tr}\left(\mathbf{\Pi}_{\pi_{n}^{t+1}}^{T} \mathbf{D}_{\pi^{t},\mu_{n},P_{n}} \mathbf{D}_{\pi^{t},\mu_{n},P_{n}} \mathbf{\Pi}_{\pi_{n}^{t+1}}\right)\\
 = & \mbox{Tr}\left(\mathbf{D}_{\pi^{t},\mu_{n},P_{n}} \mathbf{\Pi}_{\pi_{n}^{t+1}} \mathbf{\Pi}_{\pi_{n}^{t+1}}^{T} \mathbf{D}_{\pi^{t},\mu_{n},P_{n}}\right)^{2},
\end{alignat}
and similarly 
\begin{alignat}{1}
 & \mbox{Tr}\left(\left|\mathbf{B}_{\pi^{t},\mu_{n},P_{n}}^{T}\right|^{2}\right)\mbox{Tr} \left(\left|\mathbf{B}_{\pi^{t},\mu_{n},P_{n}}\right|^{2}\right)\nonumber\\ 
 = & \mbox{Tr}\left(\mathbf{B}_{\pi^{t},\mu_{n},P_{n}} \mathbf{B}_{\pi^{t},\mu_{n},P_{n}}^{T}\right)^{2},
\end{alignat}
we have
\begin{alignat}{1}
\label{gap}
G(\pi_{n}^{t+1}) & \le \sqrt{\mbox{Tr}\left(\mathbf{B}_{\pi^{t},\mu_{n},P_{n}} \mathbf{B}_{\pi^{t},\mu_{n},P_{n}}^{T}\right)}\nonumber\\
& \quad\cdot\sqrt{\mbox{Tr}\left(\mathbf{D}_{\pi^{t},\mu_{n},P_{n}} \mathbf{\Pi}_{\pi_{n}^{t+1}} \mathbf{\Pi}_{\pi_{n}^{t+1}}^{T} \mathbf{D}_{\pi^{t},\mu_{n},P_{n}}\right)}\\ 
 & =\left\Vert \mathbf{B}_{\pi^{t},\mu_{n},P_{n}}\right\Vert_{F}\sqrt{\sum_{s,a}[\rho_{\pi^{t},\mu_{n},P_{n}}(s)\pi_{n}^{t+1}(a\vert s)]^{2}}.
\end{alignat}
Thus, we can further obtain \footnote{Note that we have used the notation $\left(\pi^{\prime}-\pi\right)(s,a)=\pi^{\prime}(a\vert s)-\pi(a\vert s),\forall s\in \mathcal{S},a\in \mathcal{A}$.}
\begin{alignat}{1}
\label{eq:82}
& G(\pi_{n}^{t+1}-\pi_{n}^{t})\\
\le & \left\Vert \mathbf{B}_{\pi^{t},\mu_{n},P_{n}}\right\Vert _{F}\sqrt{\sum_{s,a}\left[\rho_{\pi^{t},\mu_{n},P_{n}}(s) \left(\pi_{n}^{t+1}(a\vert s)-\pi^{t}(a\vert s)\right)\right]^{2}}.\nonumber
\end{alignat}

Given $\sum_{a}\pi^{t}(a\vert s)A_{\pi^{t},\mu}(s,a)=0,\forall{s\in\mathcal{S}}$, we have 
\begin{alignat}{1}
& \mathbb{A}_{\pi^{t},\mu_{n},P_{n}}(\pi_{n}^{t+1})\nonumber\\
= & \sum_{s}\rho_{\pi^{t},\mu_{n},P_{n}}(s)\sum_{a}\pi_{n}^{t+1}(a|s)A_{\pi^{t},P_{n}}(s,a)\nonumber\\
& \quad-\sum_{s}\rho_{\pi^{t},\mu_{n},P_{n}}(s)\sum_{a}\pi^{t}(a|s)A_{\pi^{t},P_{n}}(s,a) \\
= & \sum_{s}\rho_{\pi^{t},\mu_{n},P_{n}}(s)\sum_{a}\left[\pi_{n}^{t+1}(a|s)-\pi^{t}(a\vert s)\right]A_{\pi^{t},P_{n}}(s,a)\\
= & \mathbb{A}_{\pi^{t},P_{n}}\left(\left(\pi_{n}^{t+1}-\pi^{t}\right)\right).
\end{alignat}
It thus follows from (\ref{G}) that  
\begin{eqnarray}
\label{gapgap}
G(\pi_{n}^{t+1})  = G(\pi_{n}^{t+1}-\pi_{n}^{t}).  
\end{eqnarray}
By combining (\ref{gapgap}) and (\ref{eq:82}), we can obtain 
\begin{alignat}{1}
G(\pi_{n}^{t+1}) & \le \left\Vert \mathbf{B}_{\pi^{t},\mu_{n},P_{n}}\right\Vert _{F}\\
& \quad \cdot \sqrt{\sum_{s,a}\left[\rho_{\pi^{t},\mu_{n},P_{n}}(s) \left(\pi_{n}^{t+1}(a\vert s)-\pi^{t}(a\vert s)\right)\right]^{2}}.\nonumber
\end{alignat}
Based on (\ref{G}), we can obtain
\begin{alignat}{1}
& \sum_{k=1}^{N}q_{k}\mathbb{A}_{\pi^{t},\mu_{k},P_{k}}(\pi_{n}^{t+1})\nonumber\\
\ge & \mathbb{A}_{\pi^{t},\mu_{n},P_{n}}(\pi_{n}^{t+1})-\left\Vert \mathbf{B}_{\pi^{t},\mu_{n},P_{n}}\right\Vert _{F}\nonumber\\
& \cdot\sqrt{\sum_{s,a}\left[\rho_{\pi^{t},\mu_{n},P_{n}}(s) \left(\pi_{n}^{t+1}(a\vert s)-\pi^{t}(a\vert s)\right)\right]^{2}}.
\end{alignat}
Given $\sqrt{\left\vert a \right\vert^{2} + \left\vert b \right\vert^{2}} \le \sqrt{\left(\left\vert a \right\vert + \left\vert b \right\vert\right)^{2}}=\left\vert a \right\vert + \left\vert b \right\vert$, we can obtain
\begin{alignat}{1}
& \sum_{k=1}^{N}q_{k}\mathbb{A}_{\pi^{t},\mu_{k},P_{k}}(\pi_{n}^{t+1})\nonumber\\
\ge & \mathbb{A}_{\pi^{t},\mu_{n},P_{n}}(\pi_{n}^{t+1})-\left\Vert \mathbf{B}_{\pi^{t},\mu_{n},P_{n}}\right\Vert _{F}\nonumber\\
& \cdot\sum_{s,a}\rho_{\pi^{t},\mu_{n},P_{n}}(s)\left|\pi_{n}^{t+1}(a\vert s)-\pi^{t}(a\vert s)\right|.
\end{alignat}
Denotes the TV distance between $\pi(\cdot\vert s)$ and $\pi^{\prime}(\cdot\vert s)$ as
$D_{TV}(\pi(\cdot\vert s)\Vert\pi^{\prime}(\cdot\vert s))  =\frac{1}{2}\sum_{a}\left|\pi(a\vert s)-\pi^{\prime}(a\vert s)\right|$. We can finally obtain 
\begin{alignat}{1}
\label{eq:86} 
& \sum_{k=1}^{N}q_{k}\mathbb{A}_{\pi^{t},\mu_{k},P_{k}}(\pi_{n}^{t+1})\\
\ge & \mathbb{A}_{\pi^{t},\mu_{n},P_{n}}(\pi_{n}^{t+1}) - \alpha \mathbb{E}_{s\thicksim\rho_{\pi^{t},\mu_{n},P_{n}}} \left[D_{TV}(\pi(\cdot\vert s)\Vert\pi^{\prime}(\cdot\vert s))\right]\nonumber,
\end{alignat}
where $\alpha = 2\left\Vert \mathbf{B}_{\pi^{t},\mu_{n},P_{n}}\right\Vert _{F}$ 
\end{proof}

\section{Proof of Corollary \ref{cor} \label{corproof}}
\begin{proof}
It follows from (\ref{eq:vectorpolicyadvantage}) that $\mathbb{A}_{\pi^t,\mu_{n},P_{n}}(\pi_n^{t+1}) = \mbox{Tr} \left(\mathbf{D}_{\pi^t,\mu_{n},P_{n}} \mathbf{A}_{\pi^t,P_{n}}\mathbf{\Pi}_{\pi_n^{t+1}}^{T} \right).$ 
By following the same procedure from (\ref{eq:60}) to (\ref{eq:82}), we can prove  
\begin{alignat}{1}
& \mathbb{A}_{\pi^{t},\mu_{n},P_{n}}(\pi_{n}^{t+1})\nonumber\\
\le & \left\Vert \mathbf{A}_{\pi^{t},P_{n}}\right\Vert _{F}\nonumber\\
& \cdot\sqrt{\sum_{s,a}\left[\rho_{\pi^{t},\mu_{n},P_{n}}(s) \left(\pi_{n}^{t+1}(a\vert s)-\pi^{t}(a\vert s)\right)\right]^{2}}.\label{eq:90}
\end{alignat}
Thus, if $\left\Vert \mathbf{A}_{\pi^{t},P_{n}}\right\Vert _{F}\le\left\Vert \mathbf{B}_{\pi^{t},\mu_{n},P_{n}}\right\Vert _{F}$, then 
\begin{alignat}{1}
& \mathbb{A}_{\pi^{t},\mu_{n},P_{n}}(\pi_{n}^{t+1})\nonumber\\ 
 \le & \left\Vert \mathbf{B}_{\pi^{t},\mu_{n},P_{n}}\right\Vert _{F}\nonumber\\
 & \cdot\sqrt{\sum_{s,a}\left[\rho_{\pi^{t},\mu_{n},P_{n}}(s) \left(\pi_{n}^{t+1}(a\vert s)-\pi^{t}(a\vert s)\right)\right]^{2}}.
\end{alignat}
As a result, the right-hand side of (\ref{eq:84}) will be less or equal to zero, where the equality holds when $\pi_{n}^{t+1}=\pi^t$. This indicates that there is no benefit from further updating $\pi^t$ at the $n$-th agent, and there will be no improvement for the global objective. Thus, (\ref{BleD1}) is a necessary condition for the local update to be able to improve the global objective.
\end{proof}

\section{Proof of Corollary \ref{cor:1} \label{proof:corollary1}}
\begin{proof}
It follows from (\ref{eq:2}) that the global discounted reward at time $t+1$ can be given by 
\begin{eqnarray}
\label{ACE1}
\text{\ensuremath{\eta}}(\pi_n^{t+1}) =\sum_{k=1}^{N}q_{k}\eta_{k}(\pi_n^{t+1}). 
\end{eqnarray}
By substituting (\ref{eq:cporeturndiff}) into (\ref{ACE1}), we can obtain
\begin{alignat}{1}
\label{ACE2}
\text{\ensuremath{\eta}}(\pi_n^{t+1}) & \ge \sum_{k=1}^{N} q_{k} \left( L_{\pi^t,\mu_{k},P_{k}}(\pi_n^{t+1})\right.\\
& \quad\left.-\text{\ensuremath{c_{k}^{\text{CPO}} \mathbb{E}_{s\thicksim\rho_{\pi^{t},\mu_{k},P_{k}}} \left[D_{TV}(\pi^{t}(\cdot\vert s)\Vert\pi_{n}^{t+1}(\cdot\vert s))\right]}} \right).\nonumber
\end{alignat}
By substituting (\ref{eq:surrogate}) into (\ref{ACE2}) and bounding $c_{k}^{\text{CPO}}$ from below by $c_{k}^{\prime}$, we have
\begin{alignat}{1}
\label{eq:66}
\text{\ensuremath{\eta}}(\pi_n^{t+1}) & \ge \sum_{k=1}^{N}q_{k} \left( 
\eta_{k}(\pi^{t})+ \mathbb{A}_{\pi^{t},P_{k}}(\pi_n^{t+1})\right.\\
& \quad\left.-\text{\ensuremath{c'_k \mathbb{E}_{s\thicksim\rho_{\pi^{t},\mu_{k},P_{k}}} \left[D_{TV}(\pi^{t}(\cdot\vert s)\Vert\pi_{n}^{t+1}(\cdot\vert s))\right]}} \right),\nonumber
\end{alignat}
where $c'_{k}=\frac{4\epsilon_{k}\gamma}{(1-\gamma)^{2}}$, and $\epsilon_{k}=\max_{s,a}|A_{\pi^{t},P_{k}}(s,a)|$ denotes the maximum absolute value of the advantage function over all state and action pairs.

To remove the dependency on $\rho_{\pi^{t},\mu_{k},P_{k}}$, we have
\begin{alignat}{1}
\label{eq:54}
& \mathbb{E}_{s\thicksim\rho_{\pi^{t},\mu_{k},P_{k}}} \left[D_{TV}(\pi^{t}(\cdot\vert s)\Vert\pi_{n}^{t+1}(\cdot\vert s))\right]\nonumber\\
& - \mathbb{E}_{s\thicksim\rho_{\pi^{t},\mu_{n},P_{n}}} \left[D_{TV}(\pi^{t}(\cdot\vert s)\Vert\pi_{n}^{t+1}(\cdot\vert s))\right]\nonumber\\
= & \sum_{s}\rho_{\pi^{t},\mu_{k},P_{k}}(s)D_{TV}(\pi^{t}(\cdot\vert s)\Vert\pi_{n}^{t+1}(\cdot\vert s))\nonumber\\
& - \sum_{s}\rho_{\pi^{t},\mu_{n},P_{n}}(s)D_{TV}(\pi^{t}(\cdot\vert s)\Vert\pi_{n}^{t+1}(\cdot\vert s))\\
\le & D_{TV}^{\max}(\pi^{t},\pi_{n}^{t+1})\sum_{s} \left\vert\rho_{\pi^{t},\mu_{k},P_{k}}(s)-\rho_{\pi^{t},\mu_{n},P_{n}}(s)\right\vert,
\end{alignat}
where $D_{TV}^{max}(\pi,\pi^{\prime})  =\max_{s} D_{TV} \left( \pi(\cdot\vert s) \Vert \pi^{\prime}(\cdot\vert s) \right)$ denotes the maximum TV divergence between two policies $\pi$ and $\pi^{\prime}$
among all states. It thus follows from (\ref{eq:54}) that
\begin{alignat}{1}
\label{eq:55}
& \mathbb{E}_{s\thicksim\rho_{\pi^{t},\mu_{k},P_{k}}} \left[D_{TV}(\pi^{t}(\cdot\vert s)\Vert\pi_{n}^{t+1}(\cdot\vert s))\right]\nonumber\\
\le & \mathbb{E}_{s\thicksim\rho_{\pi^{t},\mu_{n},P_{n}}} \left[D_{TV}(\pi^{t}(\cdot\vert s)\Vert\pi_{n}^{t+1}(\cdot\vert s))\right]\nonumber\\
& + 2 D_{TV}^{\max}(\pi^{t},\pi_{n}^{t+1}) D_{TV}(\rho_{\pi^{t},\mu_{k},P_{k}} \Vert \rho_{\pi^{t},\mu_{n},P_{n}}).
\end{alignat}

Substituting (\ref{eq:thm1}) and (\ref{eq:55}) into (\ref{eq:66}) gives:
\begin{alignat}{1}
\eta(\pi_{n}^{t+1}) & \ge \sum_{k=1}^{N}q_{k}\eta_{k}(\pi^{t})+\mathbb{A}_{\pi^{t},\mu_{n},P_{n}}(\pi_{n}^{t+1})\nonumber\\
& \quad-(\alpha+\beta) \mathbb{E}_{s\thicksim\rho_{\pi^{t},\mu_{n},P_{n}}} \left[D_{TV}(\pi^{t}(\cdot\vert s)\Vert\pi_{n}^{t+1}(\cdot\vert s))\right]\nonumber\\
& \quad-\delta D_{TV}^{\max}(\pi^{t},\pi_{n}^{t+1}),
\end{alignat}
where $\delta=2 \sum_{k=1}^{N}q_{k} \frac{4\epsilon_{k}\gamma}{(1-\gamma)^{2}} D_{TV}(\rho_{\pi^{t},\mu_{k},P_{k}} \Vert \rho_{\pi^{t},\mu_{n},P_{n}})$, $\beta=\sum_{k=1}^{N}q_{k} \frac{4\epsilon_{k}\gamma}{(1-\gamma)^{2}}$ and $\epsilon_{k}=\max_{s,a}\left|A_{\pi^{t},P_{k}}(s,a)\right|$.
\end{proof}

\section{Effectiveness of Improving the Local Objective\label{proof:MMAlgo}}

Let $g(\pi_{n}^{t+1})$ denote the RHS of (\ref{eq:cor1}) as a function of $\pi_{n}^{t+1}$. Thus, we have the following relation 
\begin{alignat}{1}
\eta(\pi_{n}^{t+1}) & \ge g(\pi_{n}^{t+1}),\label{eq:mm1}\\
\eta(\pi^{t}) & = g(\pi^{t}),\label{eq:mm2}\\
\eta(\pi_{n}^{t+1})-\eta(\pi^{t}) & \ge g(\pi_{n}^{t+1})-g(\pi^{t}).\label{eq:mm}
\end{alignat}
The first inequality comes from (\ref{eq:cor1}). The equality
in (\ref{eq:mm2}) can be checked by replacing $\pi_{n}^{t+1}$ with $\pi^{t}$ in the RHS of (\ref{eq:cor1}). 
Finally, subtracting (\ref{eq:mm2}) from (\ref{eq:mm1}) gives (\ref{eq:mm}).
The above conditions guarantee that we can improve the reward $\eta(\pi_{n}^t)$ at each iteration $t$ by improving the surrogate term $g(\pi_{n}^t)$. In particular, the surrogate function $g(\pi)$ minorizes $\eta(\pi)$ at $\pi^{t}$ \cite{DBLP:journals/corr/SchulmanLMJA15,hunter2004tutorial}.
\section{Proof of Theorem \ref{thm:monotonicity}\label{proof:monotonicity}}
\begin{lem}
\label{lem:linearity}
(Linearity of policy advantage $\mathbb{A}_{\pi}$ and surrogate objective $L_{\pi}$). Given arbitrary policies $\pi$, $\hat{\pi}$ and $\pi^{\prime}$, the policy advantage of linearly combinated policy $\gamma\hat{\pi}+(1-\gamma)\pi^{\prime}$ over $\pi$ is the linear combination of individual policy advantages, i.e. for all $\gamma \le 1$, we have
\begin{alignat}{1}
\mathbb{A}_{\pi}(\gamma\hat{\pi}+(1-\gamma)\pi^{\prime}) & =\gamma \mathbb{A}_{\pi}(\hat{\pi})+(1-\gamma)\mathbb{A}_{\pi}(\pi^{\prime})\\
L_{\pi}(\gamma\hat{\pi}+(1-\gamma)\pi^{\prime}) & =\gamma L_{\pi}(\hat{\pi})+(1-\gamma)L_{\pi}(\pi^{\prime}).
\end{alignat}
\end{lem}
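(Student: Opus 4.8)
The plan is to exploit the single structural feature that makes both quantities linear in their argument: the discounted visitation frequency $\rho_{\pi,\mu_n,P_n}$ and the advantage function $A_{\pi,P_n}$ are both evaluated at the \emph{base} policy $\pi$, and hence do not depend on the policy plugged in as the argument. First I would write the policy advantage in its explicit summation form from (\ref{eq:policyadvantage}),
\begin{equation}
\mathbb{A}_{\pi,\mu_n,P_n}(\pi') = \sum_s \rho_{\pi,\mu_n,P_n}(s) \sum_a \pi'(a|s)\, A_{\pi,P_n}(s,a),
\end{equation}
and observe that the coefficient $\rho_{\pi,\mu_n,P_n}(s) A_{\pi,P_n}(s,a)$ multiplying each conditional probability $\pi'(a|s)$ is fixed once $\pi$, $\mu_n$, $P_n$ are fixed. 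Therefore $\mathbb{A}_{\pi,\mu_n,P_n}(\cdot)$ is a linear functional of its argument, regarded as the collection of numbers $\{\pi'(a|s)\}_{s,a}$.

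Second, I would use the fact that the combination acts pointwise, $(\gamma\hat\pi+(1-\gamma)\pi')(a|s) = \gamma\hat\pi(a|s) + (1-\gamma)\pi'(a|s)$, substitute it into the summation above, and split the finite sum by linearity of scalar multiplication. This directly produces
\begin{equation}
\mathbb{A}_{\pi}(\gamma\hat\pi+(1-\gamma)\pi') = \gamma\,\mathbb{A}_\pi(\hat\pi) + (1-\gamma)\,\mathbb{A}_\pi(\pi'),
\end{equation}
the first identity. Since this is a purely algebraic statement about the mixing weight, it holds for every $\gamma$ and does not actually require $\gamma\hat\pi+(1-\gamma)\pi'$ to be a valid distribution.

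Third, for the surrogate objective I would invoke its definition $L_{\pi,\mu_n,P_n}(\pi') = \eta_n(\pi) + \mathbb{A}_{\pi,\mu_n,P_n}(\pi')$. The term $\eta_n(\pi)$ is a constant independent of the argument, and because the weights satisfy $\gamma+(1-\gamma)=1$ we may reproduce it as
\begin{equation}
\eta_n(\pi) = \gamma\,\eta_n(\pi) + (1-\gamma)\,\eta_n(\pi).
\end{equation}
Adding this to the already-established linearity of $\mathbb{A}_\pi$ yields the second identity.

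There is no substantial obstacle: the entire content of the lemma is the observation that fixing the reference policy $\pi$ linearizes both objects in the candidate policy. The only point needing a moment's care is the constant $\eta_n(\pi)$ in $L_\pi$, whose faithful reproduction under the combination relies on the weights summing to one; everything else is an interchange of a finite sum with scalar multiplication. For continuous action spaces the same argument applies verbatim with the summation over $a$ replaced by integration.
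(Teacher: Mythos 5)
Your proposal is correct and matches the paper's own proof: both expand $\mathbb{A}_{\pi,\mu_n,P_n}(\pi')=\sum_s \rho_{\pi,\mu_n,P_n}(s)\sum_a \pi'(a|s)A_{\pi,P_n}(s,a)$, split the finite sum by linearity in the argument $\pi'(a|s)$, and recover the constant $\eta_n(\pi)$ in $L_\pi$ using $\gamma+(1-\gamma)=1$. Your explicit remarks that the coefficients depend only on the fixed base policy $\pi$ and that the weights must sum to one for the $\eta_n(\pi)$ term are the same observations the paper leaves implicit in its chain of equalities.
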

\begin{proof}
\begin{alignat}{1}
& L_{\pi}(\gamma\hat{\pi}+(1-\gamma)\pi^{\prime})\nonumber\\
= & \eta(\pi)+\gamma\sum_{s}\rho_{\pi}(s)\sum_{a}\hat{\pi}(a|s)A_{\pi}(s,a)\nonumber\\
& +(1-\gamma)\sum_{s}\rho_{\pi}(s)\sum_{a}\pi^{\prime}(a|s)A_{\pi}(s,a)\\
= & \eta(\pi) + \gamma \mathbb{A}_{\pi}(\hat{\pi})+(1-\gamma)\mathbb{A}_{\pi}(\pi^{\prime})\\
= & \gamma L_{\pi}(\hat{\pi})+(1-\gamma)L_{\pi}(\pi^{\prime}).
\end{alignat}
\end{proof}
\begin{lem}
\label{lem:tvupper}
In tabular case, for all states, the TV divergence between $\pi^{t+1}(\cdot \vert s)$ and $\pi^{t}(\cdot \vert s)$ is upper bounded by the expectation of the TV divergence between local policies $\pi_{k}^{t+1}(\cdot \vert s)$ and $\pi^{t}(\cdot \vert s)$.
\end{lem}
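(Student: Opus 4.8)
The plan is to reduce the claim to a one-line convexity argument built on the tabular aggregation rule. The only structural fact I need is the linear aggregation step of Algorithm~\ref{alg:PAvg}, namely $\pi^{t+1}(a\vert s)=\sum_{k=1}^{N}q_{k}\pi_{k}^{t+1}(a\vert s)$ for all $s,a$, together with the fact that the weights satisfy $q_{k}\ge 0$ and $\sum_{k=1}^{N}q_{k}=1$. The TV distance is (up to the factor $\tfrac12$) the $\ell_{1}$ distance between the conditional distributions, and the desired bound is exactly the statement that $D_{TV}(\cdot\,\Vert\,\pi^{t}(\cdot\vert s))$ is convex in its first argument evaluated at the convex combination of the local policies.

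First I would rewrite the per-state difference. Using $\sum_{k}q_{k}=1$ to write $\pi^{t}(a\vert s)=\sum_{k}q_{k}\pi^{t}(a\vert s)$, the aggregation rule gives
\begin{equation}
\pi^{t+1}(a\vert s)-\pi^{t}(a\vert s)=\sum_{k=1}^{N}q_{k}\bigl(\pi_{k}^{t+1}(a\vert s)-\pi^{t}(a\vert s)\bigr).
\end{equation}
Next I would apply the triangle inequality. Since each $q_{k}$ is nonnegative, taking absolute values and using $\bigl|\sum_{k}q_{k}x_{k}\bigr|\le\sum_{k}q_{k}|x_{k}|$ yields
\begin{equation}
\bigl|\pi^{t+1}(a\vert s)-\pi^{t}(a\vert s)\bigr|\le\sum_{k=1}^{N}q_{k}\bigl|\pi_{k}^{t+1}(a\vert s)-\pi^{t}(a\vert s)\bigr|.
\end{equation}

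Finally I would sum over $a$, multiply by $\tfrac12$, and interchange the (finite) sums over $a$ and $k$ to recover the per-agent TV distances, obtaining
\begin{equation}
D_{TV}\bigl(\pi^{t+1}(\cdot\vert s)\Vert\pi^{t}(\cdot\vert s)\bigr)\le\sum_{k=1}^{N}q_{k}\,D_{TV}\bigl(\pi_{k}^{t+1}(\cdot\vert s)\Vert\pi^{t}(\cdot\vert s)\bigr),
\end{equation}
which is the asserted bound, the right-hand side being the $q_{k}$-weighted expectation over agents. I do not expect a genuine obstacle here: the argument is essentially bookkeeping plus the triangle inequality. The one point worth flagging, rather than a difficulty, is that the inequality hinges entirely on the tabular/linear aggregation assumption; for a nonlinear parameterization, averaging in parameter space does not produce a convex combination of the output distributions, so this convexity step would fail and an additional error term (as remarked after Theorem~\ref{thm:2}) would be required. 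I would therefore state explicitly at the outset that the lemma is invoked only in the tabular case, and that convexity of $D_{TV}$ in its first argument is the sole analytic ingredient.
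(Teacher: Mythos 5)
Your proof is correct and follows essentially the same route as the paper's: both write the total variation distance as half the $\ell_{1}$ distance, use the tabular aggregation rule together with $\sum_{k}q_{k}=1$ to express $\pi^{t+1}(\cdot\vert s)-\pi^{t}(\cdot\vert s)$ as a $q_{k}$-weighted combination of $\pi_{k}^{t+1}(\cdot\vert s)-\pi^{t}(\cdot\vert s)$, and conclude by the triangle inequality (you apply it pointwise per action and then sum, the paper applies it at the norm level, which is the same argument). Your explicit remark that the step fails under nonlinear parameterization is a sensible addition consistent with the paper's own caveat after Theorem~\ref{thm:2}.
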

\begin{proof}
For all states $s$,
\begin{alignat}{1}
& D_{TV}(\pi^{t}(\cdot \vert s),\pi^{t+1})(\cdot \vert s)]\nonumber\\
= & \frac{1}{2}\left\Vert \pi^{t}(\cdot \vert s)-\pi^{t+1}(\cdot \vert s)\right\Vert _{1}\\
= & \frac{1}{2}\left\Vert \sum_{n=1}^{N}p_{k}(\pi^{t}(\cdot \vert s)-\pi_{k}^{t+1}(\cdot \vert s))\right\Vert _{1}\\
\le & \sum_{k=1}^{N}q_{k}\frac{1}{2}\left\Vert \pi^{t}(\cdot \vert s)-\pi_{k}^{t+1}(\cdot \vert s)\right\Vert _{1}\\
= & \sum_{k=1}^{N}q_{k}\left[ D_{TV}(\pi^{t}(\cdot \vert s)\lVert\pi_{k}^{t+1}(\cdot \vert s) \right]\\
= & \mathbb{E}_{k}\left[ D_{TV}(\pi^{t}(\cdot \vert s)\lVert\pi_{k}^{t+1}(\cdot \vert s) \right].
\end{alignat}
\end{proof}

Now the proof of Theorem \ref{thm:monotonicity} follows.
\begin{proof}
By (\ref{eq:cporeturndiff}) and bounding $c_{n}^{\text{CPO}}$ by $c_{n}$, we can obtain
\begin{alignat}{1}
\eta(\pi^{t+1}) & =\sum_{n=1}^{N}q_{n}\eta_{n}(\sum_{k=1}^{N}q_{k}\pi_{k}^{t+1})\\
& \ge \sum_{n=1}^{N}p_{n}\left\{ \eta_{n}(\pi^{t})+\mathbb{A}_{\pi^{t},\mu_{n},P_{n}}\left(\sum_{k=1}^{N}q_{k}\pi_{k}^{t+1}\right)\right.\nonumber\\
& \quad\left.-c_{n} \mathbb{E}_{s\thicksim\rho_{\pi,\mu_{n},P_{n}}} \left[D_{TV}(\pi^{t}(\cdot\vert s)\Vert\pi^{t+1}(\cdot\vert s))\right]\right\}.
\end{alignat}
By the linearity of advantage function, i.e. Lemma \ref{lem:linearity}
, we have
\begin{alignat}{1}
\eta(\pi^{t+1}) & \ge \eta(\pi^{t})+\sum_{n=1}^{N}q_{n}\left\{ \mathbb{A}_{\pi^{t},\mu_{n},P_{n}}(\sum_{k=1}^{N}q_{k}\pi_{k}^{t+1})\right.\nonumber\\
& \quad\left.-c_{n} \mathbb{E}_{s\thicksim\rho_{\pi,\mu_{n},P_{n}}} \left[D_{TV}(\pi^{t}(\cdot\vert s)\Vert\pi^{t+1}(\cdot\vert s))\right]\right\}\\
& = \eta(\pi^{t})+\sum_{n=1}^{N}q_{n}\left\{ \sum_{k=1}^{N}q_{k} \mathbb{A}_{\pi^{t},\mu_{n},P_{n}} (\pi_{k}^{t+1})\right.\nonumber\\
& \quad\left.-c_{n} \mathbb{E}_{s\thicksim\rho_{\pi,\mu_{n},P_{n}}} \left[D_{TV}(\pi^{t}(\cdot\vert s)\Vert\pi^{t+1}(\cdot\vert s))\right]\right\}.
\end{alignat}
By Lemma \ref{lem:tvupper}, we have
\begin{alignat}{1}
& \eta(\pi^{t+1})\nonumber\\
\ge & \eta(\pi^{t})+\sum_{n=1}^{N}q_{n}\left\{ \sum_{k=1}^{N}q_{k} \mathbb{A}_{\pi^{t},\mu_{n},P_{n}}(\pi_{k}^{t+1})\right.\nonumber\\
& \left.-c_{n} \mathbb{E}_{s\thicksim\rho_{\pi,\mu_{n},P_{n}}} \left[ \sum_{k}^{N}q_{k}D_{TV}(\pi^{t}(\cdot\vert s)\Vert\pi_{k}^{t+1}(\cdot\vert s))\right ]\right\}\\
= & \eta(\pi^{t})+\sum_{n=1}^{N}q_{n}\left\{ \sum_{k=1}^{N}q_{k} \mathbb{A}_{\pi^{t},\mu_{n},P_{n}} (\pi_{k}^{t+1})\right.\nonumber\\
& \left.-\sum_{k=1}^{N}q_{k} c_{n} \mathbb{E}_{s\thicksim\rho_{\pi,\mu_{n},P_{n}}} \left[ D_{TV}(\pi^{t}(\cdot\vert s)\Vert\pi_{k}^{t+1}(\cdot\vert s))\right ]\right\}.
\end{alignat}
By exchanging the summation order, we have
\begin{alignat}{1}
\eta(\pi^{t+1}) & \ge \eta(\pi^{t})+\sum_{k=1}^{N}q_{k}\left\{ \sum_{n=1}^{N}q_{n} \mathbb{A}_{\pi^{t},\mu_{n},P_{n}} (\pi_{k}^{t+1})\right.\nonumber\\
& \quad\left.-\beta \mathbb{E}_{s\thicksim\rho_{\pi,\mu_{n},P_{n}}} \left[ D_{TV}(\pi^{t}(\cdot\vert s)\Vert\pi_{k}^{t+1}(\cdot\vert s))\right ]\right\}.
\end{alignat}
By following the same procedure from Appendix \ref{proof:corollary1}, we have
\begin{alignat}{1}
\eta(\pi^{t+1}) & \ge \eta(\pi^{t})+\sum_{k=1}^{N}q_{k}\left\{ \mathbb{A}_{\pi^{t},\mu_{k},P_{k}}(\pi_{k}^{t+1})\right.\nonumber\\
& \quad\left.-(\alpha+\beta) \mathbb{E}_{s\thicksim\rho_{\pi,\mu_{k},P_{k}}} \left[ D_{TV}(\pi^{t}(\cdot\vert s)\Vert\pi_{k}^{t+1}(\cdot\vert s))\right ]\right\}\nonumber\\
& \quad\left.-\delta D_{TV}^{\max}(\pi^{t},\pi_{k}^{t+1})\right\}.
\label{eq:76}
\end{alignat}
This proves that the RHS of Corollary \ref{cor:1} forms a lower bound on the policy performance difference between $\pi^{t+1}$ and $\pi^{t}$. Moreover, since the equality holds when $\pi_{k}^{t+1}=\pi^{t},\forall k=0,...,N$, optimizing the RHS of (\ref{eq:76}) or (\ref{eq:prelocalobj}) in each round can monotonically improve the policy performance difference between $\pi^{t+1}$ and $\pi^{t}$ due to the proof in Appendix \ref{proof:MMAlgo}. This completes the proof of Theorem \ref{thm:monotonicity}.
\end{proof}
\section{Proof of Theorem \ref{thm:2}\label{proof:theorem2}}

This proof uses some useful lemmas and corollaries from \cite{kuba2022trust}.
\begin{lem}
\label{lem:kuba4}
(Continuity of $\rho_{\pi}$, \cite[Lemma 4]{kuba2022trust}). The (unnormalized) discounted visitation frequency $\rho_{\pi}$ is continuous in $\pi$.
\end{lem}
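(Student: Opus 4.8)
The plan is to derive a closed-form, matrix-valued expression for $\rho_{\pi,\mu_{n},P_{n}}$ and then read off continuity as a composition of elementary continuous maps. First I would introduce the state-to-state transition matrix induced by a policy $\pi$, namely the $|\mathcal{S}|\times|\mathcal{S}|$ matrix $\mathbf{P}_{\pi}$ with entries $[\mathbf{P}_{\pi}]_{s,s'}=\sum_{a}\pi(a\vert s)P_{n}(s'\vert s,a)$, and observe that $\mathrm{Pr}(s_{t}=s\mid\pi,\mu_{n},P_{n})=[\mu_{n}^{T}\mathbf{P}_{\pi}^{t}]_{s}$, where $\mu_{n}$ denotes the initial-distribution row vector. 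Summing the definition (\ref{eq:rho}) over $t$ then expresses $\rho_{\pi,\mu_{n},P_{n}}$ as the Neumann series $\mu_{n}^{T}\sum_{t=0}^{\infty}\gamma^{t}\mathbf{P}_{\pi}^{t}$.

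The second step is to resolve this series in closed form. Because $\mathbf{P}_{\pi}$ is row-stochastic, $\Vert\mathbf{P}_{\pi}\Vert_{\infty}=1$ and hence its spectral radius is at most $1$; consequently $\gamma\mathbf{P}_{\pi}$ has spectral radius at most $\gamma<1$, so the series converges and equals $(\mathbf{I}-\gamma\mathbf{P}_{\pi})^{-1}$. In particular $\mathbf{I}-\gamma\mathbf{P}_{\pi}$ is invertible for \emph{every} policy $\pi$, which yields the clean formula $\rho_{\pi,\mu_{n},P_{n}}=\mu_{n}^{T}(\mathbf{I}-\gamma\mathbf{P}_{\pi})^{-1}$.

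With this formula in hand, continuity follows by writing $\pi\mapsto\rho_{\pi,\mu_{n},P_{n}}$ as a composition of three continuous maps: (i) $\pi\mapsto\mathbf{P}_{\pi}$, which is linear in the entries of $\mathbf{\Pi}_{\pi}$ since each entry of $\mathbf{P}_{\pi}$ is a fixed linear combination of the $\pi(a\vert s)$; (ii) $\mathbf{P}_{\pi}\mapsto(\mathbf{I}-\gamma\mathbf{P}_{\pi})^{-1}$, continuous because matrix inversion is continuous on the open set of invertible matrices and, by the spectral bound above, $\mathbf{I}-\gamma\mathbf{P}_{\pi}$ never leaves that set; and (iii) left-multiplication by the fixed vector $\mu_{n}^{T}$, which is linear. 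Since a composition of continuous maps is continuous, $\rho_{\pi,\mu_{n},P_{n}}$ is continuous in $\pi$, which is the claim.

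The main obstacle I anticipate is the invertibility step: to invoke continuity of matrix inversion I must guarantee that $\mathbf{I}-\gamma\mathbf{P}_{\pi}$ remains invertible for \emph{all} $\pi$, and this rests on the uniform bound that the spectral radius of $\gamma\mathbf{P}_{\pi}$ is at most $\gamma<1$, which holds precisely because $\mathbf{P}_{\pi}$ is stochastic for every $\pi$ and $\gamma\in(0,1)$. Everything else is routine in the finite tabular setting, where all spaces are finite-dimensional and the matrix powers, geometric summation, and inversion are well behaved.
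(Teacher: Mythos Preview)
Your argument is correct: the resolvent identity $\rho_{\pi,\mu_{n},P_{n}}=\mu_{n}^{T}(\mathbf{I}-\gamma\mathbf{P}_{\pi})^{-1}$ is the standard closed form for the discounted occupancy measure in a finite MDP, and your decomposition into (i) the linear map $\pi\mapsto\mathbf{P}_{\pi}$, (ii) continuous matrix inversion on the invertible set (guaranteed by the spectral-radius bound $\rho(\gamma\mathbf{P}_{\pi})\le\gamma<1$), and (iii) left-multiplication by $\mu_{n}^{T}$ is airtight in the tabular setting.

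As for comparison with the paper: there is nothing to compare. The paper does not prove this lemma at all; it merely quotes it from \cite[Lemma~4]{kuba2022trust} and uses it as a black box in the proof of Theorem~\ref{thm:2}. Your write-up therefore supplies strictly more than the paper does. If anything, you could shorten it slightly---the ``main obstacle'' paragraph is already resolved by your spectral-radius argument two paragraphs earlier, so flagging it again as a potential difficulty is redundant.
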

\begin{lem}
(Continuity of $Q_{\pi}$, \cite[Lemma 5]{kuba2022trust}). Let $\pi$ be a policy. Then $Q_{\pi}\left(s,a\right)$
is Lipschitz-continuous in $\pi$.
\end{lem}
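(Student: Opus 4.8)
The plan is to exploit the fixed-point (Bellman) characterization of $Q_\pi$ and reduce the claim to a resolvent-difference estimate. Fix the MDP, so the dynamics $P \equiv P_n$ and reward $\mathcal{R} \equiv \mathcal{R}_n$ are held constant and only the policy $\pi$ varies over the compact product of simplices that constitutes the tabular policy space. Viewing $Q_\pi$ as a vector indexed by state–action pairs and $r$ as the reward vector with entries $\mathcal{R}(s,a)$, the Bellman equation reads $Q_\pi = r + \gamma P^\pi Q_\pi$, where $P^\pi$ is the state–action transition operator with entries $P^\pi_{(s,a),(s',a')} = P(s'\vert s,a)\,\pi(a'\vert s')$. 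Since $P^\pi$ is row-stochastic, its spectral radius is at most $1$, and $\gamma \in (0,1)$ makes $I - \gamma P^\pi$ invertible, giving the closed form $Q_\pi = (I - \gamma P^\pi)^{-1} r$.

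First I would record two uniform bounds valid for every policy: by the Neumann series and row-stochasticity, $\|(I-\gamma P^\pi)^{-1}\|_\infty \le \sum_{k\ge 0}\gamma^k = (1-\gamma)^{-1}$, and consequently $\|Q_\pi\|_\infty \le R_{\max}/(1-\gamma)$ with $R_{\max} = \max_{s,a}|\mathcal{R}(s,a)|$. Next, I would note that the map $\pi \mapsto P^\pi$ is linear in the policy entries, so $(P^\pi - P^{\tilde\pi})_{(s,a),(s',a')} = P(s'\vert s,a)\,(\pi(a'\vert s') - \tilde\pi(a'\vert s'))$, and taking the maximum row sum gives $\|P^\pi - P^{\tilde\pi}\|_\infty \le \max_{s'}\sum_{a'}|\pi(a'\vert s') - \tilde\pi(a'\vert s')|$, which is controlled by $\|\pi - \tilde\pi\|$ in any fixed norm on the finite-dimensional policy space.

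The heart of the argument is the resolvent identity $A^{-1}-B^{-1} = A^{-1}(B-A)B^{-1}$ applied to $A = I-\gamma P^\pi$ and $B = I - \gamma P^{\tilde\pi}$, which yields $Q_\pi - Q_{\tilde\pi} = \gamma\,(I-\gamma P^\pi)^{-1}(P^\pi - P^{\tilde\pi})\,Q_{\tilde\pi}$. Taking the $\infty$-norm and substituting the three bounds above gives $\|Q_\pi - Q_{\tilde\pi}\|_\infty \le \frac{\gamma R_{\max}}{(1-\gamma)^2}\,\|\pi - \tilde\pi\|$, which is exactly Lipschitz continuity with an explicit constant; the per-entry statement for each $Q_\pi(s,a)$ follows because the sup-norm dominates every coordinate.

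The main obstacle I anticipate is bookkeeping rather than anything conceptual: being careful that $P^\pi$ is set up on the correct state–action space, that the chosen matrix norm is the max-row-sum $\infty$-norm throughout so that row-stochasticity delivers the clean $(1-\gamma)^{-1}$ resolvent bound, and that the final conversion from $\|P^\pi - P^{\tilde\pi}\|_\infty$ to a norm on $\pi-\tilde\pi$ is done consistently, where the equivalence of all norms on the finite-dimensional tabular policy space only affects the constant. A secondary point to verify is the uniform invertibility of $I - \gamma P^\pi$ across all policies, which the spectral-radius argument settles once and for all.
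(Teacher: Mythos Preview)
Your argument is correct and self-contained: the Bellman fixed-point form $Q_\pi=(I-\gamma P^\pi)^{-1}r$, the uniform resolvent bound $\|(I-\gamma P^\pi)^{-1}\|_\infty\le(1-\gamma)^{-1}$, the linearity of $\pi\mapsto P^\pi$, and the resolvent identity combine cleanly to give the Lipschitz estimate you state, and your caveats about norm choice are appropriate.

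The paper, however, does not actually prove this lemma at all; it merely cites it as \cite[Lemma~5]{kuba2022trust} and uses the result as a black box in the convergence analysis of Theorem~\ref{thm:2}. So there is no ``paper's own proof'' to compare against. What you have written is a genuine proof that the paper omits, and it is in fact the standard route (and essentially the one used in the cited reference). Your version has the added benefit of producing an explicit Lipschitz constant $\gamma R_{\max}/(1-\gamma)^2$ (up to the norm-equivalence factor you flag), which the paper's citation-only treatment does not surface.
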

\begin{cor}
\label{cor:kuba}
(\cite[Corollary 1]{kuba2022trust}). Due to the continuity of $Q_{\pi}$, the following functions are Lipschitz-continuous
in $\pi$: The state value function $V_{\pi}$, the advantage function
$A_{\pi}\left(s,a\right)$, and the expected total reward $\eta\left(\pi\right)$.
\end{cor}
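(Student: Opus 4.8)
The plan is to derive the Lipschitz-continuity of $V_\pi$, $A_\pi$ and $\eta(\pi)$ from the Lipschitz-continuity of $Q_\pi$ (the preceding lemma, \cite[Lemma 5]{kuba2022trust}) by expressing each quantity through $Q_\pi$ and the policy $\pi$, then controlling the resulting differences term by term. The essential preliminary observation is that $Q_\pi$ is uniformly bounded: since $|\mathcal{R}(s,a)| \le R$ with $R=\max_{s,a}|\mathcal{R}(s,a)|<\infty$, summing the geometric series gives $|Q_\pi(s,a)| \le R/(1-\gamma)$ for every $\pi$, $s$ and $a$. This bound does not depend on $\pi$, and it is what will absorb the contribution of the policy difference in the step below.

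First I would treat $V_\pi$. Writing $V_\pi(s)=\sum_a \pi(a\vert s) Q_\pi(s,a)$ and inserting a telescoping cross term gives, for any two policies $\pi$ and $\pi'$,
\begin{alignat}{1}
& V_{\pi}(s)-V_{\pi'}(s) \nonumber\\
= & \sum_a \left[\pi(a\vert s)-\pi'(a\vert s)\right] Q_{\pi}(s,a) \nonumber\\
& + \sum_a \pi'(a\vert s)\left[Q_{\pi}(s,a)-Q_{\pi'}(s,a)\right].
\end{alignat}
The first sum is bounded by $\frac{R}{1-\gamma}\sum_a|\pi(a\vert s)-\pi'(a\vert s)| \le \frac{R}{1-\gamma}\|\pi-\pi'\|$ using the uniform bound on $Q_\pi$; the second sum is bounded by $\max_a|Q_{\pi}(s,a)-Q_{\pi'}(s,a)| \le L_Q\|\pi-\pi'\|$, since $\sum_a\pi'(a\vert s)=1$ and $Q_\pi$ is $L_Q$-Lipschitz in $\pi$. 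Hence $|V_\pi(s)-V_{\pi'}(s)| \le \left(\frac{R}{1-\gamma}+L_Q\right)\|\pi-\pi'\|$ with a constant that is \emph{uniform in} $s$, which establishes the Lipschitz-continuity of $V_\pi$.

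The remaining two claims then follow immediately. Since $A_\pi(s,a)=Q_\pi(s,a)-V_\pi(s)$ is a difference of two functions that are each Lipschitz in $\pi$, uniformly in their state/action arguments, $A_\pi$ is Lipschitz with constant $L_Q+\left(\frac{R}{1-\gamma}+L_Q\right)$. For the expected total reward I would use $\eta(\pi)=\mathbb{E}_{s_0\sim\mu}[V_\pi(s_0)]=\sum_{s_0}\mu(s_0)V_\pi(s_0)$, a fixed convex combination over the $\pi$-independent initial distribution $\mu$ of the uniformly-Lipschitz functions $V_\pi(s_0)$; the triangle inequality transfers the same Lipschitz constant to $\eta$. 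I deliberately route $\eta$ through $V_\pi$ rather than through the representation $\eta_n(\pi)=\sum_s\rho_{\pi,\mu_n,P_n}(s)\sum_a\pi(a\vert s)\mathcal{R}_n(s,a)$, because $\rho_\pi$ is only known to be continuous (Lemma \ref{lem:kuba4}) and not Lipschitz, so that form would not yield the desired bound.

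The only genuine obstacle is the $V_\pi$ step, and specifically ensuring that its Lipschitz constant is uniform over states $s$; this uniformity is precisely what lets the bound survive the averaging over $\mu$ (for $\eta$) and the subtraction (for $A_\pi$). The telescoping decomposition is what makes this work: it cleanly separates the dependence on $\pi$ through the explicit action weights, handled by the boundedness of $Q_\pi$, from the dependence through $Q_\pi$ itself, handled by the Lipschitz hypothesis, and neither bound introduces a state-dependent factor.
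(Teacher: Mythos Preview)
Your argument is correct. The paper does not give its own proof of this corollary: it is imported verbatim from \cite[Corollary~1]{kuba2022trust} as one of several auxiliary results used in the proof of Theorem~\ref{thm:2}, so there is no in-paper argument to compare against. Your telescoping decomposition of $V_\pi(s)-V_{\pi'}(s)$, the use of the uniform bound $|Q_\pi|\le R/(1-\gamma)$ to control the policy-difference term, and the passage to $A_\pi$ and $\eta$ via subtraction and averaging over the fixed initial distribution $\mu$ are all standard and sound; in particular, your observation that the Lipschitz constant for $V_\pi$ is uniform in $s$ is exactly what is needed for the last two steps. One small point you leave implicit is the choice of norm on policies; any reasonable choice (e.g., $\|\pi-\pi'\|=\max_s\sum_a|\pi(a|s)-\pi'(a|s)|$) makes the inequality $\sum_a|\pi(a|s)-\pi'(a|s)|\le\|\pi-\pi'\|$ valid, but it would be cleaner to state it.
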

\begin{lem}
\label{lem:kuba6}
(Continuity of policy advantage, \cite[Lemma 6]{kuba2022trust}). Let $\pi$ and $\hat{\pi}$ be policies. The policy advantage $\mathbb{A}_\pi(\hat{\pi})$ is continuous in $\pi$.
\end{lem}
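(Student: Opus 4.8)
The plan is to expand the policy advantage into its explicit finite-sum form and then invoke the continuity results already established for its two constituent factors, namely the discounted visitation frequency and the advantage function.

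First I would write the policy advantage, using the expansion in (\ref{eq:policyadvantage}), as
\begin{alignat}{1}
\mathbb{A}_{\pi}(\hat{\pi}) = \sum_{s}\rho_{\pi}(s)\sum_{a}\hat{\pi}(a\vert s)A_{\pi}(s,a).\nonumber
\end{alignat}
The key observation is that the summand depends on $\pi$ only through the two factors $\rho_{\pi}(s)$ and $A_{\pi}(s,a)$, while $\hat{\pi}(a\vert s)$ is a fixed quantity that does not vary with $\pi$.

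Next I would assemble the continuity of each $\pi$-dependent factor. By Lemma \ref{lem:kuba4}, the unnormalized discounted visitation frequency $\rho_{\pi}(s)$ is continuous in $\pi$ for every fixed state $s$. By Corollary \ref{cor:kuba}, the advantage function $A_{\pi}(s,a)$ is (Lipschitz-)continuous in $\pi$ for every fixed pair $(s,a)$. Hence, for each $(s,a)$, the product $\rho_{\pi}(s)A_{\pi}(s,a)$ is a product of two functions continuous in $\pi$ and is therefore itself continuous, and multiplying by the $\pi$-independent constant $\hat{\pi}(a\vert s)$ preserves continuity.

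Finally, because we restrict to the tabular case in which $\mathcal{S}$ and $\mathcal{A}$ are finite, $\mathbb{A}_{\pi}(\hat{\pi})$ is a \emph{finite} sum of such continuous terms, and a finite sum of continuous functions is continuous; this yields the claim. There is no serious obstacle in the argument, and the one point worth flagging is that both $\rho_{\pi}$ and $A_{\pi}$ carry the $\pi$-dependence simultaneously, so one must use that the product of two continuous functions is continuous rather than treating either factor as frozen. The finiteness of $\mathcal{S}$ and $\mathcal{A}$ then removes any concern about exchanging a limit in $\pi$ with the summation, so the proof reduces to a routine assembly of Lemma \ref{lem:kuba4} and Corollary \ref{cor:kuba}.
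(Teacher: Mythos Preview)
Your argument is correct. The paper does not actually supply its own proof of this lemma; it simply cites it from \cite[Lemma~6]{kuba2022trust}, so there is no in-paper proof to compare against. That said, your approach is exactly in the spirit of the paper's own proofs of the adjacent continuity results (e.g., Lemma~\ref{lem:continuityofb} and Lemma~\ref{lem:continuityofh}): expand the quantity into an explicit finite sum over $\mathcal{S}\times\mathcal{A}$ and then invoke the continuity of $\rho_{\pi}$ (Lemma~\ref{lem:kuba4}) and $A_{\pi}$ (Corollary~\ref{cor:kuba}) together with closure of continuity under products and finite sums.
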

\begin{lem}
\label{lem:continuityofb}
(Continuity of $\left\Vert \mathbf{B}_{\pi,\mu_{n},P_{n}} \right\Vert _{F}$). For every agent $k=1,...,N$, the level of heterogeneity is continuous in $\pi$.
\end{lem}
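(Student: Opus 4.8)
The plan is to show that $\mathbf{B}_{\pi,\mu_{n},P_{n}}$ is itself a continuous matrix-valued function of $\pi$, and then invoke the continuity of the Frobenius norm as a final composition step. Recalling the definition in (\ref{B}), the matrix $\mathbf{B}_{\pi,\mu_{n},P_{n}}$ is assembled from three elementary building blocks: the diagonal matrices $\mathbf{D}_{\pi,\mu_{k},P_{k}}$, the inverse $\mathbf{D}_{\pi,\mu_{n},P_{n}}^{-1}$, and the advantage matrices $\mathbf{A}_{\pi,P_{k}}$. So I would first establish the continuity of each block in $\pi$, and then argue that finite sums and products of continuous maps remain continuous.

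For the diagonal matrices, the $(s,s)$ entry of $\mathbf{D}_{\pi,\mu_{k},P_{k}}$ equals $\rho_{\pi,\mu_{k},P_{k}}(s)$, which is continuous in $\pi$ by Lemma \ref{lem:kuba4}; hence each $\mathbf{D}_{\pi,\mu_{k},P_{k}}$ depends continuously on $\pi$ entrywise. For the advantage matrices, the $(s,a)$ entry of $\mathbf{A}_{\pi,P_{k}}$ equals $A_{\pi,P_{k}}(s,a)$, which is Lipschitz-continuous, and thus continuous, in $\pi$ by Corollary \ref{cor:kuba}. So the only nontrivial block is the inverse $\mathbf{D}_{\pi,\mu_{n},P_{n}}^{-1}$.

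The key observation for the inverse, which I expect to be the main point requiring care, is that under the standing assumption that every state is reachable, each diagonal entry $\rho_{\pi,\mu_{n},P_{n}}(s)$ is strictly positive for every $\pi$. Since $\mathbf{D}_{\pi,\mu_{n},P_{n}}$ is diagonal, its inverse is also diagonal with entries $1/\rho_{\pi,\mu_{n},P_{n}}(s)$. The scalar map $x \mapsto 1/x$ is continuous on $(0,\infty)$, so composing it with the continuous, strictly positive map $\pi \mapsto \rho_{\pi,\mu_{n},P_{n}}(s)$ yields a continuous map; thus $\mathbf{D}_{\pi,\mu_{n},P_{n}}^{-1}$ is continuous in $\pi$. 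The positivity is precisely what guarantees the inverse is well defined and prevents blow-up, and it is supplied by the reachability assumption stated just before Definition \ref{def:impacthete}.

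Finally, since matrix multiplication and addition are continuous operations, and $\mathbf{B}_{\pi,\mu_{n},P_{n}}$ is a finite sum of finite products of the continuous blocks above, $\mathbf{B}_{\pi,\mu_{n},P_{n}}$ is continuous in $\pi$. The Frobenius norm $\left\Vert \cdot \right\Vert_{F}$ is a norm, hence a continuous function on the space of matrices, so the composition $\pi \mapsto \left\Vert \mathbf{B}_{\pi,\mu_{n},P_{n}} \right\Vert_{F}$ is continuous, establishing the claim for every agent $k=1,\dots,N$.
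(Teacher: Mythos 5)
Your proof is correct and takes essentially the same route as the paper's: both reduce the claim to the continuity of $\rho_{\pi,\mu_{k},P_{k}}(s)$ (Lemma \ref{lem:kuba4}) and of $A_{\pi,P_{k}}(s,a)$ (Corollary \ref{cor:kuba}), with the standing reachability assumption supplying the strict positivity of $\rho_{\pi,\mu_{n},P_{n}}(s)$ needed to control $\mathbf{D}_{\pi,\mu_{n},P_{n}}^{-1}$, followed by continuity of the norm. The only difference is presentational---the paper expands $\left\Vert \mathbf{B}_{\pi,\mu_{n},P_{n}}\right\Vert_{F}$ entrywise into an explicit scalar expression, whereas you argue at the matrix level via composition of continuous maps---and your explicit justification that the diagonal inverse is continuous (via $x\mapsto 1/x$ on $(0,\infty)$) spells out a step the paper leaves implicit.
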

\begin{proof}
To simplify the proof, we expand the matrix norm $\left\Vert \mathbf{B}_{\pi,\mu_{n},P_{n}} \right\Vert _{F}$ as follows
\begin{alignat}{1}
& \left\Vert \mathbf{B}_{\pi,\mu_{n},P_{n}} \right\Vert _{F}\nonumber\\
= & \left\Vert \sum_{k=1}^{N} q_{k} \mathbf{D}_{\pi,\mu_{n},P_{n}}^{-1} \mathbf{D}_{\pi,\mu_{k},P_{k}} \mathbf{A}_{\pi,P_{k}} - \mathbf{A}_{\pi,P_{n}} \right\Vert _{F}\\
= & \sum_{k=1}^{N} q_{k} \sum_{s}\sum_{a} \left[ \frac{\rho_{\pi,\mu_{k},P_{k}}(s)}{\rho_{\pi,\mu_{n},P_{n}}(s)} A_{\pi,P_{k}}(s,a) - A_{\pi,P_{n}}(s,a) \right]^{2}.
\end{alignat}
The continuity of $\left\Vert \mathbf{B}_{\pi,\mu_{n},P_{n}} \right\Vert _{F}$ follows from the assumption that every state is reachable and the continuity of $\rho_{\pi}(s)$ and $A_{\pi}(s,a)$.
\end{proof}
\begin{lem}
\label{lem:continuityofh}
(Continuity of local objective). For every agent $k=1,...,N$, the local objective $h_{k}(\pi^{\prime};\pi)$ in (\ref{eq:prelocalobj}) is continuous in $\pi$.
\end{lem}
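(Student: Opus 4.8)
The plan is to exploit the fact that continuity is preserved under finite sums, finite products, the absolute-value map, and the maximum over a finite index set, so that the continuity of $h_k(\pi';\pi)$ in $\pi$ reduces to the continuity of a handful of elementary building blocks, all of which have already been established in the preceding lemmas and corollary. Concretely, I would first split $h_k(\pi';\pi)$ into its three summands as written in (\ref{eq:prelocalobj}): the policy advantage $\mathbb{A}_{\pi,\mu_k,P_k}(\pi')$, the $(\alpha+\beta)$-weighted expected TV term, and the $\delta$-weighted maximum TV term, and then argue the continuity of each summand separately in $\pi$.

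For the first summand, continuity in $\pi$ is immediate from Lemma \ref{lem:kuba6}. For the coefficients, I would show that $\alpha$, $\beta$ and $\delta$ are each continuous in $\pi$. The term $\alpha=2\left\Vert \mathbf{B}_{\pi,\mu_k,P_k}\right\Vert_{F}$ is continuous by Lemma \ref{lem:continuityofb}. For $\beta$ and $\delta$, the key object is $\epsilon_k=\max_{s,a}\left|A_{\pi,P_k}(s,a)\right|$; since $A_{\pi,P_k}(s,a)$ is (Lipschitz-)continuous in $\pi$ for each fixed $(s,a)$ by Corollary \ref{cor:kuba}, the absolute value is continuous, and the maximum over the finite set $\mathcal{S}\times\mathcal{A}$ of continuous functions is continuous, so each $\epsilon_k$ is continuous in $\pi$ and hence so is $\beta$. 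For $\delta$ I would additionally invoke the continuity of $\rho_{\pi,\mu_k,P_k}$ in $\pi$ (Lemma \ref{lem:kuba4}): the TV distance $D_{TV}(\rho_{\pi,\mu_k,P_k}\Vert\rho_{\pi,\mu_n,P_n})=\frac{1}{2}\sum_s\left|\rho_{\pi,\mu_k,P_k}(s)-\rho_{\pi,\mu_n,P_n}(s)\right|$ is a finite sum of absolute values of continuous functions, hence continuous, so $\delta$ is continuous as a finite sum of products of continuous functions.

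For the two penalty summands themselves, I would note that for each fixed state $s$ the map $\pi\mapsto D_{TV}(\pi(\cdot\vert s)\Vert\pi'(\cdot\vert s))=\frac{1}{2}\sum_a\left|\pi(a\vert s)-\pi'(a\vert s)\right|$ is continuous in $\pi$, since $\pi\mapsto\pi(a\vert s)$ is just coordinate evaluation. The expected TV term $\sum_s\rho_{\pi,\mu_k,P_k}(s)D_{TV}(\pi(\cdot\vert s)\Vert\pi'(\cdot\vert s))$ is then a finite sum of products of continuous functions (using Lemma \ref{lem:kuba4} for the weights), and the maximum TV term $D_{TV}^{\max}(\pi,\pi')=\max_s D_{TV}(\pi(\cdot\vert s)\Vert\pi'(\cdot\vert s))$ is a maximum over the finite state set of continuous functions, hence continuous. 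Multiplying by the continuous coefficients $(\alpha+\beta)$ and $\delta$ preserves continuity, and summing the three continuous summands yields the continuity of $h_k(\pi';\pi)$ in $\pi$.

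The only genuinely delicate points — and therefore where I would be most careful — are the two maximum operations, $\epsilon_k$ and $D_{TV}^{\max}$, since a maximum of infinitely many continuous functions need not be continuous; the argument rests essentially on the tabular assumption that $\mathcal{S}$ and $\mathcal{A}$ are finite, which turns each maximum into a pointwise maximum over finitely many continuous functions. I would also make sure the inverse $\mathbf{D}_{\pi,\mu_k,P_k}^{-1}$ hidden inside $\mathbf{B}$ causes no trouble; this is precisely what the reachability assumption $\rho_{\pi,\mu_k,P_k}(s)>0$ together with Lemma \ref{lem:continuityofb} already guarantees, so no additional work is needed on that front.
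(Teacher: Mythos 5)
Your proof is correct and takes essentially the same route as the paper's: the same decomposition of $h_{k}(\pi^{\prime};\pi)$ into the policy advantage plus the two TV penalty terms, with continuity of the advantage from Lemma \ref{lem:kuba6}, continuity of $\alpha$, $\beta$ and $\delta$ from Corollary \ref{cor:kuba} and Lemma \ref{lem:continuityofb}, and continuity of the penalty terms from the continuity of $D_{TV}$ and the maximum operator. You merely make explicit details the paper leaves implicit, namely that the finiteness of $\mathcal{S}\times\mathcal{A}$ underlies the continuity of $\epsilon_{k}$ and $D_{TV}^{\max}$, and that Lemma \ref{lem:kuba4} handles the $\rho_{\pi}$-dependence of the weights and of $\delta$.
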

\begin{proof}
By Lemma \ref{lem:kuba6}, $A_{\hat{\pi},\mu_{k},P_{k}}(\pi)$ is continuous. By Corollary \ref{cor:kuba} and Lemma \ref{lem:continuityofb}, $\alpha$, $\beta$ and $\delta$ are continuous. By the continuity of $D_{TV}$ and $\max$ operator, the last two terms of (\ref{eq:prelocalobj}) are continuous. As a result, $h_{k}(\pi^{\prime};\pi)$ is continuous in $\pi$.
\end{proof}
\begin{defn}
\label{def:stationarity}
(FedKL-Stationarity) A policy $\pi^{\prime}$ is FedKL-stationary
if, for every agent $k=1,...,N$,
\begin{alignat}{1}
\pi^{\prime} & =\arg\max_{\pi}\left[\mathbb{A}_{\pi^{\prime},\mu_{k},P_{k}}(\pi)\right.\nonumber\\
& \quad-\left.\left(\alpha+\beta\right)\mathbb{E}_{s\thicksim\rho_{\pi^{\prime},\mu_{k},P_{k}}}\left[D_{TV}(\pi^{\prime}(\cdot \vert s),\pi(\cdot \vert s))\right]\right.\nonumber\\
& \quad\left.-\delta D_{TV}^{\max}(\pi^{\prime},\pi)\right]\\
& = \arg\max_{\pi}\left[h_{k}(\pi;\pi^{\prime})\right].
\end{alignat}
\end{defn}

Next, we prove Theorem \ref{thm:2}. To this end, we start from the convergence of (\ref{eq:prelocalobj}) and then show the limit points of the sequence generated by Algorithm \ref{alg:PAvg} are FedKL-stationary.
\begin{proof}
By Theorem \ref{thm:monotonicity}, the sequence
$\left(\eta\left(\pi^{t}\right)\right)_{t=0}^{\infty}$ is non-decreasing and bounded above by $\frac{R_{\max}}{1-\gamma}$, where $R_{\max}$ is the maximum of reward. As a result, we know the sequence converges. Let $\hat{\eta}$ denote the limit point. Given the sequence
of policies $\left(\pi^{t}\right)_{t=0}^{\infty}$ is bounded, we know it has at least one convergent subsequence, by the Bolzano-Weierstrass Theorem.
Denote $\hat{\pi}$ as any limit point of the sequence, and let $\left(\pi^{t_{j}}\right)_{t_{j}=0}^{\infty}$
be a subsequence converging to $\hat{\pi}$. By the continuity of $\eta$
in $\pi$ (Corollary \ref{cor:kuba}), we have
\begin{alignat}{1}
\eta\left(\hat{\pi}\right) & =\eta\left(\lim_{j\to\infty}\pi^{t_{j}}\right)=\lim_{j\to\infty}\eta\left(\pi^{t_{j}}\right)=\hat{\eta}.
\end{alignat}

We will now establish the FedKL-stationarity of any limit point $\hat{\pi}$.
By Theorem \ref{thm:monotonicity}, we have
\begin{alignat}{1}
0 & =\lim_{t\to\infty}\left[\eta\left(\pi^{t+1}\right)-\eta\left(\pi^{t}\right)\right]\\
 & \ge\lim_{t\to\infty}\mathbb{E}_{k}\left[\mathbb{A}_{\pi^{t},\mu_{k},P_{k}}(\pi_{k}^{t+1})\right.\nonumber\\
 & \quad\left.-\left(\alpha+\beta\right)\mathbb{E}_{s\thicksim\rho_{\pi^{t},\mu_{k},P_{k}}}\left[D_{TV}(\pi^{t}\left(\cdot\vert s\right),\pi_{k}^{t+1}\left(\cdot\vert s\right))\right]\right.\nonumber\\
 & \quad\left.-\delta D_{TV}^{\max}(\pi^{t},\pi_{k}^{t+1})\right].
\end{alignat}
Consider an arbitrary limit point $\hat{\pi}$ and a subsequence $\left(\pi^{t_{j}}\right)_{j=0}^{\infty}$ that
converges to $\hat{\pi}$. Then, we can obtain
\begin{alignat}{1}
0 & \ge\lim_{j\to\infty}\mathbb{E}_{k}\left[\mathbb{A}_{\pi^{t_{j}},\mu_{k},P_{k}}(\pi_{k}^{t_{j}+1})\right.\nonumber\\
& \quad\left.-\left(\alpha+\beta\right)\mathbb{E}_{s\thicksim\rho_{\pi^{t_{j}},\mu_{k},P_{k}}}\left[D_{TV}(\pi^{t_{j}}\left(\cdot\vert s\right),\pi_{k}^{t_{j}+1}\left(\cdot\vert s\right))\right]\right.\nonumber\\
 & \quad\left.-\delta D_{TV}^{\max}(\pi^{t_{j}},\pi_{k}^{t_{j}+1})\right].
\label{eq:0gelimit}
\end{alignat}
As every agent is optimizing its local objective, the expectation is with respect to non-negative random variables. Thus, for arbitrary $k=1,...,N$, the RHS of (\ref{eq:0gelimit}) is bounded from below by
\begin{alignat}{1}
& q_{k} \lim_{j\to\infty} \max_{\pi} \left[\mathbb{A}_{\pi^{t_{j}},\mu_{k},P_{k}}(\pi)\right.\nonumber\\
& \left.-\left(\alpha+\beta\right)\mathbb{E}_{s\thicksim\rho_{\pi^{t_{j}},\mu_{k},P_{k}}}\left[D_{TV}(\pi^{t_{j}}\left(\cdot\vert s\right),\pi\left(\cdot\vert s\right))\right]\right]\nonumber\\
& \left.-\delta D_{TV}^{\max}(\pi^{t_{j}},\pi)\right] \ge 0.
\label{eq:lim84}
\end{alignat}
By Lemma \ref{lem:continuityofh} and the Squeeze theorem, it follows from (\ref{eq:0gelimit}) and (\ref{eq:lim84}) that
\begin{alignat}{1}
\lim_{j\to\infty} \max_{\pi}\left[h_{k}(\pi;\pi^{t_{j}})\right]=\max_{\pi}\left[h_{k}(\pi;\hat{\pi})\right]=0.
\end{alignat}
This proves that, for any limit point of the policy sequence induced
by Algorithm \ref{alg:PAvg}, $\max_{\pi}\left[h_{k}(\pi;\hat{\pi})\right]=0$ for every agent $k=1,...,N$,
which is equivalent to Definition \ref{def:stationarity}.

\end{proof}
}


\bibliography{references}
\bibliographystyle{IEEEtran}


 





\end{document}